\theoremstyle{plain}
\newtheorem{theorem}{Theorem}[section]
\newtheorem{lemma}[theorem]{Lemma}
\theoremstyle{definition}
\theoremstyle{remark}
\newtheorem{example}[theorem]{Example}
\newcommand{\jon}[1]{{\bf \color{red} jon: #1}}
\definecolor{lightblue}{rgb}{0.9, 0.92, 1.0}
\DeclarePairedDelimiterX{\inp}[2]{\langle}{\rangle}{#1, #2}
\newcommand*\bigcdot{\mathpalette\bigcdot@{.5}}
\newcommand*\bigcdot@[2]{\mathbin{\vcenter{\hbox{\scalebox{#2}{$\m@th#1\bullet$}}}}}
\newcommand{\muspace}{\mspace{1mu}}
\DeclareRobustCommand{\scond}{\mathchoice{\muspace\vert\muspace}{\vert}{\vert}{\vert}}
\DeclareRobustCommand{\discint}{\mathchoice{\mspace{-1.5mu}:\mspace{-1.5mu}}{\mspace{-1.5mu}:\mspace{-1.5mu}}{:}{:}}
\newcommand{\suchthat}{\mathchoice{\colon}{\colon}{:\mspace{1mu}}{:}}
\newcommand{\Dc}{\mathcal{D}}
\newcommand{\Ic}{\mathcal{I}}
\newcommand{\Lc}{\mathcal{L}}
\newcommand{\Nc}{\mathcal{N}}
\newcommand{\Xc}{\mathcal{X}}
\newcommand{\Yc}{\mathcal{Y}}
\newcommand{\Xv}{{\bf X}}
\newcommand{\Yv}{{\bf Y}}
\newcommand{\av}{{\bf a}}
\newcommand{\bv}{{\bf b}}
\newcommand{\fv}{{\bf f}}
\newcommand{\xv}{{\bf x}}
\newcommand{\yv}{{\bf y}}
\newcommand{\uv}{{\bf u}}
\newcommand{\wb}{{\mathbf w}}
\newcommand{\Xh}{{\hat{X}}}
\def\a{\alpha}
\def\b{\beta}
\def\d{\delta}
\def\th{\theta}
\DeclareMathOperator\E{\mathsf{E}}
\newcommand\eg{e.g.,\xspace}
\newcommand\ie{i.e.,\xspace}
\def\textiid{i.i.d.\@\xspace}
\newcommand\iid{\ifmmode\text{ i.i.d. } \else \textiid \fi}
\newcommand{\Real}{\mathbb{R}}
\newcommand{\ones}{\mathds{1}}
\def\mathllap{\mathpalette\mathllapinternal}
\def\mathllapinternal#1#2{%
  \llap{$\mathsurround=0pt#1{#2}$}}
\def\clap#1{\hbox to 0pt{\hss#1\hss}}
\def\mathclap{\mathpalette\mathclapinternal}
\def\mathclapinternal#1#2{%
  \clap{$\mathsurround=0pt#1{#2}$}}
\let\oldstackrel\stackrel
\renewcommand{\stackrel}[2]{\oldstackrel{\mathclap{#1}}{#2}}
\DeclarePairedDelimiterX{\infdivx}[2]{(}{)}{%
  #1\;\delimsize\|\;#2%
}
\renewcommand{\hbar}{h\mathllap{\overline{\vphantom{h}\hphantom{\rule{4.6pt}{0pt}}}\mspace{0.77mu}}}
\newcommand{\urltilde}{\kern -.06em\lower -.06em\hbox{~}\kern .02em}
\DeclareMathOperator*{\argmax}{arg\,max}
\DeclarePairedDelimiterX{\norm}[1]{\lVert}{\rVert}{#1}
\DeclarePairedDelimiterX{\abs}[1]{\lvert}{\rvert}{#1}
\newcommand*\diff{\mathop{}\!\mathrm{d}}
\let\oldpartial\partial
\renewcommand*{\partial}{\mathop{}\!\oldpartial}
\newcommand{\defeq}{\mathrel{\mathop{:}}=}
\newcommand{\revision}[1]{\textcolor{blue}{#1}}
\colorlet{tablerowcolor}{lightblue}
\newcommand{\coloredmidrule}{\arrayrulecolor{white}\specialrule{\aboverulesep}{0pt}{0pt}%
            \arrayrulecolor{black}\specialrule{\lightrulewidth}{0pt}{0pt}%
            \arrayrulecolor{tablerowcolor}\specialrule{\belowrulesep}{0pt}{0pt}%
            \arrayrulecolor{black}}
\newcommand{\coloredmidrulecw}{\arrayrulecolor{tablerowcolor}\specialrule{\aboverulesep}{0pt}{0pt}%
            \arrayrulecolor{black}\specialrule{\heavyrulewidth}{0pt}{0pt}%
            \arrayrulecolor{white}\specialrule{\belowrulesep}{0pt}{0pt}%
            \arrayrulecolor{black}}
\newcommand{\coloredbottomrule}{\arrayrulecolor{tablerowcolor}\specialrule{\aboverulesep}{0pt}{0pt}%
            \arrayrulecolor{black}\specialrule{\heavyrulewidth}{0pt}{\belowbottomsep}}%
\renewcommand{\E}{\mathbb{E}}
\LetLtxMacro{\oldeqref}{\eqref}
\RenewDocumentCommand\eqref{D<>{Eq.}om}{%
\IfNoValueTF{#2}
{#1~\oldeqref{#3}}
{(#2 #1~\textup{\ref{#3}})}%
}
\newcommand\StartAppendixEntries{}
  \renewcommand\StartAppendixEntries{\value{tocdepth}=-10000\relax}%
  \edef\maintocdepth{\the\value{tocdepth}}%
  \renewcommand\StartAppendixEntries{\value{tocdepth}=\maintocdepth\relax}%
\title{Are Uncertainty Quantification Capabilities of Evidential Deep Learning a Mirage?}
\author{
    \textbf{Maohao Shen}\textsuperscript{1}\thanks{The two authors contributed equally to this work.}, 
    \textbf{J.~Jon~Ryu}\textsuperscript{1}$^*$, 
    \textbf{Soumya Ghosh}\textsuperscript{2}\thanks{This author is currently affiliated with Merck and can be reached at \href{mailto:Soumya.Ghosh@merck.com}{Soumya.Ghosh@merck.com}.},\\ 
    \textbf{Yuheng~Bu}\textsuperscript{3}, 
    \textbf{Prasanna Sattigeri}\textsuperscript{2}, 
    \textbf{Subhro Das}\textsuperscript{2}, 
    \textbf{Gregory W. Wornell}\textsuperscript{1}\vspace{1em} \\
    \textsuperscript{1}Department of EECS, MIT, Cambridge, MA 02139 \\
    \textsuperscript{2}MIT-IBM Watson AI Lab, IBM Research, Cambridge, MA 02142 \\
    \textsuperscript{3}Department of ECE, University of Florida, Gainesville, FL 32611 \vspace{1em}\\
    \texttt{\{\href{mailto:maohao@mit.edu}{maohao},\href{mailto:jongha@mit.edu}{jongha},\href{mailto:gww@mit.edu}{gww}\}@mit.edu},\\
    \texttt{\{\href{mailto:ghoshoso@us.ibm.com}{ghoshoso},\href{mailto:prasanna@us.ibm.com}{prasanna}\}@us.ibm.com,
    \href{mailto:subhro.das@ibm.com}{subhro.das@ibm.com}},\\
    \texttt{\href{mailto:buyuheng@ufl.edu}{buyuheng@ufl.edu}} %
}
\begin{document}
\maketitle

\begin{abstract}
This paper questions the effectiveness of a modern predictive uncertainty quantification approach, called \emph{evidential deep learning} (EDL), in which a single neural network model is trained to learn a meta distribution over the predictive distribution by minimizing a specific objective function. Despite their perceived strong empirical performance on downstream tasks, a line of recent studies by Bengs et al. identify limitations of the existing methods to conclude their learned epistemic uncertainties are unreliable, e.g., in that they are non-vanishing even with infinite data. Building on and sharpening such analysis, we 1) provide a sharper understanding of the asymptotic behavior of a wide class of EDL methods by unifying various objective functions; 2) reveal that the EDL methods can be better interpreted as an out-of-distribution detection algorithm based on energy-based-models; and  3) conduct extensive ablation studies to better assess their empirical effectiveness with real-world datasets. 
Through all these analyses, we conclude that even when EDL methods are empirically effective on downstream tasks, this occurs despite their poor uncertainty quantification capabilities. Our investigation suggests that incorporating model uncertainty can help EDL methods faithfully quantify uncertainties and further improve performance on representative downstream tasks, albeit at the cost of additional computational complexity.\footnote[1]{The code to replicate the experiments is available on \url{https://github.com/maohaos2/EDL-Mirage}.}
\end{abstract}

\newcommand{\bluet}[1]{\textcolor{blue}{#1}}
\renewcommand{\bluet}[1]{#1}
\newcommand{\greent}{\textcolor{Green}}

\newcommand{\Dir}{\mathsf{Dir}}
\newcommand{\pout}{p_{\textsf{ood}}}

\renewcommand{\av}{\boldsymbol{\alpha}}

\newcommand{\piv}{\boldsymbol{\pi}}
\newcommand{\Piv}{\boldsymbol{\Pi}}

\renewcommand{\xv}{\boldsymbol{\rm x}}
\renewcommand{\Xv}{\boldsymbol{\rm X}}
\renewcommand{\yv}{\boldsymbol{\rm y}}
\renewcommand{\Yv}{\boldsymbol{\rm Y}}
\renewcommand{\Xh}{\boldsymbol{\rm \hat{X}}}
\newcommand{\vol}{\mathsf{vol}}
\newcommand{\green}[1]{\textcolor{ForestGreen}{#1}}
\newcommand{\red}[1]{\textcolor{red}{#1}}
\newcommand{\blue}[1]{\textcolor{blue}{#1}}
\newcommand{\brown}[1]{\textcolor{brown}{#1}}
\newcommand{\orange}[1]{\textcolor{orange}{#1}}
\newcommand{\purple}[1]{\textcolor{purple}{#1}}

\newcommand{\pbar}{\phi}
\newcommand{\pnoise}{\textcolor{red}{p_{\mathsf{n}}}}
\newcommand{\pdata}{\green{p_{\mathsf{d}}}}
\newcommand{\xdata}{\green{x^N}}
\newcommand{\pext}{\textcolor{blue}{\phi_{\th}}}

\newcommand{\etab}{\boldsymbol{\eta}}
\newcommand{\lambdab}{\boldsymbol{\lambda}}
\newcommand{\Poi}{\mathsf{Poi}}
\newcommand{\GammaDist}{\mathsf{G}}

\renewcommand{\av}{\boldsymbol{\alpha}}
\renewcommand{\bv}{\boldsymbol{\beta}}
\newcommand{\ev}{{\bf e}}
\newcommand{\wv}{\mathbf{w}}
\newcommand{\const}{\mathsf{(const.)}}

\newcommand{\catent}{H}
\newcommand{\diffent}{h}
\newcommand{\mi}{I}

\newcommand{\soumya}[1]{\textcolor{blue}{Soumya: #1}}

\section{Introduction}
Accurate estimation of uncertainty in the prediction becomes more crucial to enhance the reliability of a predictive model, especially for high-stake applications such as medical diagnosis~\citep{Edmon--Tanmoy--Dimitri2019, Benjamin--Iztok--Sander2011}. Among several approaches proposed, a class of uncertainty estimation methods under the category of \emph{evidential deep learning} (EDL) has recently gained attention~\citep{Ulmer--Hardmeier--Frellsen2023}, due to their claimed advantages over other methods. EDL methods typically learn a single neural network that maps input data to the parameters of a meta distribution, which is a distribution over the predictive distribution.
The EDL methods generally claim the following advantages. 
(1) \emph{Computational efficiency}: they bypass the expensive sampling costs associated with Bayesian or ensemble-based methods by training a single neural network and estimating uncertainty with a single forward pass. 
(2) \emph{Promising empirical performance}: they achieve superior results on downstream uncertainty quantification (UQ) tasks, particularly in detecting out-of-distribution (OOD) data. 
(3) \emph{Ability to quantify different uncertainties}: EDL methods can quantify \emph{distributional uncertainty} versus \emph{aleatoric uncertainty}, by representing them as the spread and mean of an estimated meta distribution over the prediction, respectively. 

Despite the above benefits, 
a line of recent works has reported theoretical limitations and pitfalls of uncertainties learned by EDL methods, including the issue of non-vanishing distributional uncertainty~\citep{Bengs--Hullermeier--Waegeman2022}, a possibility of non-existence of proper scoring rules for meta distributions~\citep{Bengs--Hullermeier--Waegeman2023}, and a gap between learned uncertainty and an ideal meta distribution~\citep{Jurgens--Meinert--Bengs--Hllermeier--Waegeman2024}. While these works call for the attention of the UQ community for the raised issues, a comprehensive theoretical understanding of the learned uncertainties from this type of UQ model is still lacking, as the existing analyses focus on a restricted subset of objective functions in the literature. Moreover, they did not explain the empirical success of the EDL methods at downstream tasks, such as OOD detection, despite such issues.

In this paper, we provide a simpler and sharper theoretical characterization of what is being learned by representative EDL methods, and re-examine the empirical success of the EDL methods based on the analysis. 
More concretely, our contributions are threefold:

\begin{enumerate}[leftmargin=*]
\item \textbf{Theoretical Analysis}: In Sec.~\ref{sec:theory}, we provide a unifying perspective on several representative EDL methods proposed in the literature, establishing an \emph{exact} characterization of the \emph{optimal} meta distribution defined by existing methods. This reveals that existing methods enforce the meta distribution to fit a sample-size-independent target distribution (Theorem~\ref{thm:rev_kl}). 
This analysis covers a wider class of objective functions and modalities, and is sharper than the prior analyses~\citep{Bengs--Hullermeier--Waegeman2022,Jurgens--Meinert--Bengs--Hllermeier--Waegeman2024}.

\item \textbf{Empirical Investigation}: In Sec.~\ref{sec:finding}, we further provide empirical evidence to point out the fundamental limitations of the learned uncertainties by EDL methods, and present several findings showing that existing EDL methods are essentially OOD detectors and hence exhibit their pitfalls. %

\item \textbf{Insights and Solutions}: In Sec.~\ref{sec:solution}, we explain why model uncertainty seems inevitable for faithful UQ and how we can improve the EDL method accordingly.
We propose a new model uncertainty based on the idea of bootstrap, and demonstrate that an EDL model can well distill its behavior and achieve superior UQ downstream task performance compared to the existing methods.
\end{enumerate}

\section{Related Work}
We summarize the literature that is closely aligned with the scope of this paper. We refer the reader to Sec.~\ref{subsec:uq_literature} in Appendix for an overview of classical UQ literature and a recent survey paper~\citep{Ulmer--Hardmeier--Frellsen2023} for a comprehensive review of EDL methods.

\textbf{EDL Methods.}
EDL methods, mainly applied in classification settings, utilize a single neural network to model Dirichlet distributions over label distributions, which can be divided into three categories. 
(1) \emph{OOD-data-dependent methods}: Earlier works such as Forward Prior Network~\citep{Malinin--Gales2018}, and Reverse Prior Network~\citep{Malinin--Gales2019, Nandy--Hsu--Lee2020}, proposed to train a model to output sharp Dirichlet distribution for in-distribution (ID) data and flat Dirichlet distribution for OOD data. (2) \emph{OOD-data-free methods}: Subsequently, several methods without OOD data were proposed with various training objectives, including the MSE loss with reverse Kullback--Leibler (KL) regularizer~\citep{Sensoy--Kaplan--Kandemir2018, Deng--Chen--Yu--Liu--Heng2023}, the ``VI'' loss~\citep{Chen--Shen--Jin--Wang2018, Joo--Chung--Seo2020, Shen--Bu--Sattigeri--Ghosh--Das--Wornell2023}, and the ``UCE'' loss~\citep{Charpentier--Zugner--Gunnemann2020}. 
(3) \emph{Distillation based methods}: are motivated by training a single model to mimic the behavior of classical UQ approaches, including END2~\citep{Malinin--Mlodozeniec--Gales2020} that emulates ensemble method, and S2D~\citep{Fathullah--Gales2022} that emulates (Gaussian) random dropout method. EDL methods have also been explored for regression problems~\citep{Amini--Schwarting--Soleimany--Rus2020, Malinin--Chervontsev--Provilkov--Gales2020, Charpentier--Borchert--Zugner--Geisler--Gunnemann2022}. 

\textbf{Critiques of EDL Methods.}
Recently, several works have raised concerns about the quality of uncertainty learned by EDL methods. \citet{Bengs--Hullermeier--Waegeman2022} pointed out the learned distributional uncertainty does not vanish even in the asymptotic limit of infinite training samples. 
\citet{Bengs--Hullermeier--Waegeman2023} provided further theoretical arguments for why a proper scoring rule for learning the meta distribution might not exist. 
More recently, \citet{Jurgens--Meinert--Bengs--Hllermeier--Waegeman2024} argues that the learned uncertainty by EDL methods is inconsistent with a reference distribution.
In a similar spirit to these critiques, we offer a sharper analysis to characterize the exact behavior of EDL methods. 
Our analysis can subsume, generalize, and simplify the existing analyses. See Appendix~\ref{app:sec:related_pitfalls} for a more in-depth review of these works.
\newcommand{\etav}{\boldsymbol{\eta}}
\renewcommand{\av}{\boldsymbol{\alpha}}
\renewcommand{\bv}{\boldsymbol{\beta}}

\section{Problem Setting and Preliminaries}
\label{sec:problem}
In the predictive modeling, we aim to learn the label distribution $p(y|x)$ over $\Yc$ using data $\Dc=\{(x_i,y_i)\}_{i=1}^N$ drawn from an underlying data distribution $p(x)p(y|x)$ over $\Xc\times\Yc$.
Here, $\Yc$ is a label set; $\Yc=[C]\defeq\{1,\ldots,C\}$ for classification for some $C\ge 1$ and $\Yc=\Real$ for regression. $\Delta^{C-1}$ denotes the probability simplex over $[C]$.
In addition to accurately learning the conditional distribution $\eta_y(x)\defeq p(y|x)$, we wish to quantify ``uncertainty'' of the learned prediction.
We focus on classification in this paper, but some of our analyses also apply to certain existing EDL methods for regression~\citep{Malinin--Chervontsev--Provilkov--Gales2020,Charpentier--Borchert--Zugner--Geisler--Gunnemann2022}. For the sake of clarity, we defer all related discussion on regression and beyond to Appendix~\ref{app:sec:general}.

\vspace{.5em}\noindent\textbf{Bayesian and Ensemble-based Approaches.} Different UQ methods define predictive uncertainties based on different sources of randomness.
The \emph{Bayesian} approach is arguably the most widely studied UQ approach, in which a parametric classifier $p(y|x,\psi)$ is trained via the Bayesian principle, and inference is performed with the predictive posterior distribution
$p(y|x,\Dc)\defeq\int p(y|x,\psi) p(\psi|\Dc)\diff\psi$,
where $p(\psi|\Dc)$ is the model posterior distribution, induced by the prior $p(\psi)$ and likelihood $p(\Dc|\psi)$.
Ensemble-based methods %
assume a different distribution $p(\psi|\Dc)$ to generate random models given data, \eg training neural networks with different random seeds.
As alluded to earlier, both Bayesian and ensemble approaches are computationally demanding due to the intractability of $p(\psi|\Dc)$ and the need for computing the integration over $\psi$.
Moreover, $p(y|x,\Dc)$ can capture \emph{aleatoric uncertainty} (or \emph{data uncertainty}), and the amount of spread over $p(y|x,\psi)$ induced by the model uncertainty of $p(\psi|\Dc)$ is regarded as \emph{epistemic uncertainty} (or \emph{knowledge uncertainty}) for its prediction. 

\vspace{.5em}\noindent\textbf{EDL Approach.}
\newcommand{\thv}{{\boldsymbol{\th}}}
The EDL approach further decomposes the predictive posterior distribution as $p(y|x,\psi)=\int p(\piv|x,\psi) p(y|\piv) \diff \piv$, where $p(\piv|x,\psi)$ is a \emph{meta distribution} (or called \emph{second-order distribution}~\citep{Bengs--Hullermeier--Waegeman2022,Bengs--Hullermeier--Waegeman2023}) over the prediction at $x$, and $p(y|\piv)$ is a fixed likelihood model.
For classification, $\piv\in\Delta^{C-1}$ is a probability vector over $C$ classes, $p(y|\piv)=\pi_y$ is the categorical likelihood model, and $p(\piv|x,\psi)$ is a distribution  over the simplex $\Delta^{C-1}$. Oftentimes, $p(\piv|x,\psi)$ is chosen as a conjugate prior of the likelihood model $p(y|\piv)$, like the Dirichlet distribution for classification~\citep{Malinin--Gales2018,Malinin--Gales2019,Joo--Chung--Seo2020,Charpentier--Zugner--Gunnemann2020,Charpentier--Borchert--Zugner--Geisler--Gunnemann2022}, but sometimes not~\citep{Sensoy--Kaplan--Kandemir2018,Amini--Schwarting--Soleimany--Rus2020,Deng--Chen--Yu--Liu--Heng2023}.
Given the full decomposition,
$p(y|x,\Dc) =\iint p(y|\piv)p(\piv|x,\psi)p(\psi|\Dc) \diff\psi\diff\piv,$
the uncertainty captured in $p(y|\piv)$ is called aleatoric uncertainty, in $p(\piv|x,\psi)$
is called \emph{distributional uncertainty}, a kind of \emph{epistemic uncertainty}.
However, EDL methods often assume the best single model $\psi^\star$ learned with data $\Dc$, without any randomness in $p(\psi|\Dc)$, or formally setting it to be $\d(\psi-\psi^\star)$~\citep{Malinin--Gales2018,Ulmer--Hardmeier--Frellsen2023}. It then aims to train the meta distribution $p(\piv|x,\psi)$ under certain learning criteria so that it encodes less uncertainty for ID points $x$ and more for OOD points.
While this simplification allows its computational efficiency over the classical methods, as we will argue later, no randomness assumed in the model $p(\psi|\Dc)$ lets all the methods in this framework learn spurious distributional uncertainty.
Since a single model $\psi$ is assumed, we use a frequentist notation $p_\psi(\piv|x)$ instead of $p(\piv|x,\psi)$ in that context.

\newcommand{\ood}{\mathsf{ood}}
\newcommand{\id}{\mathsf{id}}

\section{New Taxonomy for EDL Methods}\label{sec:theory}
In this section, we propose a new taxonomy to understand different EDL methods (for classification) in a systematic way.
Ignoring the choice of base architectures,
we identify that the key distinguishing features in EDL methods are (1) the parametric form of the model and (2) the learning criteria.
A wide class of EDL methods can be classified with the taxonomy as in Table~\ref{tab:unification}.
Several theoretical implications and empirical consequences of the new taxonomy will be investigated in the next section.
\begin{table*}[h]
    \centering
    \caption{\textbf{New taxonomy of representative EDL methods.} $\Lc(\psi)$ in \eqref{eq:unified_objective} subsumes these as special cases.
    }
    \begin{small}
    \begin{tabular}{lccccc}
    \toprule
    Method (name of loss) & likelihood & $D(\cdot,\cdot)$ & prior $\av_0$ & $\gamma_{\ood}$ & $\av_\psi(x)$ parameterization \\
    \midrule
        FPriorNet (F-KL loss)~\citep{Malinin--Gales2018} & categorical & fwd. KL & $=\ones_C$ & $>0$ & direct\\
        RPriorNet (R-KL loss)~\citep{Malinin--Gales2019} & categorical & rev. KL & $=\ones_C$ & $>0$ & direct\\
        EDL (MSE loss)~\citep{Sensoy--Kaplan--Kandemir2018} & Gaussian & rev. KL & $=\ones_C$ & $=0$ & direct\\
        Belief Matching (VI loss)~\citep{Chen--Shen--Jin--Wang2018,Joo--Chung--Seo2020} & categorical & rev. KL & $\in\Real_{>0}^C$ & $=0$ & direct \\
        PostNet (UCE loss)~\citep{Charpentier--Zugner--Gunnemann2020} & categorical & rev. KL & $=\ones_C$ & $=0$ & density w/ single flow\\
        NatPN (UCE loss)~\citep{Charpentier--Borchert--Zugner--Geisler--Gunnemann2022} & categorical & rev. KL & $=\ones_C$ & $=0$ & density w/ multiple flows\\
    \bottomrule
    \end{tabular}
    \end{small}
    \label{tab:unification}
\end{table*}

\paragraph{Criterion 1. Parametric Form of Meta Distribution.}
\label{sec:param}
\newcommand{\Nv}{\mathbf{N}}
For classification, one distinguishing feature is the parametric form of $\av_\psi(x)$ in Dirichlet distribution $p_\psi(\piv|x)=\Dir(\piv;\av_\psi(x))$.
Earlier works~\citep{Chen--Shen--Jin--Wang2018,Malinin--Gales2018,Malinin--Gales2019,Joo--Chung--Seo2020, Sensoy--Kaplan--Kandemir2018} typically parameterize $\av_\psi(x)$ by a direct output of a neural network, e.g., exponentiated logits; we call this \emph{direct parameterization}. 
Later, \citet{Charpentier--Zugner--Gunnemann2020} brought up a potential issue with the direct parameterization that $\av_\psi(x)$ can take arbitrary values on the unseen (\ie OOD) data points. 
They proposed a more sophisticated parameterization of the form to explicitly resemble the posterior distribution update of the Dirichlet distribution
$\av_\psi(x) \gets \av_0 + \Nv_\psi(x)$,
where, for $y\in[C]$, $(\Nv_\psi(x))_y\defeq N\hat{p}(y) p_{\psi_2}(f_{\psi_1}(x)|y)$, with $\hat{p}(y)\defeq N_y/N$, $N_y$ denotes the number of data points with label $y$, $N\defeq \sum_{y\in[C]} N_y$, $x\mapsto f_{\psi_1}(x)$ a feature extractor, and $p_{\psi_2}(z|y)$ a tractable density model such as normalizing flows~\citep{Kobyzev--Prince--Brubaker2020} for each $y\in[C]$. We call this \emph{density parameterization}. In Sec.~\ref{sec:finding}, we carefully examine the effectiveness of density parameterization.

\paragraph{Criterion 2. Objective Function.}
\label{sec:unifying}
The desired behavior of EDL model is to output sharp $p_\psi(\piv|x)$ if it is confident, and fall back to output \emph{prior} distribution $p(\piv)$ if it is uncertain at an unseen data $x$. To achieve this, various objectives have been introduced with different jargon and motivations, e.g.:
\begin{enumerate}[leftmargin=*]
\item Prior Networks (PriorNet)~\citep{Malinin--Gales2018,Malinin--Gales2019} aimed to explicitly encourage $p_\psi(\piv|x)$ to be diffused prior $p(\piv)$ for OOD data, and a more concentrated Dirichlet distribution $\Dir(\piv;\av_0+\nu\ev_y)$ for ID data, with $\nu\gg 1$, $\av_0 = \ones_C$ and $\ev_y$ as the one-hot true label, by minimizing
\begin{align}
\label{eq:prior_loss_main_text}
&\E_{p(x,y)}[D( {\Dir(\piv;\av_0+\nu\ev_y)}, {p_\psi(\piv|x)})
+ \gamma_{\mathsf{ood}} \E_{\pout(x)}[D( {\Dir(\piv;\av_0)}, {p_\psi(\piv|x)})]
\end{align}
for $D(p,q)=D(p~\|~q)$ (forward KL)  in \citep{Malinin--Gales2018}, and $D(p,q)=D(q~\|~p)$ (reverse KL) later in~\citep{Malinin--Gales2019}. It is known that PriorNet with forward KL requires an additional auxiliary loss $\log\frac{1}{{p_\psi(y|x)}}$ to ensure high accuracy, and the reverse-KL version can outperform without such term.

\item The ``Evidential Deep Learning'' paper~\citep{Sensoy--Kaplan--Kandemir2018} proposed the MSE loss with a reverse KL regularizer:
\begin{align}
\label{eq:edl_loss_main_text}
\ell_{\textsf{MSE}}(\psi;x,y)
\defeq \E_{p_\psi(\piv|x)}[\|\piv-\ev_y\|^2]
+ \lambda D(p_\psi(\piv|x)~\|~\Dir(\piv;\av_0)).
\end{align}

\item Belief Matching~\citep{Chen--Shen--Jin--Wang2018,Joo--Chung--Seo2020} proposed VI loss justified by variational inference framework:
\begin{align}
\label{eq:elbo_loss_main_text}
\ell_{\textsf{VI}}(\psi;x,y)
\defeq \E_{p_\psi(\piv|x)}\Bigl[\log\frac{1}{\pi_y}\Bigr]
+ \lambda D(p_\psi(\piv|x)~\|~\Dir(\piv;\av_0)).
\end{align}

\item Posterior Networks (PostNet~\citep{Charpentier--Zugner--Gunnemann2020} and NatPN~\citep{Charpentier--Borchert--Zugner--Geisler--Gunnemann2022}) proposed the uncertainty-aware cross entropy (UCE) loss, motivating it from a general framework for updating belief distributions~\citep{Bissiri--Holmes--Walker2016}:
\begin{align}
\label{eq:uce_loss_main_text}
\ell_{\textsf{UCE}}(\psi;x,y)
\defeq \E_{p_\psi(\piv|x)}\Bigl[\log\frac{1}{\pi_y}\Bigr] 
- \lambda h({p_\psi(\piv|x)}).
\end{align}

\end{enumerate}
The hyper-parameter $\lambda>0$ in objectives
\ref{eq:edl_loss_main_text},
\ref{eq:elbo_loss_main_text}, and \ref{eq:uce_loss_main_text} balances the first likelihood term which forces $p_\psi(\piv|x)$ to learn the label distribution and the second regularizer term promotes $p_\psi(\piv|x)$ to be close to the prior $p(\piv)$. 
Below, we reveal that the seemingly different objectives~\ref{eq:prior_loss_main_text}, \ref{eq:elbo_loss_main_text}, \ref{eq:uce_loss_main_text} are exactly equivalent, while objective~\eqref{eq:edl_loss_main_text} (different \emph{likelihood}) can also be unified in a single framework. 

\vspace{.5em}\noindent\textbf{A Unifying View.}
\label{subsec:unify}
We now provide a unifying view of the fairly wide class of representative objective functions. First, for convenience of analysis, we define the \emph{tempered likelihood}: for $\nu>0$, define
\begin{align}
\label{eq:tempered_conditional}
p^{(\nu)}(\piv|y) 
\defeq \frac{p^{(\nu)}(\piv,y) }{\int p^{(\nu)}(\piv, y)\diff\piv}, 
\quad\text{ where }p^{(\nu)}(\piv,y) \defeq \frac{p(\piv) p^\nu(y|\piv)}{\int p(\piv)\sum_yp^{\nu}(y|\piv)\diff\piv}. 
\end{align}

Objectives~\ref{eq:prior_loss_main_text}, \ref{eq:elbo_loss_main_text}, \ref{eq:uce_loss_main_text} assume the likelihood model is categorical, \ie $p(y|\piv)=\pi_y$, 
by the conjugacy of the Dirichlet distribution for the multinomial distribution, it is easy to check that $p^{(\nu)}(\piv|y)=\Dir(\piv;\av_0+\nu\ev_y)$, where $\ev_y\in\Real^C$ is the one-hot vector activated at $y\in[C]$. Objective~\ref{eq:edl_loss_main_text} used Gaussian likelihood model $p(y|\piv)=\Nc(\ev_y;\piv,\sigma^2I_C)$, which does not admit a closed form expression for $p^{(\nu)}(\piv|y)$. The prior distribution is usually defined as $p(\piv)=\Dir(\piv;\av_0)$ with $\av_0=\ones_C$ (all-one vector). Other choices of prior were also proposed to promote some other desired property~\citep{Nandy--Hsu--Lee2020}. 

We now introduce a \emph{unified objective function} 
\begin{align}
\Lc(\psi) &\defeq\E_{p(x,y)}[D( {p^{(\nu)}(\piv|y)}, {p_\psi(\piv|x)})]
+ \gamma_{\ood} \E_{\pout(x)}[D( {p(\piv)}, {p_\psi(\piv|x)})]
\label{eq:unified_objective}
\end{align}
for some divergence function $D(\cdot,\cdot)$, a tempering parameter $\nu>0$, and an OOD regularization parameter $\gamma_{\ood}\ge 0$ with a distribution $p_{\ood}$ for OOD samples.
The following theorem summarizes how this unified objective subsumes several existing proposals.
The proof is deferred to Appendix~\ref{app:sec:proofs}.

\begin{theorem}[Unifying EDL Objectives for Classification]
\label{thm:unifying}
Let $p(\piv)=\Dir(\piv;\ones_C)$. %
\begin{enumerate}[leftmargin=*]
\item Let $p(y|\piv)=\pi_y$ and let $\gamma_{\ood}>0$. 
With $D(p,q)=D(p~\|~q)$ (forward KL) and $D(p,q)=D(q~\|~p)$ (reverse KL), $\Lc(\psi)$ is equivalent to the objective (\eqref{eq:prior_loss_main_text}) of forward PriorNet (FPriorNet)~\citep{Malinin--Gales2018} and that of reverse PriorNet (RPriorNet)~\citep{Malinin--Gales2019}, respectively.

\item Let $p(y|\piv)=\Nc(\ev_y;\piv,\sigma^2I_C)$, and let $\gamma_{\ood}=0$, $\Lc(\psi)$ is equivalent to the objective $\Lc_{\textsf{MSE}}(\psi)\defeq \E_{p(x,y)}[\ell_{\textsf{MSE}}(\psi;x,y)]$ (\eqref{eq:edl_loss_main_text}). 

\item Let $p(y|\piv)=\pi_y$, $\nu=\lambda^{-1}$, and let $\gamma_{\ood}=0$. 
With $D(p,q)=D(q~\|~p)$ (reverse KL), $\Lc(\psi)$ is equivalent to the objective $\Lc_{\textsf{VI}}(\psi)\defeq \E_{p(x,y)}[\ell_{\textsf{VI}}(\psi;x,y)]$ (\eqref{eq:elbo_loss_main_text}).

\item Let $p(y|\piv)=\pi_y$, $\nu=\lambda^{-1}$, and let $\gamma_{\ood}=0$.
With $D(p,q)=D(q~\|~p)$ (reverse KL), $\Lc(\psi)$ is equivalent to the objective $\Lc_{\textsf{UCE}}(\psi)\defeq \E_{p(x,y)}[\ell_{\textsf{UCE}}(\psi;x,y)]$ (\eqref{eq:uce_loss_main_text}).
\end{enumerate}
\end{theorem}

We first remark that the reverse KL divergence captures most of the cases, except the forward KL PriorNet, which is known to be outperformed by PriorNet with reverse KL. Hence, it suffices to focus on understanding the reverse KL objective. Second, the VI loss in~\eqref{eq:elbo_loss_main_text} and UCE loss~\eqref{eq:uce_loss_main_text} turn out to be equivalent to the RPriorNet objective, where the tempering parameter $\nu$ and the regularization parameter $\lambda$ is realted to be reciprocal $\lambda=\nu^{-1}$; see Lemma~\ref{lem:vi_loss}. 
Overall, this theorem reveals that different motivations, such as VI or Bayesian updating mechanism of belief distributions, were not very significant, and those objectives turn out to be equivalent to the simple divergence matching in~\eqref{eq:unified_objective}.
Given this, distinguishing features that better classify different EDL methods are the choices of (1) likelihood model, (2) prior, (3) use of OOD data, and (4) model parametric form, as shown in Table~\ref{tab:unification}.
The recent survey~\citep{Ulmer--Hardmeier--Frellsen2023} also offers a unified view of different objective functions with these features. However, the dichotomy between ``PriorNet-type methods''~\citep{Malinin--Gales2018,Malinin--Gales2019} and ``PostNet-type methods''~\citep{Chen--Shen--Jin--Wang2018,Joo--Chung--Seo2020,Charpentier--Zugner--Gunnemann2020,Charpentier--Borchert--Zugner--Geisler--Gunnemann2022,Shen--Bu--Sattigeri--Ghosh--Das--Wornell2023} in \citep{Ulmer--Hardmeier--Frellsen2023} may not effectively contrast different EDL methods compared to our taxonomy.

We acknowledge that there exist other objective functions that might not be covered by this unified view. 
We defer the discussion on another line of representative work, including the distillation-based methods~\citep{Malinin--Mlodozeniec--Gales2020,Fathullah--Gales2022} to Sec.~\ref{sec:solution}. 
A notable exception that is not subsumed by this unification is FisherEDL~\citep{Deng--Chen--Yu--Liu--Heng2023}, which was recently proposed to use a variant of the MSE loss~\citep{Sensoy--Kaplan--Kandemir2018} by taking into account the Fisher information of $p_\psi(\piv|x)$; see Appendix~\ref{app:sec:general} for a discussion.
Finally, similar reasoning can be naturally extended to unify different EDL objectives for other tasks such as regression and count data analysis, including \citep{Malinin--Chervontsev--Provilkov--Gales2020,Charpentier--Borchert--Zugner--Geisler--Gunnemann2022}. We defer a detailed discussion to Appendix~\ref{app:sec:general}.

\section{Rethinking the Success of EDL Methods}
\label{sec:finding}
In this section, we aim to reveal the secrets of EDL methods' empirical success through a combination of theoretical and empirical findings. 
As alluded to earlier, our empirical investigation focuses on the reverse-KL type EDL methods, \ie RPriorNet~\citep{Malinin--Gales2019}, Belief Matching (BM)~\citep{Joo--Chung--Seo2020}, PostNet~\citep{Charpentier--Zugner--Gunnemann2020}, and NatPN~\citep{Charpentier--Borchert--Zugner--Geisler--Gunnemann2022}, which are representative and widely used in the literature.

%
%
%
%
%
%
%

%

%
\begin{comment}
In Sec.~\ref{subsec:inconsistency}, we highlights the limitation of existing EDL methods in epistemic uncertainty quantification. In Sec.~\ref{subsec:lambda}, we explore the essential role of the hyper-parameter $\lambda$ in the UQ capabilities of EDL methods. In Sec.~\ref{subsec:objective}, we conduct an ablation study of EDL methods' objective. In Sec.~\ref{subsec:pitfalls}, we uncover some pitfalls of existing EDL methods. Finally, Sec.~\ref{subsec:takeaway} summarizes our findings and discuses our takeaway. 
\end{comment}

\subsection{What Is the ``Optimal'' Meta Distribution Characterized By The EDL Objectives?}
\label{sec:sharp_character}

Based on the unification in \eqref{eq:unified_objective}, we can provide a sharp mathematical characterization of the optimally learned meta distribution for a wide class of EDL objectives.
A direct and important consequence of the divergence minimization view is that we can now characterize the ``optimal'' behavior of the learned meta distribution of a wide class of EDL objectives, provided that the global minimizer is achieved in the nonparametric and population limit.
The following theorem is proved in Appendix~\ref{app:sec:thm5.1}.
\begin{theorem}\label{thm:rev_kl}
For any prior $p(\piv)$ and likelihood $p(y|\piv)$, we have
\begin{align}
&\min_\psi 
\E_{p(x,y)}[
D({p_\psi(\piv|x)}~\|~ {p_\nu(\piv|y)})] 
\equiv \min_\psi 
\E_{p(x)}[
D({p_\psi(\piv|x)} ~\|~ p^\star(\piv|x))],\nonumber\\
&\qquad\qquad\text{where }
p^\star(\piv|x) 
\defeq \frac{p(\piv)\exp(\nu\E_{p(y|x)}[\log p(y|\piv)])}{\int p(\piv)\exp(\nu\E_{p(y|x)}[\log p(y|\piv)]) \diff\piv}.
\label{eq:optimal_psi_main_text}
\end{align}
\end{theorem}
In words, Theorem~\ref{thm:rev_kl} states that when the model meta distribution $p_{\psi}(\piv|x)$ is trained with the reverse-KL objective, it is forced to fit a \emph{fixed} target meta distribution $p^\star(\piv|x)$.

\begin{example}[Categorical likelihood] \label{example:classfication}
If we consider ${p(\piv)=\Dir(\piv;\av_0)}$ and $p(y|\piv)=\pi_y$ (categorical likelihood), we have $p^\star(\piv|x) = \Dir(\piv; {\av_0+\nu\etab(x)})$, where $\etab(x)\defeq\E_{p(y|x)}[\ev_y]=[p(1|x),\ldots,p(C|x)]$ denotes the true label distribution, Theorem~\ref{thm:rev_kl} implies that
\begin{align}
&\min_\psi 
\E_{p(x,y)}[
D({p_\psi(\piv|x)}\| {p_\nu(\piv|y)})] 
\equiv \min_\psi 
\E_{p(x)}[
D(\Dir(\piv;\av_\psi(x)) \| \Dir(\piv; {\av_0+\nu\etab(x)}))].
\label{eq:optimal_psi_classfication}
\end{align}
\end{example} 
In particular, with the most common Dirichlet parameterization $p_\psi(\piv|x)=\Dir(\piv;\av_\psi(x))$, 
this shows that $\av_\psi(x)$ is forced to match the scaled-and-shifted version $\av_0+\nu\etav(x)$ of the conditional label distribution $\etab(x)$ as the \emph{fixed} target, under the categorical likelihood model. %
A similar argument still applies to the Gaussian likelihood model of \citep{Sensoy--Kaplan--Kandemir2018}, but the fixed target distribution $p^\star(\piv|x)$ in \eqref{eq:optimal_psi_main_text} does not admit a closed-form expression unlike the categorical likelihood. 

An immediate consequence of the Theorem~\ref{thm:rev_kl} is that neither epistemic uncertainty nor aleatoric uncertainty quantified by EDL methods is consistent with their dictionary definition. 
\begin{figure*}[t]
    \centering
    \begin{tabular}{rl}
    \subfloat[Epistemic Uncertainty: CIFAR10]{%
     \includegraphics[width=0.7\textwidth]{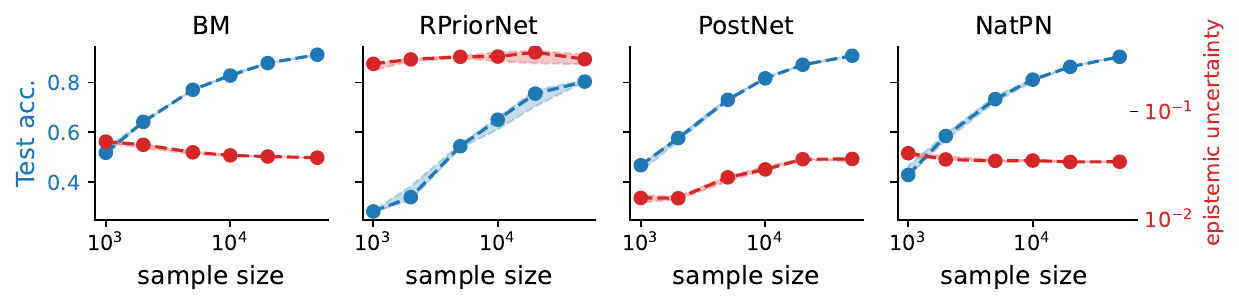}
     }
     \\
     \subfloat[Aleatoric Uncertainty: CIFAR10]{%
     \vspace{-0.em}
     \includegraphics[width=0.7\textwidth]{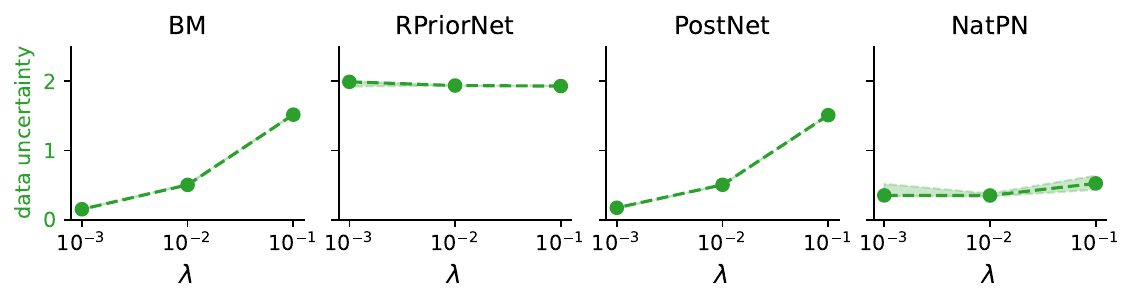}
     }
     \end{tabular}
\caption{\textbf{Behavior of Uncertainties Learned by EDL methods on Real Data.} (a) EDL methods learn spurious epistemic uncertainty, wherein uncertainty does not vanish with an increasing number of observed samples, contrary to the fundamental definition of epistemic uncertainty. (b) Instead of a constant, EDL methods learn model-dependent aleatoric uncertainty that depends on hyper-parameter $\lambda$, contrary to the fundamental definition of aleatoric uncertainty. Similar behavior holds for 2D Gaussian data (see Figure~\ref{fig:inconsistency_gauss} in Appendix~\ref{app:subsec:Gaussian}).}
\label{fig:inconsistency_real}
\vspace{-1em}
\end{figure*}

\subsubsection*{Implication 1: EDL Methods Learn Spurious Epistemic Uncertainty}
\label{subsec:inconsistency}
First, epistemic uncertainty for ID data, by definition, should monotonically decrease and eventually vanish as the number of observations increases. 
However, Theorem~\ref{thm:rev_kl} implies that, even with infinite data, the learned ``distributional uncertainty'' would remain constant for ID data. 
We also empirically confirm such behavior; see Fig.~\ref{fig:inconsistency_real}(a). 
Specifically, we sample data of varying sizes to train the EDL models and evaluate their test accuracy and averaged epistemic uncertainty (mutual information) on a held-out test set. As observed in Fig.~\ref{fig:inconsistency_real}(a), epistemic uncertainties quantified by EDL methods are almost constant with respect to the sample size and never vanish to 0, regardless of the increasing test accuracy. This suggests that practitioners cannot rely on the learned distributional uncertainty to determine if the model is lacking knowledge.
We remark that \citet{Bengs--Hullermeier--Waegeman2022} identified a similar issue in the EDL methods specifically for the UCE loss of PostNet~\citep{Charpentier--Zugner--Gunnemann2020,Charpentier--Borchert--Zugner--Geisler--Gunnemann2022}, which is the reverse-KL objective for $\av=\ones_C$ in our view, is not \emph{appropriate} in a similar spirit.\footnote{We remark that the proofs and corresponding arguments in \citep{Bengs--Hullermeier--Waegeman2022} are erroneous, where the errors stem from equating the differential entropy of a Dirac delta function to 0, instead of $-\infty$; see Appendix~\ref{app:sec:related_pitfalls} for more details.} 
Theorem~\ref{thm:rev_kl} can be understood as a more general and sharper mathematical characterization of the behavior of the reverse-KL objective.

\subsubsection*{Implication 2: EDL Methods Learn Spurious Aleatoric Uncertainty}
Second, EDL methods quantify aleatoric (data) uncertainty as ${\E_{p_{\psi}(\piv|x)}[H(p(y|\piv))]}$, where $H$ denotes the Shannon entropy~\citep{Cover--Thomas2006}. \eqref{eq:optimal_psi_classfication} reveals the optimal meta distribution as $p_{\psi^\star}(\piv|x)=\Dir(\piv; \av_0+\nu\etab(x))$, suggesting that the aleatoric uncertainty quantified by EDL methods would also depend on model or algorithm's hyper-parameter ($\lambda=\nu^{-1}$). This is inconsistent with the definition of aleatoric uncertainty, which should be a fixed constant capturing the irreducible uncertainty by underlying label distribution $p(y|x)$. As shown in Fig.~\ref{fig:inconsistency_real}(b), we empirically demonstrate that the aleatoric uncertainty quantified by EDL methods vary with $\lambda$. 
In conclusion, our analysis in this section suggests that EDL methods do not behave as a reasonable uncertainty quantifier since they cannot faithfully quantify either epistemic or aleatoric uncertainty.

\subsection{EDL Methods Are EBM-Based OOD Detector Rather Than Uncertainty Quantifier} %
\label{subsec:energy}
As we have shown so far, the EDL methods with Dirichlet prior and categorical model typically aim to fit the model meta distribution $\av_\psi(x)$ to a fixed target $\av_0+\nu\etav(x)$, which is approximately $\nu\etav(x)$ for $\nu$ sufficiently large. 
Since the induced model predictive distribution is $p_\psi(y|x)=\E_{p_\psi(\piv|x)}[p(y|\piv)]=\a_{\psi,y}(x)/(\ones_C^\intercal\av_\psi(x))$, it can be understood that the reverse-KL enforces $p_\psi(y|x)$ to fit $\nu\etav(x)/\nu(\ones_C^\intercal \etav(x))=\etav(x)=p(y|x)$ for accurate prediction, while letting the summation $\ones^\intercal\av_\psi(x)$ large (\ie $\ones^\intercal\av_\psi(x)\approx\nu$) for ID data, and small for OOD data, \eg $\ones_C^\intercal \av_0 = C$, if there exists an explicit OOD regularization.

In the OOD detection literature, there exists an energy-based-model (EBM) based algorithm aim to achieve the exactly same goal~\citep{Liu--Wang--Owens--Li2020}.
They consider a standard classifier with exponentiated logits $\bv_\phi(x)$, whose prediction is given as $p_\phi(y|x)\defeq\b_{\phi,y}(x)/\ones_C^\intercal\bv_\phi(x)$.
\citet{Liu--Wang--Owens--Li2020} related the denominator to a free energy $E_\phi(x)\defeq -\log \ones_C^\intercal\bv_\phi(x)$, and proposed to train model so that $p_\phi(y|x)$ accurately captures $p(y|x)$, while minimizing the energy $E_\phi(x)$ for ID $x$'s, and maximizing for OOD. They proposed to minimize 
\[
-\E_{p(x,y)}[\log {p_\psi(y|x)}] + \tau \bigl\{\E_{p(x)}[\max(0,E_\phi(x)-m_{\id})^2] + \E_{p_{\ood}(x)}[\max(0,m_{\ood}-E_\phi(x))^2]\bigr\},
\]
where $\tau>0$ and $m_{\ood}>m_{\id}>0$ 
are hyperparameters.
This reveals that this EBM-based OOD framework has an almost identical learning mechanism to the EDL methods; setting $E_\phi(x)= -\log \ones_C^\intercal\av_\phi(x)$, $m_{\id}=-\log\nu$, and $m_{\ood}=-\log C$ makes the correspondence explicit. 

This resemblance suggests that the EDL methods can be better understood as an EBM-based OOD detector with the additional layer of Dirichlet framework for computational convenience rather than a statistically meaningful mechanism that can faithfully distinguish epistemic uncertainty and aleatoric uncertainty. 
Below, we elaborate on two particular implications based on this connection.

\subsubsection*{Implication 3: EDL Methods Always Prefer Smaller Hyper-Parameter \texorpdfstring{$\lambda$}{lambda}} \label{subsec:lambda}
\begin{figure*}[ht]
    \centering
     \includegraphics[width=1.0\textwidth]
     {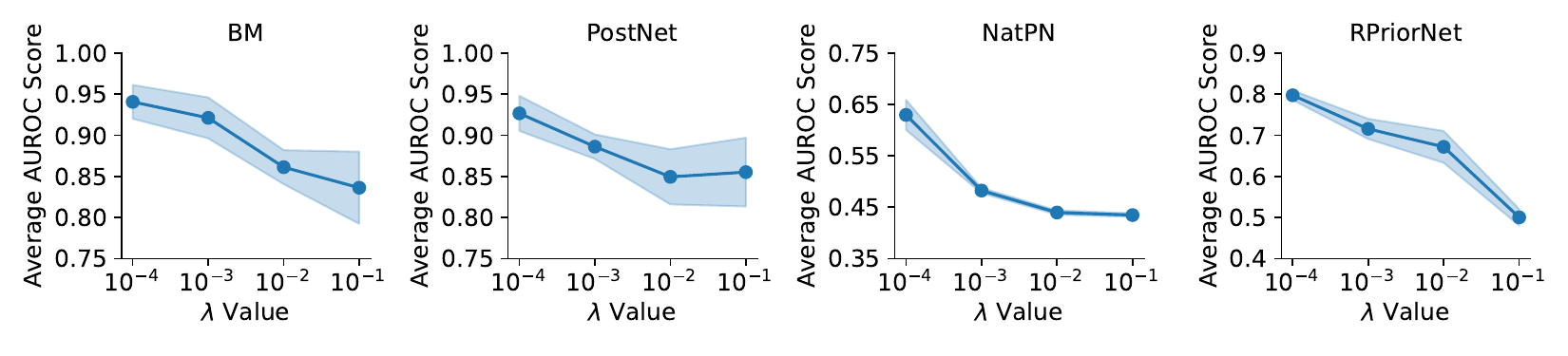}
\vspace{-1.5em}
\caption{\textbf{OOD Detection Performance v.s. Hyper-parameter $\lambda$ on CIFAR10.} The $x$-axis represents the increasing $\lambda$ value, and the y-axis represents the Average AUROC score of OOD detection tasks. EDL Methods' uncertainty quantification performance are sensitive to hyper-parameter $\lambda$, while generally benefit from small $\lambda$. }
\label{fig:cifar10_lambda}
\vspace{-0.em}
\end{figure*}
Just as the EBM-based OOD detection algorithm needs to manually define hyper-parameter $m_{\ood}$ and $m_{\id}$, EDL methods also significantly rely on hyper-parameter tuning; recall the target distribution $\Dir(\piv; \av_0+\nu\etab(x))$ with ${\nu=\lambda}^{-1}$ is determined by the regularization parameter $\lambda$, we aim to explore the dependency of EDL methods' OOD detection performance on the choice of $\lambda$. 
Specifically, we directly conduct such an analysis on real data using CIFAR10 as ID data, and use four OOD datasets; see Appendix~\ref{app:sec:setup} for details.
By varying $\lambda$ across $\texttt{\{1e-4,1e-3,1e-2,1e-1\}}$, we train an EDL model and evaluate its OOD detection performance on these OOD datasets. The performance in terms of average AUROC score with respect to the increasing value of $\lambda$ are shown in Fig.~\ref{fig:cifar10_lambda}. %
The result shows that using smaller values of $\lambda$ always improves the OOD detection performance, suggesting that the EDL model is essentially encouraged to fit its output $\av_\psi(x)$ to an extremely large target $\av_0+{\lambda}^{-1}\etab(x)$, so that $\ones_C^\intercal\av_\psi(x)\approx{\lambda}^{-1}\gg 1$ for ID data, and such behavior seems to benefit the downstream task performance.

\subsubsection*{Implication 4: Impact of Specific Objective on UQ Performance is Less Significant}
\label{subsec:objective}
Given the close resemblance to the OOD detection algorithm, a notable difference is the reverse-KL learning criterion used by the EDL methods. 
In Appendix~\ref{app:subsec:objective}, as another ablation study, we investigate if the reverse-KL objective induced by the Dirichlet framework has a significant practical impact, or other Dirichlet-framework-independent objectives which promote the same behavior suffice for the downstream task performance. 

\subsection{Are EDL Methods Robust for OOD Detection?}
\label{subsec:pitfalls}
Lastly, we investigate the robustness of the auxiliary techniques like density parameterization and OOD regularization of EDL methods for the downstream task performance. 
Note that in the empirical result with real data presented in Fig.~\ref{fig:ablation} in Appendix, neither OOD regularization (RPriorNet vs. BM) nor density parameterization (PostNet/NatPN vs. BM) demonstrates a clear advantage of deploying such techniques. Thus, we further investigate this counterintuitive phenomenon and discover that the performance of these EDL methods on real data is hindered by certain limitations of the auxiliary techniques they use. Namely, we find that OOD-data-dependent methods~\citep{Malinin--Gales2019} are sensitive to choice of model architectures, and methods~\citep{Charpentier--Zugner--Gunnemann2020, Charpentier--Borchert--Zugner--Geisler--Gunnemann2022} using density models may not perform well even for moderate dimensionality. 
We defer the detailed discussion to Appendix~\ref{app:subsec:pitfalls}.

\section{EDL Methods Will Benefit from Incorporating Model Uncertainty}
\label{sec:solution}
Through the series of analyses in the prior section, we attempted to provide a comprehensive understanding of EDL methods, revealing their key limitations. 
A reader then may ask an important question: \textbf{What is the fundamental issue in these EDL methods that cause all these problems?}
As we alluded to in Sec.~\ref{sec:problem}, 
we identify that the issue arises from that all of the EDL methods discussed in Sec.~\ref{sec:theory} and Sec.~\ref{sec:finding} assume \emph{no} model uncertainty $p(\psi|\Dc)$ in the decomposition of predictive posterior distribution $p(y|x,\psi)=\iint p(y|\piv) p(\piv|x,\psi) p(\psi|\Dc) \diff \psi \diff \piv$.
A proper definition of distributional uncertainty should be based on the induced posterior distribution  $p(\piv|x,\Dc) \defeq \int p(\piv|x,\psi) p(\psi|\Dc) \diff \psi$ over the prediction $\piv$ at $x$, given the dataset $\Dc$. 
Without the randomness in the model, however, \ie setting $p(\psi|\Dc)\gets \d(\psi-\psi^\star)$, the induced distribution $p(\piv|x,\Dc)$ becomes degenerate as $p(\piv|x,\psi^\star)$.
This simplification is often rationalized for the computational efficiency, but as we reveal, it renders the distributional uncertainty inherently ill-defined in this framework. The existing EDL methods thus have no choice but to train the ``UQ'' model $p_{\psi}(\piv|x)$ to fit to an artificial target $\Dir(\piv;\av_0+\nu\etav(x))$.
Consequently, the learned distributional uncertainty cannot possesses a statistical meaning, and can be better interpreted as free energy in the EBM-based OOD detector.

\vspace{.5em}\noindent\textbf{Model Uncertainty Can Induce A Proper Distributional Uncertainty.}
This strongly suggests that it is inevitable to assume a stochastic procedure $p(\psi|\Dc)$ to properly define the distributional uncertainty $p(\piv|x,\Dc)$.
We remark that, to expect the distributional uncertainty $p(\piv|x,\Dc)$ to exhibit a desirable behavior, \ie getting concentrated for ID data and remaining dispersed for OOD data as $|\Dc|\to\infty$, we implicitly assume that the stochastic algorithm $p(\psi|\Dc)$ would behave as follows: as $|\Dc|\to\infty$, $p(\psi|\Dc)$ will become supported on a subset of models $\psi$ that agree upon the prediction for ID data while disagreeing on OOD data.
Under this assumption, the induced distributional uncertainty will be consistent with the dictionary definition of epistemic uncertainty.

\vspace{.5em}\noindent\textbf{Revisiting Distillation-Based Methods.}
The downside of such an approach is that approximating $p(\piv |x,\Dc)$ can be computationally intractable due to the high-dimensional integration with respect to the stochastic algorithm $p(\psi|\Dc)$, which means that a practitioner needs to generate (or train) and save multiple models to approximately emulate the randomness.
In this context, the EDL framework, which aims to quantify uncertainty by a single neural network, can be used to \emph{distill} the properly defined distributional uncertainty.
Indeed, this is what has been explored by another line of EDL literature called \emph{distillation-based methods}~\citep{Malinin--Mlodozeniec--Gales2020,Fathullah--Gales2022}, which were originally proposed to emulate the behavior the ensemble methods. %
Our analyses strongly advocate that considering a stochastic algorithm $p(\psi|\Dc)$ and training a single meta distribution trying to fit the induced distributional uncertainty to expedite the inference time complexity is the best practice for the EDL framework to faithfully capture uncertainties. 
More precisely, we can fit an UQ model $p_{\th}(\piv |x)$ to $p(\piv |x,\Dc)$ through the forward-KL objective and Monte Carlo samples, i.e., by minimizing $\E_{p(x)}[
D(p(\piv |x,\Dc) ~\|~ p_\th(\piv|x))] \approx -\sum_{i=1}^{N}\sum_{j=1}^{M} \log p_\th(\etav_{\hat{\psi}_j}(\cdot|x_i)|x_i) + \text{(const.)}$ with respect to $\th$, where $x\mapsto\etav_{\hat{\psi}_j}(\cdot|x)$ is a classifier corresponding to a randomly generated model $\hat{\psi}_j\sim p(\psi|\Dc)$~\citep{Malinin--Mlodozeniec--Gales2020}.
\vspace{.5em}\noindent\textbf{Examples of Stochastic Algorithms $p(\psi|\Dc)$: Old and New.}
There are two popular proposals for $p(\psi|\Dc)$ in the literature:
(1) EnD$^2$~\citep{Malinin--Mlodozeniec--Gales2020} considers randomness in random initialization and stochastic optimization, which is called \emph{ensemble}~\citep{Lakshminarayanan--Pritzel--Blundell2017simple}, and (2) S2D~\citep{Fathullah--Gales2022} considers a \emph{random dropout}~\citep{Gal--Ghahramani2016} applied to a single network. 
We propose yet another 
algorithm based on the frequentist approach \emph{bootstrap}:
given the training dataset $\Dc$, the procedure randomly samples $M$ different subsets $\{\Dc_j\}_{j=1}^M$ of size $N$ without replacement, and train a model $\hat{\psi}_j$ based on $\Dc_j$ for each $j$.
Unlike the previous proposals of $p(\psi|\Dc)$, the bootstrap method aims to leverage the internal consistency among the ID data, beyond the randomness induced by optimization and architectures. 
We note that this is inspired by a concurrent work of \citet{Jurgens--Meinert--Bengs--Hllermeier--Waegeman2024}, which recently proposed an ideal meta distribution. The proposed bootstrapping can be understood as a practical method for approximating such behavior with finite samples.
In the next section, we demonstrate that the new \emph{Bootstrap Distillation} method performs almost best on both OOD detection and selective classification tasks.

\section{Comprehensive Empirical Evaluation}
\label{sec:results}
\begin{figure*}[t]
    \centering
     \includegraphics[width=1.0\textwidth]{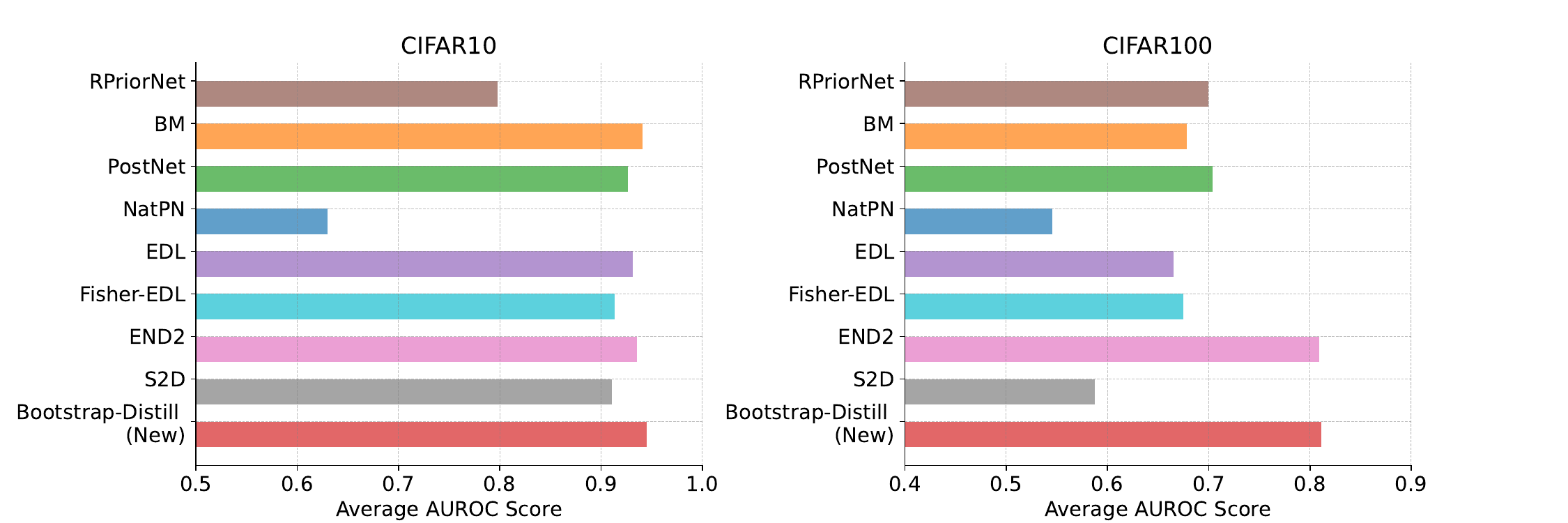}
\caption{\textbf{Comparison of Different EDL Methods on OOD Detection.} Distillation based methods, including new proposed Bootstrap-Distill method, demonstrate clear advantage over other classical EDL methods.
Similar behavior holds for selective classification task.}
\label{fig:baseline_ood}
\vspace{-1.5em}
\end{figure*}

\begin{figure*}[!t]
    \centering
     \includegraphics[width=1.0\textwidth]{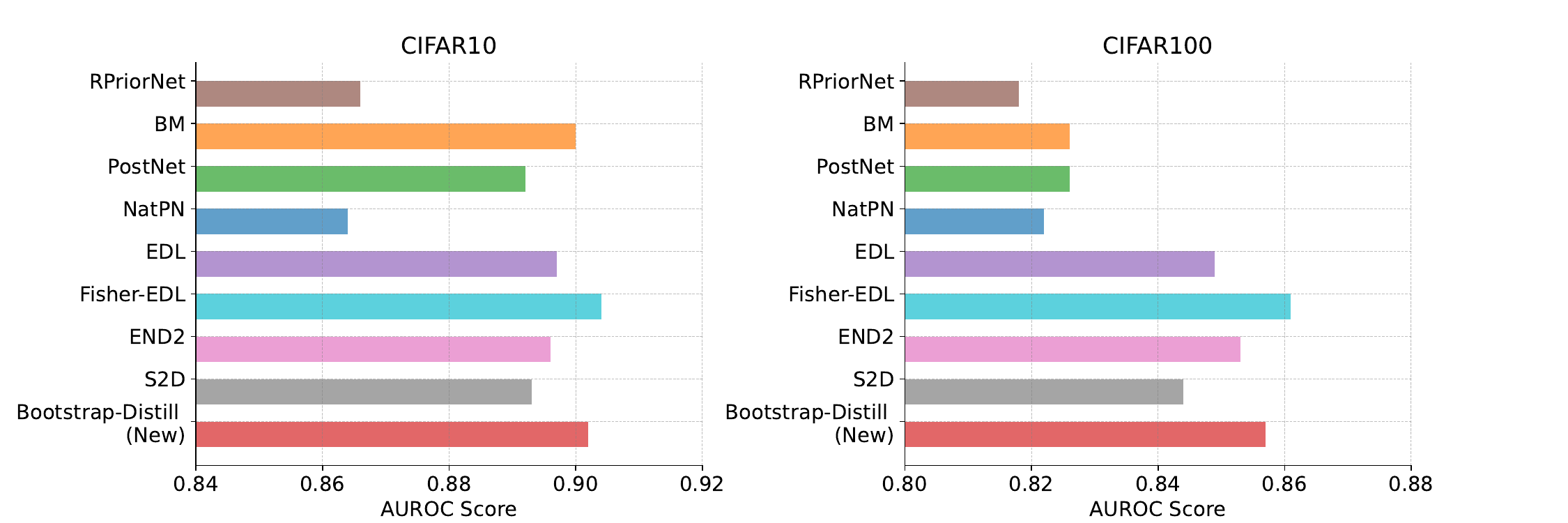}
\caption{\textbf{Comparison of Different EDL Methods on Selective Classification.} Distillation based methods, including new proposed Bootstrap-Distill method, demonstrate clear advantage over other classical EDL methods.}
\label{fig:baseline_selective}
\vspace{-1.5em}
\end{figure*}

In this section, we conduct a comprehensive evaluation of the existing EDL methods, including the new proposed Bootstrap Distillation method, on two UQ downstream tasks: (1) \emph{OOD data detection}: identify the OOD data based on the learned epistemic uncertainty; (2) \emph{Selective classification}: identify the wrongly predicted test samples based on total uncertainty, as wrong prediction can occur from either high epistemic or high data uncertainty, or both. 
These results corroborate several key findings and insights presented in this paper. The detailed experiment setup are provided in Appendix~\ref{app:sec:setup}, and additional experiment results are included in Appendix~\ref{app:sec:results}.

\begin{itemize}[leftmargin=*]
    \item \textbf{Baselines.}
We include a wide range of EDL methods as baselines: Classical methods: (1) RPriorNet~\citep{Malinin--Gales2019}, (2) Belief Matching (BM)~\citep{Joo--Chung--Seo2020}, (3) PostNet~\citep{Charpentier--Zugner--Gunnemann2020}, (4) NatPN~\citep{Charpentier--Borchert--Zugner--Geisler--Gunnemann2022}, (5) EDL~\citep{Sensoy--Kaplan--Kandemir2018}, and (6) Fisher-EDL~\citep{Deng--Chen--Yu--Liu--Heng2023}. Distillation-based method: (1) EnD$^2$~\citep{Malinin--Mlodozeniec--Gales2020} and (2) S2D~\citep{Fathullah--Gales2022}. All baseline results are reproduced using their official implementation, if available, and their recommended hyper-parameters.
\item \textbf{Benchmark.}
We consider two ID datasets: CIFAR10, and CIFAR100. For the OOD detection task, we select four OOD datasets for each ID dataset: we use SVHN, FMNIST, TinyImageNet, and corrupted ID data. 
\item \textbf{Evaluation Metric.}
We elaborate metrics for quantifying different types of uncertainties in Section \ref{subsec:metrics}. We evaluate the UQ downstream performance through the Area Under the ROC Curve (AUROC) and Area Under the Precision-Recall Curve (AUPR), where we treat ID (correctly classified) test samples as the negative class and outlier (misclassified) samples as the positive class.

\end{itemize}

\textbf{Results and Takeaway.}
The result is summarized in Fig.~\ref{fig:baseline_ood} and Fig.~\ref{fig:baseline_selective}. 
We present the average AUROC score across four OOD datasets for the OOD detection task, and the AUROC score for the selective classification task. 
More numerical results can be found in Table~\ref{table:OOD-auroc}, \ref{table:OOD-aupr}, and \ref{table:selective} in Appendix~\ref{app:sec:results}. 
This set of evaluations corroborates our key findings and insights in the previous sections. 

Firstly, the results show that varying the likelihood model does not significantly impact performance (BM vs. EDL and Fisher-EDL), supporting the discussion in Sec.~\ref{subsec:unify}. Secondly, classical EDL methods achieve comparable performance regardless of the auxiliary techniques, such as density parameterization (PostNet, NatPN), or leveraging OOD data (RPriorNet); this validates our finding in Sec.~\ref{subsec:pitfalls}. Thirdly, distillation-based methods, particularly EnD$^2$ and the new Bootstrap Distillation, demonstrate superior performance over other baselines, especially on CIFAR100, thereby validating the insights provided in Sec.~\ref{sec:solution}. Moreover, our empirical analysis confirms that the Bootstrap Distillation method can faithfully quantify epistemic uncertainty, as illustrated in Fig.~\ref{fig:bootstrap_epistemic} in Appendix~\ref{app:sec:results}, effectively addressing the limitations identified with other EDL methods in Sec.~\ref{subsec:inconsistency}. 
Finally, we note that the performance of Bootstrap Distillation comes at a higher computational cost due to training multiple bootstrap models; see Fig.~\ref{fig:bootstrap_samples} in Appendix~\ref{app:sec:results}.
\section{Concluding Remarks}
In this work, we revealed that the uncertainty learned by most of the existing EDL methods bear no statistical meaning. 
The key theoretical insight is based on the unification of representative EDL objectives in Sec.~\ref{sec:sharp_character}. 
Given this, we suggested that EDL methods can be better interpreted as energy-based OOD detection algorithms, which can explain the reported empirical successes of EDL methods from a different perspective. 
Additionally, we demonstrated that the performance of EDL methods is sensitive to the choice of auxiliary techniques, further raising questions about their robustness. 
Finally, we identified that the aforementioned issues with EDL arise from ignoring the model uncertainty for computational efficiency, and argued that the distillation-based methods could potentially remedy the issues, at the cost of additional complexity.

Overall, this work calls researchers' attention to carefully reexamine the capabilities and limitations of the EDL approach at large.
For practitioners, EDL methods can still be utilized as efficient algorithms for specific applications, such as OOD data detection.
However, when considering EDL methods to build reliable machine learning agents based on their UQ capabilities, practitioners should be aware of their limitations, which contrast with the common belief that EDL approaches can accurately learn and distinguish between epistemic and aleatoric uncertainty.

We conclude the paper with a few remarks on the theory of the Bootstrap-Distill method. 
As we empirically showed, it can resolve the common issues of the EDL methods with improved downstream task performance, and we thus believe that a careful theoretical analysis of its behavior would be a fruitful direction.
While it is relatively easy to argue that the epistemic uncertainty would vanish when the sample size grows to infinity with the bootstrap procedure, we believe that a more sophisticated asymptotic analysis for vanishing epistemic uncertainty could be carried out with overparameterized neural networks, adopting a similar setting in \citep{Huang--Lam--Zhang2023}. 
That is, if the trained model's prediction can be shown to be asymptotically normal in the limit of the sample size, one can argue that the uncertainty captured by bootstrap behaves as Gaussian of vanishing variance in the sample limit. 
This implies a naturally vanishing epistemic uncertainty. 
We leave this as a future work.

\newpage
\begin{ack}
The authors appreciate the constructive feedback from anonymous reviewers, which helped improve the manuscript.
MS and JJR thank Viktor Bengs for helpful discussions on the literature.
This work was supported, in part, by the MIT-IBM Watson AI Lab under Agreement No. W1771646.
\end{ack}

\bibliographystyle{IEEEtranN}
\bibliography{ref}

\newpage
\appendix
\begingroup
\hypersetup{linkcolor=Blue}
\addtocontents{toc}{\protect\StartAppendixEntries}
\listofatoc
\endgroup

\section{In-Depth Review of Recent Critiques on EDL Methods}
\label{app:sec:related_pitfalls}
In this section, we conduct a compressive review of the recent critiques of EDL methods~\citep{Bengs--Hullermeier--Waegeman2022, Bengs--Hullermeier--Waegeman2023, Jurgens--Meinert--Bengs--Hllermeier--Waegeman2024} that are closely relevant to our paper. 
We begin by acknowledging the contributions of \cite{Bengs--Hullermeier--Waegeman2022}, which pioneered the theoretical study of EDL methods. 
However, there are several limitations in these works to be noted: (1) Some of the main arguments presented in \cite{Bengs--Hullermeier--Waegeman2022} are invalid given errors in the corresponding proofs. (2) The analysis in these works are purely theoretical. They do not sufficiently explain the empirical behaviors of EDL methods in practical application. (3) The analyses do not include some representative EDL approaches in the literature, such as the distillation-based methods discussed in our paper. (4) While these works effectively point out various limitations of EDL methods, none of these works provide insights or concrete solutions to address these issues. 

\subsection{Review of \texorpdfstring{\citep{Bengs--Hullermeier--Waegeman2022}}{(Bengs et al., 2022)} ``Pitfalls of Epistemic Uncertainty Quantification
Through Loss Minimisation''}
In \citep{Bengs--Hullermeier--Waegeman2022}, the authors studied the property of a meta distribution characterized by minimizing a certain loss such as the UCE loss~\citep{Charpentier--Zugner--Gunnemann2020} for classification.
The authors proposed a definition of two desirable properties for a faithfully learned meta distribution should satisfy~\citep[Definition 1]{Bengs--Hullermeier--Waegeman2022}:
First, for some distributional uncertainty measure, the expected uncertainty captured in the meta distribution over observations should monotonically decrease as the data size increases. 
Second, as the data size goes to infinity, the meta distribution needs to converge to a degenerate Dirac delta distribution around a single distribution.

They provide three arguments based on the value of regularization parameter: (1) if $\lambda=0$, which means no regularizer in the UCE loss, then the meta distribution becomes a Dirac function thus 
the first is violated~\citep[Theorem~1]{Bengs--Hullermeier--Waegeman2022}; (2) if $\lambda>0$ in the UCE loss is too large, then the second desideratum is not satisfied~\citep[Theorem~2]{Bengs--Hullermeier--Waegeman2022}. (3) Also, they claimed that if $\lambda>0$ is too small, then the first is violated~\citep[Theorem~3]{Bengs--Hullermeier--Waegeman2022}.

We respectfully point out, however, that the proofs of both Theorem~2 and Theorem~3 are erroneous due to the steps that identify the differential entropy of a Dirac delta function as 0, instead of $-\infty$. After correction, we realize the statements of \citep[Theorem~3]{Bengs--Hullermeier--Waegeman2022} is no longer valid.
Instead, the arguments in \citep[Theorem~1]{Bengs--Hullermeier--Waegeman2022} and \citep[Theorem~2]{Bengs--Hullermeier--Waegeman2022} can be more sharply inferred from our Theorem~\ref{thm:rev_kl}. First, when $\lambda=0$, it is easy to show that the optimal meta distribution $p^{\star}(\piv|x)$ in ~\eqref{eq:optimal_psi_main_text} will be Dirac function. Second, Theorem~\ref{thm:rev_kl} implies a stronger result than the second claim: for \emph{any} value of $\lambda$, the meta distribution converges to a non-Dirac-delta distribution as the sample size grows, thus the second desideratum in~\citep[Definition 1]{Bengs--Hullermeier--Waegeman2022} is not satisfied.  We also note that we do not need an additional regularity assumption. 

\subsection{Review of \texorpdfstring{\citep{Bengs--Hullermeier--Waegeman2023}}{(Bengs et al., 2023)} ``On Second-Order Scoring Rules for Epistemic
Uncertainty Quantification''}

As a follow-up work, \citet{Bengs--Hullermeier--Waegeman2023} studied an existence of a proper scoring rule for
second-order distributions (\ie meta distributions).
They extended the standard definition of proper scoring rules~\citep{Gneiting--Raftery2007} to meta distributions, and provided theoretical arguments why there can be virtually no such proper scoring rule, even if we hypothetically assume the existence of the \emph{true} second-order distribution.

\newcommand{\pdatay}{\piv_{\mathsf{data}}}
At a facial value, this seems to contradict the fact that we use the forward KL objective in the distillation approach.
To resolve the inconsistency,  we consider the classification case, and we even omit the dependence on the covariate $x$ for simplicity.
\newcommand{\ptarget}{p_{\mathsf{target}}}
On the one hand, \citet{Bengs--Hullermeier--Waegeman2023} considered a particular form of objective in the form
\[
L_2(\psi; \ptarget) \defeq \E_{ \ptarget(\piv)} [\E_{y\sim \piv}[\ell(\psi;y)]],
\]
for some scoring rule $\ell(\psi;y)$.
On the other hand, 
in the distillation-based method, we use the forward KL objective:
\[
L_{\textsf{fKL}}(\psi; \ptarget) \defeq D( \ptarget(\piv) ~\|~ p_\psi(\piv)).
\]
Even in the standard EDL methods, we use the reverse-KL objective, though the target may not be meaningful:
\[
L_{\textsf{rKL}}(\psi; \ptarget) \defeq D( p_\psi(\piv) ~\|~ \ptarget(\piv)).
\]
As the side-by-side comparison reveals, in practice, most existing EDL methods are using either the forward KL objective (distillation based EDL methods) or the reverse-KL objective (classical EDL methods), but the objective function analyzed in \citep{Bengs--Hullermeier--Waegeman2023} assumes a particular form that involves the expectation $\E_{y\sim \piv}[\cdot]$, and this implies that the negative results are of rather limited applicability.
As our Sec.~\ref{sec:solution} suggests, if we can directly construct a reasonable target meta distribution in a way that either KL divergence is computable, then we can use such divergence matching to learn the meta distribution in a faithful manner.

\subsection{Review of \texorpdfstring{\citep{Jurgens--Meinert--Bengs--Hllermeier--Waegeman2024}}{(Jurgens et al., 2024)} ``Is Epistemic Uncertainty Faithfully Represented by Evidential Deep Learning Methods?''}
In a concurrent work, \citet{Jurgens--Meinert--Bengs--Hllermeier--Waegeman2024} questions the reliability of distributional uncertainty learned by two types of loss under different settings, including classification, regression and count data: the ``inner loss'' widely used by evidential regression framework~\citep{Amini--Schwarting--Soleimany--Rus2020, Ma--Han--Zhang--Fu--Zhou--Hu2021}, and the ``outer loss'' which is more commonly applied in classification setting and some regression methods~\citep{Malinin--Chervontsev--Provilkov--Gales2020, Charpentier--Borchert--Zugner--Geisler--Gunnemann2022} (equivalent to general UCE loss in Sec.~\ref{app:sec:general} in our view). \citet{Jurgens--Meinert--Bengs--Hllermeier--Waegeman2024} mainly aims to examine whether the leared distributional uncertainty can be consistent to a ``target'' epistemic uncertainty induced by a reference distribution. 

First, \citep[Definition 3.1]{Jurgens--Meinert--Bengs--Hllermeier--Waegeman2024} defines such a reference distribution as $q_N(\thv|x) \defeq \int_{{(\Xc\times \Yc)}^N} \mathbb{I}[\thv_{\Dc_N}=\thv] \diff P^N$, assuming an access to the underlying data generating distribution $\Dc_N\sim P^N$ is available, with $\thv$ representing the true first-order target for input $x$. 
The term $\thv_{\Dc_N}$ denotes the first-order prediction obtained by a model optimized the first-order objective (e.g., log likelihood) using a dataset $\Dc_N$ with size $N$, randomly sampled from $P$. 

Given this reference distribution, ~\citep[Theorem 3.2]{Jurgens--Meinert--Bengs--Hllermeier--Waegeman2024} suggests that the optimal second-order distributions derived from EDL methods using ``inner loss'' are not uniquely defined, ~\citep[Theorem 3.3]{Jurgens--Meinert--Bengs--Hllermeier--Waegeman2024} states that when there is no regularizer, i.e., $\lambda=0$, the optimal second-order distribution derived from optimizing the ``outer loss'' is a Dirac delta function, and ~\citep[Theorem 3.4]{Jurgens--Meinert--Bengs--Hllermeier--Waegeman2024} demonstrates that for any data generation distribution, there exists a $\lambda>0$ for which the optimal second-order distribution learned by ``outer loss'' differs from the reference distribution defined in \citep[Definition 3.1]{Jurgens--Meinert--Bengs--Hllermeier--Waegeman2024}.

While \citep[Theorem 3.2]{Jurgens--Meinert--Bengs--Hllermeier--Waegeman2024} falls outside the primary scope of our paper, the arguments from both \citep[Theorem 3.3]{Jurgens--Meinert--Bengs--Hllermeier--Waegeman2024} and \citep[Theorem 3.4]{Jurgens--Meinert--Bengs--Hllermeier--Waegeman2024} can be derived from our Theorem~\ref{thm:rev_kl}, which provides an exact characterization of the optimal second-order distribution as follows:
\begin{align}
p^\star(\thv|x) 
= \frac{p(\thv)\exp(\nu\E_{p(y|x)}[\log p(y|\thv)])}{\int p(\thv)\exp(\nu\E_{p(y|x)}[\log p(y|\thv)]) \diff\thv}.
\label{eq:optimal_meta}
\end{align}
For the $\lambda=0$ scenario described in \citep[Theorem 3.3]{Jurgens--Meinert--Bengs--Hllermeier--Waegeman2024}, where $\nu = \lambda^{-1} \rightarrow \infty$, the sample $\thv^*$ that maximizes $\mathbb{E}_{p(y|x)}[\log p(y|\thv)]$ will exponentially dominate the integral, leading to:
$$
p^\star(\thv|x) 
= \frac{p(\thv)\exp(\nu\E_{p(y|x)}[\log p(y|\thv)])}{\int p(\thv)\exp(\nu\E_{p(y|x)}[\log p(y|\thv)]) \diff\thv}
\xrightarrow{\nu\rightarrow\infty} \delta(\thv-\thv^*),
$$
where $\thv^* = \argmax_{\thv} \E_{p(y|x)}[\log p(y|\thv)] =\etav(x)$. 
For the case when $\lambda > 0$ analyzed in \citep[Theorem 3.4]{Jurgens--Meinert--Bengs--Hllermeier--Waegeman2024}, our analysis provides the exact optimal second-order distribution in \eqref{eq:optimal_meta}, which clearly demonstrates that it is different from their reference distribution for any specific choice of $\lambda$. 

For the classification setting with categorical likelihood $p(y|\thv) = \piv_y$, our analysis in Example~\ref{example:classfication} provides even sharper results. Here, the optimal Dirichlet distribution can be expressed as:
$$
p^\star(\piv|x) = \Dir(\piv; {\av_0+\nu\etav(x)})
$$
When $\lambda = 0$, $\nu=\lambda^{-1}\rightarrow\infty$, the optimal Dirichlet distribution converges to a Dirac function, i.e., $\Dir(\piv; {\av_0+\nu\etav(x)})\xrightarrow{\nu\rightarrow\infty} \delta(\piv-\etav(x))$. When $\lambda>0$, the behavior of the learned second-order distribution can be exactly characterized by $ \Dir(\piv; {\av_0+\lambda^{-1}\etav(x)})$.

Hence, both \citep[Theorem 3.3]{Jurgens--Meinert--Bengs--Hllermeier--Waegeman2024} and \citep[Theorem 3.4]{Jurgens--Meinert--Bengs--Hllermeier--Waegeman2024} can be understood as simple corollaries of our Theorem~\ref{thm:rev_kl}.

\section{Literature Review for Classical UQ Methods}
\label{subsec:uq_literature}
Classical UQ methods include Bayesian approaches, which model posterior distributions over model parameters and approximate the Bayesian inference in various ways, including variational inference~\citep{Graves2021, Kingma--Salimans--Welling2015}, MCMC~\citep{Welling--Teh2011, Dubey--Reddi--Williamson--Poczos--Smola--Xing2016}, Monte-Carlo dropout~\citep{Gal--Ghahramani2016}, and Laplace approximation~\citep{Kristiadi--Hein--Hennig2021, Ritter--Botev--Barber2018}.  Frequentist methods construct uncertainty sets through techniques such as jackknife~\citep{Alaa--Van2020} and bootstrap~\citep{Huang--Zhang--Huang2019}. Ensemble methods aggregate decisions of several random models to capture predictive uncertainty~\citep{Lakshminarayanan--Pritzel--Blundell2017simple, Valdenegro2019}. These classical methods suffer from expensive computation cost incurred by either multiple forward passes or training multiple models. More comprehensive review of classical UQ approaches can be found in survey papers~\citep{Abdar--Pourpanah--Hussain--Rezazadegan--Liu--Ghavamzadeh--Fieguth--Cao--Khosravi--Acharya--Makarenkov--Nahavandi2021,Gawlikowski--et-al2021}.

\section{Definition of Uncertainty and Its Measures}  \label{subsec:metrics}
In EDL, the uncertainty captured by $p_\psi(\piv|x)$ and $p_\psi(y|x)$ are called the \emph{distributional uncertainty} and \emph{aleatoric (data) uncertainty}, respectively. \emph{distributional uncertainty} is also a kind of epistemic (knowledge) uncertainty that captures the model's lack of knowledge arising from the mismatch in the training distribution and test distribution. 
That is, the distribution $p_{\psi}(\piv|x_0)$ should be dispersed if the model is uncertain on the particular query point $x_0$, and sharp otherwise.
As the adjective \emph{epistemic} suggests, this uncertainty is supposed to vanish at points in the support of the underlying distribution, when the model has observed infinitely many samples.
In contrast, the aleatoric uncertainty captures the inherent uncertainty in $p(y|x)$, and thus must be invariant to the sample size in principle.

There are two standard metrics to measure distributional uncertainty: 
(1) differential entropy (DEnt) $\diffent_\psi(\piv|x)
\defeq -\int p_\psi(\piv|x) \log p_\psi(\piv|x) d\piv$;
(2) mutual information (MI) 
$\mi_\psi(y;\piv|x) \defeq H(p_\psi(y|x)) - {\E_{p_{\psi}(\piv|x)}[H(p(y|\piv))]}$, where $\E_{p_\psi(\piv|x)}[\catent\left(p(y|\piv)\right)]$ is usually defined as the \emph{aleatoric (data) uncertainty}.
The resulting probabilistic classifier $p_\psi(y|x)$ captures both epistemic and aleatoric uncertainties without distinction, and the \emph{total uncertainty} is often measured via two standard metrics: (1) entropy (Ent) of $p_\psi(y|x)$, \ie $\catent\left(p_\psi(y|x)\right)$; (2) max probability (MaxP) of $p_\psi(y|x)$, \ie $\max_y p_\psi(y|x)$.

\newcommand{\xiv}{{\boldsymbol{\xi}}}

\section{Proof of Theorem~\ref{thm:unifying}}
\label{app:sec:proofs}

We prove the four statements in Theorem~\ref{thm:unifying} in the following three subsections:
the first (PriorNet objectives) is proved in Appendix~\ref{sec:prior_net}, second (EDL/MSE objective) and third (VI/ELBO objective) in Appendix~\ref{sec:vi_and_edl}, and fourth (PostNet/UCE objective) in Appendix~\ref{sec:uce_objective}.

\subsection{Prior Network Objective}
\label{sec:prior_net}
Prior networks~\citep{Malinin--Gales2018,Malinin--Gales2019} proposed to use the following form of objectives:
\begin{align*}
&\E_{p(x,y)}[D( {p^{(\nu)}(\piv|y)}, {p_\psi(\piv|x)})
+ \gamma_{\mathsf{ood}} \E_{\pout(x)}[D( {p(\piv)}, {p_\psi(\piv|x)})].
\end{align*}
Here, $\nu(\in \{10^2,10^3\})$ and $\gamma_{\mathsf{ood}}$ are hyperparameters.
\begin{itemize}
\item The F-KL objective~\citep{Malinin--Gales2018} is when $D(p,q)=D(p~\|~q)$.
The in-distribution objective can be written as
\[
\E_{p(x,y)}[D( {p^{(\nu)}(\piv|y)}~\|~ {p_\psi(\piv|x)})]
=\E_{p(x)}[D( \E_{p(y|x)}[p^{(\nu)}(\piv|y)]~\|~ p_\psi(\piv|x))]+\const.
\]
This implies that minimizing the F-KL objective function forces the unimodal UQ model $p_\psi(\piv|x)$ to fit the mixture of Dirichlet distributions $\E_{p(y|x)}[p^{(\nu)}(\piv|y)]$. Since the mixture can be multimodal when $p(y|x)$ is not one-hot, or equivalently, if aleatoric uncertainty is nonzero, the F-KL objective will drive the UQ model to spread the probability mass over the simplex, which possibly leads to low accuracy.
\item The R-KL objective~\citep{Malinin--Gales2019} is when $D(p,q)=D(q~\|~p)$. 
Unlike the F-KL objective, the in-distribution objective can be written as
\[
\E_{p(x,y)}[D({p_\psi(\piv|x)} ~\|~ {p^{(\nu)}(\piv|y)})]
=\E_{p(x)}[D( {p_\psi(\piv|x)}~\|~ \Dir(\piv;\av_0+\nu\etav(x)))]+\const.
\]
Since $p_\psi(\piv|x)$ is being fit to another Dirichlet distribution $\Dir(\piv;\av_0+\nu\etav(x))$, it no longer has the issue of the F-KL objective above.
We note in passing that \citet{Nandy--Hsu--Lee2020} proposed an ad-hoc objective such that $\av_\psi(x)< 1$ for OOD $x$'s.
\end{itemize}

\subsection{Variational Inference and EDL Objective}
\label{sec:vi_and_edl}
The variational inference (VI) loss was proposed by \citet{Chen--Shen--Jin--Wang2018,Joo--Chung--Seo2020} in the following form:
\begin{align}
\label{eq:elbo_loss}
\ell_{\textsf{VI}}(\psi;x,y)
\defeq \E_{p_\psi(\piv|x)}\Bigl[\log\frac{1}{p(y|\piv)}\Bigr] + \lambda D(p_\psi(\piv|x)~\|~p(\piv)).
\end{align}
Although the original derivation is a bit involved, we can rephrase the key logic in the paper by the following variational relaxation:
\[
D(p(x,y)~\|~ p(x)\bluet{p_{\lambda^{-1}}(y)})
\le D(p(x,y) p_\psi(\piv|x) ~\|~ p(x)\bluet{p_{\lambda^{-1}}(\piv,y)}),
\]
where $\bluet{p^{(\nu)}(y)}$ is induced by the tempered distribution $p^{(\nu)}(\piv,y)\propto p(\piv)p^{\nu}(y|\piv)$.
The inequality is a simple application of a form of data processing inequality~\citep{Cover--Thomas2006}.
We remark that the original name, ``ELBO loss,'' is a misnomer for this loss function, as the right-hand side simply bounds a constant on the left-hand side, as opposed to the negative ELBO, which bounds the negative log-likelihood.

\begin{lemma}
\label{lem:vi_loss}
Let $\nu\defeq \lambda^{-1}$. Then, we have
\[
\min_{\psi} \E_{p(x,y)}[\ell_{\mathsf{VI}}(\psi;x,y)]
\equiv \min_{\psi} \E_{p(x,y)}[D(p_\psi(\piv|x) ~\|~ p^{(\nu)}(\piv|y))].
\]
\end{lemma}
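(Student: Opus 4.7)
The plan is a direct algebraic manipulation: expand the KL divergence on the right-hand side using the explicit form of the tempered conditional $p_\nu(\piv|y)$ and show that the result equals $\nu$ times $\ell_{\mathsf{VI}}(\psi;x,y)$ modulo a term that is independent of $\psi$. Since scaling and adding $\psi$-free constants does not alter the argmin, equivalence of the minimizers follows.

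Concretely, I would first write $p_\nu(\piv|y) = p(\piv)p^\nu(y|\piv)/Z(y)$, where $Z(y) \defeq \int p(\piv) p^\nu(y|\piv)\diff\piv$ is the normalizer (only a function of $y$, not of $\psi$ nor $x$). Taking logarithms gives $\log p_\nu(\piv|y) = \log p(\piv) + \nu\log p(y|\piv) - \log Z(y)$. Plugging this into the KL divergence then yields
\begin{align*}
D(p_\psi(\piv|x)\,\|\,p_\nu(\piv|y))
&= D(p_\psi(\piv|x)\,\|\,p(\piv))
+ \nu\, \E_{p_\psi(\piv|x)}\!\Bigl[\log\tfrac{1}{p(y|\piv)}\Bigr]
+ \log Z(y).
\end{align*}

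Multiplying by $\lambda = \nu^{-1}$ and rearranging,
\begin{align*}
\lambda\, D(p_\psi(\piv|x)\,\|\,p_\nu(\piv|y))
= \underbrace{\E_{p_\psi(\piv|x)}\!\Bigl[\log\tfrac{1}{p(y|\piv)}\Bigr] + \lambda\, D(p_\psi(\piv|x)\,\|\,p(\piv))}_{=\,\ell_{\mathsf{VI}}(\psi;x,y)}
+ \lambda\log Z(y).
\end{align*}
Taking $\E_{p(x,y)}[\cdot]$ on both sides and observing that $\E_{p(x,y)}[\lambda\log Z(y)]$ is a constant in $\psi$, we get that the two objectives differ by a positive multiplicative factor $\lambda$ and an additive $\psi$-independent term, so they share the same argmin.

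There is no real obstacle here: the computation is bookkeeping on logarithms of a conjugate pair, and the only thing to be careful about is tracking the normalizing constant $Z(y)$ and verifying that it carries no dependence on $\psi$ or $x$ (it is an integral over $\piv$ of a function of $y$ alone). Well-definedness of $Z(y)$ follows from the fact that the integrand is bounded on the simplex $\Delta^{C-1}$ since $p(y|\piv) = \pi_y \in [0,1]$ and $p(\piv) = \Dir(\piv;\av_0)$ is a probability density.
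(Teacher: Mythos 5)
Your proof is correct, and it takes a somewhat more direct route than the paper's. The paper proves the equivalence in two steps: first it applies the chain rule of KL divergence to show that
\[
\E_{p(x,y)}\bigl[D(p_\psi(\piv|x)\,\|\,p_\nu(\piv|y))\bigr]
\quad\text{and}\quad
D\bigl(p(x,y)p_\psi(\piv|x)\,\|\,p(x)\,p_\nu(\piv,y)\bigr)
\]
share the same minimizer (because their difference is $D(p(x,y)\,\|\,p(x)p_\nu(y))$, which is $\psi$-free), and then it expands the joint KL using the global normalization constant $B=\sum_y\int p(\piv)p^\nu(y|\piv)\diff\piv$ to recover $\ell_{\mathsf{VI}}$ plus $\psi$-free terms. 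This detour through the joint distribution is natural given that the original motivation for the VI loss (given just before the lemma) is a data-processing relaxation of a joint-distribution divergence. Your version skips the joint entirely: you expand the conditional $p_\nu(\piv|y)=p(\piv)p^\nu(y|\piv)/Z(y)$ directly inside the KL, and the per-$y$ normalizer $Z(y)$ replaces the paper's combination of $B$ and the $D(p(x,y)\,\|\,p(x)p_\nu(y))$ term. Both routes are standard exponential-family bookkeeping; yours is shorter and, incidentally, avoids a couple of slips in the paper's displayed computation (a missing overall factor of $\nu$ in front of $\ell_{\mathsf{VI}}$, and the expression $D(p(x,y)\,\|\,p(x))$, which is not a well-formed KL divergence as written). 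One small imprecision on your side: you justify finiteness of $Z(y)$ by saying the integrand is bounded, but $\Dir(\piv;\av_0)$ need not be bounded on the simplex when some $\alpha_{0,j}<1$; the correct reason is simply that $p^\nu(y|\piv)\le 1$, so $Z(y)\le\int p(\piv)\diff\piv=1$. This does not affect the argument.
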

\begin{proof}
Note that by the chain rule of KL divergence, we have on the one hand
\[
D(p(x,y)p_\psi(\piv|x)~\|~p(x) p^{(\nu)}(\piv,y))
=D(p(x,y)~\|~p(x) p^{(\nu)}(y))
+\E_{p(x,y)}[D(p_\psi(\piv|x)~\|~p^{(\nu)}(\piv|y)].
\]
Hence, 
\begin{align}
\min_{\psi} \E_{p(x,y)}[D(p_\psi(\piv|x) ~\|~ p^{(\nu)}(\piv|y))]
\equiv \min_{\psi} D(p(x,y)p_\psi(\piv|x)~\|~p(x) p^{(\nu)}(\piv,y)).\label{eq:elbo2}
\end{align}
On the other hand, we have
\begin{align*}
&D(p(x,y)p_\psi(\piv|x)~\|~p(x) p^{(\nu)}(\piv,y))\\
&= \E_{p(x,y)p_\psi(\piv|x)}\Bigl[
\log\frac{p(x,y)p_\psi(\piv|x)}{p(x) p^{(\nu)}(\piv,y)}\Bigr]\\
&= \E_{p(x,y)p_\psi(\piv|x)}\Bigl[
\log\frac{p(x,y)p_\psi(\piv|x)}{p(x) p(\piv)p^{\nu}(y|\piv)}\Bigr] + \log B\\
&= \E_{p(x,y)}\Bigl[
\nu\E_{p_\psi(\piv|x)}\Bigl[\log\frac{1}{p(y|\piv)}\Bigr] + D(p_\psi(\piv|x)~\|~p(\piv))
\Bigr] + D( p(x,y) ~\|~p(x)) + \log B\\
&= \E_{p(x,y)}[\ell_{\mathsf{VI}}(\psi;x,y)] + D( p(x,y) ~\|~p(x)) + \log B.
\end{align*}
Here, we let $B\defeq 
\sum_y\int p(\piv)p^{\nu}(y|\piv)\diff\piv
$ is the normalization constant, which satisfies $p^{(\nu)}(\piv,y)=\frac{1}{B}p(\piv)p^{\nu}(y|\piv)$.
This implies that
\begin{align}
\min_{\psi} D(p(x,y)p_\psi(\piv|x)~\|~p(x) p^{(\nu)}(\piv,y))
\equiv \min_{\psi} \E_{p(x,y)}[\ell_{\mathsf{VI}}(\psi;x,y)].\label{eq:elbo1}
\end{align}
Combining \eqref{eq:elbo1} and \eqref{eq:elbo2}, we prove the desired equivalence.
\end{proof}

\subsection{Uncertainty Cross Entropy Objective}
\label{sec:uce_objective}
\citet{Charpentier--Zugner--Gunnemann2020,Charpentier--Borchert--Zugner--Geisler--Gunnemann2022} proposed to use the uncertainty cross-entropy loss, which is defined as
\[
\ell_{\textsf{UCE}}(\psi;x,y)
\defeq \nu \E_{p_\psi(\piv|x)}\Bigl[\log\frac{1}{p(y|\piv)}\Bigr] 
- h({p_\psi(\piv|x)}).
\]
In \citep{Charpentier--Zugner--Gunnemann2020,Charpentier--Borchert--Zugner--Geisler--Gunnemann2022}, this loss function was originally motivated from a general framework for updating belief distributions~\citep{Bissiri--Holmes--Walker2016}.

As alluded to earlier, however, it is a special instance of the VI objective of \citet{Joo--Chung--Seo2020} when $\av_0=\ones$, since $p(\piv)=\Dir(\piv;\ones)=\frac{1}{B(\ones)}=(C-1)!$ and thus
\begin{align*}
D(p_\psi(\piv|x)~\|~p(\piv))
&=\E_{p_\psi(\piv|x)}\Bigl[\log \frac{p_\psi(\piv|x)}{p(\piv)}\Bigr]\\
&= -h(p_\psi(\piv|x)) -\E_{p_\psi(\piv|x)}[\log p(\piv)]\\
&= -h(p_\psi(\piv|x)) -\log (C-1)!.
\end{align*}
Since the VI loss~\eqref{eq:elbo_loss} also only uses $\av_0=\ones$ in the original paper~\citep{Joo--Chung--Seo2020}, this implies that the VI loss and the UCE loss are essentially equivalent.

\section{Proof of Theorem~\ref{thm:rev_kl}}
\label{app:sec:thm5.1}
We prove a stronger result than Theorem~\ref{thm:rev_kl}, allowing any observation model for $y$ beyond the categorical case, \ie $y\in[C]$.
\begin{theorem}
\label{thm:rev_kl_general}
For a choice of prior distribution $p(\thv)$ and $\lambda>0$, define
\begin{align}
\Lc(\psi)\defeq
\E_{p(x,y)p_\psi(\thv|x)}\Bigl[\log\frac{1}{p(y|\thv)}\Bigr] + \lambda \E_{p(x)}[D(p_\psi(\thv|x)~\|~p(\thv))].
\label{eq:general_loss}
\end{align}
Let $\nu\defeq \lambda^{-1}$. 
Define the tempered likelihood
\[
p^{(\nu)}(\thv|y)\defeq \frac{p^{(\nu)}(\thv,y)}{\int p^{(\nu)}(\thv,y)\diff \thv},
\quad\text{ where } p^{(\nu)}(\thv,y)\defeq \frac{p(\thv)p^\nu(y|\thv)}{\iint p(\thv)p^\nu(y|\thv)\diff\thv \diff y}.
\]
Then, we have 
\begin{align}
\min_{\psi} \Lc(\psi)
&= \min_\psi 
\E_{p(x,y)}[
D({p_\psi(\thv|x)}~\|~ {p^{(\nu)}(\thv|y)})] 
\nonumber\\&
\equiv \min_\psi 
\E_{p(x)}[
D({p_\psi(\thv|x)} ~\|~ p^\star(\thv|x))],
\label{eq:optimal_psi_div_form}
\end{align}
where 
\[
p^\star(\thv|x) 
\defeq \frac{p(\thv)\exp(\nu\E_{p(y|x)}[\log p(y|\thv)])}{\int p(\thv)\exp(\nu\E_{p(y|x)}[\log p(y|\thv)]) \diff\thv}.
\]
\end{theorem}
\begin{proof}
Note that we can rewrite the objective function as
\begin{align}
\Lc(\psi)
&=\lambda\E_{p(x,y)}\Bigl[D\Bigl(p_\psi(\thv|x)~\Big\|~\frac{p(\thv)p^{\nu}(y|\thv)}{\int p(\thv)p^{\nu}(y|\thv)\diff\thv} \Bigr)\Bigr] + C\nonumber\\
&=\lambda\E_{p(x)}\Bigl[D\Bigl(p_\psi(\thv|x)~\Big\|~\frac{\exp\Bigl(\E_{p(y|x)}\Bigl[\log\frac{p(\thv)p^{\nu}(y|\thv)}{\int p(\thv)p^{\nu}(y|\thv)\diff\thv}\Bigr] \Bigr)}{\int \exp\Bigl(\E_{p(y|x)}\Bigl[\log\frac{p(\thv)p^{\nu}(y|\thv)}{\int p(\thv)p^{\nu}(y|\thv)\diff\thv}\Bigr] \Bigr) \diff\thv}\Bigr)\Bigr] + C' \nonumber \\
&=\lambda\E_{p(x)}\Bigl[D\Bigl(p_\psi(\thv|x)~\Big\|~\frac{p(\thv)\exp(\nu\E_{p(y|x)}[\log p(y|\thv)])}{\int p(\thv)\exp(\nu \E_{p(y|x)}[\log p(y|\thv)]) \diff\thv}\Bigr)\Bigr] + C''.
\label{eq:equiv_final_expression_uce}
\end{align}
Here, $\nu\defeq \frac{1}{\lambda}$ and $C, C'$, and $C''$ are some constants with respect to $\psi$.
In particular,
\begin{align}\label{eq:remaining_const_rev_kl_obj}
C'' = \E_{p(x)}\Bigl[\log\frac{1}{\int p(\thv)\exp(\nu \E_{p(y|x)}[\log p(y|\thv)]) \diff\thv}\Bigr].
\end{align}
This characterizes the optimal $p_\psi(\thv|x)$ under this objective function as
\begin{align}
p_{\psi^*}(\thv|x) 
= \frac{p(\thv)\exp(\nu\E_{p(y|x)}[\log p(y|\thv)])}{\int p(\thv)\exp(\nu\E_{p(y|x)}[\log p(y|\thv)]) \diff\thv},
\label{eq:optimal_psi}
\end{align}
which concludes the proof.
\end{proof}

\section{An Extension For General Observation Models}
\label{app:sec:general}
In this section, following the similar exponential family distribution setup, 
We consider a likelihood model of exponential family distribution over a target variable $y\in\Real$ with natural parameters $\thv\in\Real^L$
\[
p(y|\thv)\defeq h(y)\exp(\thv^\intercal \uv(y)-A(\thv)).
\]
Here, $h\suchthat \Real\to\Real$ is the base measure, $A\suchthat\Real^L\to\Real$ is the log-partition function, and $\uv\in\Real\to\Real^L$ is sufficient statistics.
Note that the entropy of the parametric distribution is given as $h(p(y|\thv))=A(\thv)-\thv^\intercal\nabla_\thv A(\thv) - \E[\log h(y)]$.

In EDL methods, a distribution over the parameter $\thv$ is assumed to capture uncertainty.
A convenient choice is the conjugate prior $p(\thv|\xiv,n)$ of the likelihood $p(y|\thv)$, which is given as
\begin{align*}
p(\thv|\xiv,n)
=\eta(\xiv,n)\exp(n(\thv^\intercal\xi-A(\thv))).
\end{align*}
Here, $\xiv\in\Real^L$ is the prior parameter, $n\in\Real_{>0}$ is the evidence parameter, and $\eta(\xiv,n)$ is the normalization constant.

The benefit of the conjugate distribution is in the easy posterior update.
Given $N$ observations $y^N$ and prior $p(\thv|\xiv_o,n_o)$, the posterior is $p(\thv|\xiv_+,n_+)$, where
\begin{align*}
\xiv_+ &= \frac{n_o\xiv_o+\sum_{j=1}^N \uv(y_j)}{n_o+N}\\
n_+ &= n_o + N.
\end{align*}

Now, we assume that there exists neural networks that, for each point $x$, can approximate the ``associated parameters'' $n_\psi(x)$ and $\xiv_\psi(x)$.
We then define the \emph{target} posterior distribution as
\[
p_\psi(\thv|x)\defeq p\Bigl(\thv~\Big|~\frac{n_o\xiv_o+n_\psi(x)\xiv_\psi(x)}{n_o+n_\psi(x)}, n_o+n_\psi(x)\Bigr).
\]
For a choice of prior distribution $p(\thv)$ and $\lambda>0$, define the objective function as
\[
\Lc(\psi)\defeq
\E_{p(x,y)p_\psi(\thv|x)}\Bigl[\log\frac{1}{p(y|\thv)}\Bigr] + \lambda \E_{p(x)}[D(p_\psi(\thv|x)~\|~p(\thv))].
\]
This is a generalization of the UCE loss proposed by NatPN~\citep{Charpentier--Borchert--Zugner--Geisler--Gunnemann2022}, as will be detailed below.
We note that Theorem~\ref{thm:rev_kl} remains to hold for this general observation model.
That is, by minimizing the UCE loss, $p_\psi(\thv|x)$ is encouraged to fit to a fixed target 
\[
p^\star(\thv|x) 
=\frac{p(\thv)\exp(\nu\E_{p(y|x)}[\log p(y|\thv)])}{\int p(\thv)\exp(\nu\E_{p(y|x)}[\log p(y|\thv)]) \diff\thv}.
\]

\subsection{Example: Classification}
For classification, we set $\thv(=\piv)\in\Delta^{\Yc}$, $p(y|\thv)=\th_y$ (categorical model), and we have, for $\av_o=\ones$,
\begin{align*}
p(\thv)&=\Dir(\thv;\av_o),\\
p_\psi(\thv|x)&=\Dir(\thv;\av_o+n_\psi(x)\av_\psi(x)),\\
p^\star(\thv|x) 
&=\Dir(\thv;\av_0+\nu\etav(x))
\propto\exp\bigl(\E_{p(y|x)}\bigl[\log\Dir(\thv;\av_o+\nu\ev_y)\bigr] \bigr).
\end{align*}
While the categorical model is typically assumed, a few exceptions exist.
\citet{Sensoy--Kaplan--Cerutti--Saleki2020} used $p(y|\thv)=\Nc(\ev_y;\thv,\sigma^2 I)$.
\citet{Deng--Chen--Yu--Liu--Heng2023} used $p(y|\thv,\av_\psi(x))=\Nc(\ev_y;\thv,\sigma^2 \Ic(\av_\psi(x))^{-1})$,
where $\Ic(\av)$ denotes the Fisher information matrix of the probability model $\Dir(\av)$, \ie
\[
\Ic(\av)\defeq \E_{\thv\sim\Dir(\av)}\Bigl[\frac{\partial\log\Dir(\thv;\av)}{\partial\av}\frac{\partial\log\Dir(\thv;\av)}{\partial\av}^\intercal\Bigr].
\]
In this modified case, the equivalence in \eqref{eq:optimal_psi_div_form} in Theorem~\ref{thm:rev_kl_general} does not hold. 
The analysis breaks down since the constant $C''$ in \eqref{eq:equiv_final_expression_uce} now becomes dependent on $\psi$, and cannot be ignored in the optimization:
\begin{align*}
C'' = \E_{p(x)}\Bigl[\log\frac{1}{\E_{p(\thv)}[\exp(\nu \E_{p(y|x)}[\log p(y|\thv,\av_\psi(x))])]}\Bigr].
\end{align*}
Our high-level critique on the EDL methods remains to hold, since the FisherEDL still does not take into account any external uncertainty in the model $\psi$. Therefore, the FisherEDL would fit the meta distribution to some arbitrary target, which could be however potentially better than the simple scaled-and-shifted target.
We also note that in our empirical demonstration in Sec.~\ref{sec:results}, FisherEDL did not show outstanding performance on the downstream task performance.

\subsection{Example: Regression}
For regression, we can set $\thv=(\mu,\sigma)\in\Real\times\Real_{\ge 0}$, $p(y|\thv)=\Nc(y;\mu, \sigma^2)$.
For a prior, normal-inverse-Gamma (NIG) distribution $\Nc\Gamma^{-1}(\mu,\sigma; \mu_0,\lambda,\a,\b)$ is a convenient choice due to its conjugacy to the Gaussian likelihood.
\citet{Charpentier--Borchert--Zugner--Geisler--Gunnemann2022} proposed to use this choice with the UCE loss~\eqref{sec:uce_objective}.
\citet{Malinin--Chervontsev--Provilkov--Gales2020} considered a multivariate regression with Gaussian likelihood together with the Normal-Wishart distribution, which is a multi-dimensional generalization of the NIG distribution.
They used the reverse-KL type objective extending their prior work~\citep{Malinin--Gales2019}.
In our unified view (Theorem~\ref{thm:rev_kl_general}), both objectives are equivalent, and these two approaches would suffer the same issue in its UQ ability.

While the ``Deep Evidential Regression'' objective proposed by \citet{Amini--Schwarting--Soleimany--Rus2020} cannot be subsumed by our work, \citet{Meinert--Gawlikowski--Lavin2023} recently investigated the effectiveness of the EDL method of \citet{Amini--Schwarting--Soleimany--Rus2020}, and also showed that the objective function characterizes a degenerate distribution as its optimal meta distribution, suggesting that its empirical success is likely due to an optimization artifact.

\section{Experiment Setup} \label{app:sec:setup}
\subsection{Data Processing}
\paragraph{Details about Synthetic Data.}
We create a mixture of Gaussian distribution with mean $[-2, 3]$, $[0, 0]$, and $[2, 3]$, and variance $0.25$. We randomly sample $1000$ training data used to train EDL models and use $2000$ test data for the generation of visualization plots in Fig.~\ref{fig:toy_epistemic} and Fig.~\ref{fig:toy_total}.

\paragraph{Data Split of Real Data.}
For in-distribution datasets CIFAR10 and CIFAR100, we divide the original training data into two subsets: a training set and a validation set, using an $80\%/20\%$ split ratio. The testing set of these datasets is utilized as in-distribution samples for evaluation. For OOD datasets, we utilize their testing sets as OOD data. To maintain consistency in sample size, we ensured that each OOD dataset contains exactly 10,000 samples, matching the number of in-distribution test samples.
\paragraph{Details about OOD Data.}
For all the OOD datasets, we standardized the dataset images by resizing them to dimensions of 32x32 pixels. Additionally, all gray-scale images were converted to a three-channel format. The following datasets are selected for the OOD detection task:
\begin{itemize}
    \item Fashion-MNIST: Contains article images of clothes. We use the testing set, which contains 10,000 images.
    \item SVHN: The Street View House Numbers (SVHN) dataset contains images of house numbers sourced from Google Street View. We use 10,000 images from its testing set as OOD samples.
    \item TinyImageNet (TIM): As a subset of the larger ImageNet dataset, TIM's validation set, containing 10,000 images, is used as OOD samples.
    \item Corrupted: This is an artificially created dataset generated by applying perturbations, including Gaussian blurring, pixel permutation, and contrast re-scaling to the original testing images.
\end{itemize}

\subsection{Model Architecture}
\paragraph{Base Model Architecture.}
To ensure a fair comparison, we use the same base model architecture for all EDL methods. We describe the base model architecture for different tasks as follows:
\begin{itemize}
    \item Synthetic 2-D Gaussian Data: We utilize a simple Multi-Layer Perceptron (MLP) model. This model comprises three linear layers, each with a hidden dimension of $64$, each followed by the ReLU activation function.
    \item Real Datasets (CIFAR10 and CIFAR100): We adopt the existing model architecture VGG16~\citep{Karen--Andrew2014} and ResNet18~\citep{Kaiming--Xiangyu--Shaoqing--Jian2016} as the base model.
\end{itemize}
The base model serves as the feature extractor for the EDL model, which extracts the feature with latent dimension $6$ for 2-D Gaussian Data and CIFAR10, and extracts the feature with latent dimension $12$ for CIFAR100.
\paragraph{UQ Model Head.}
The UQ model of most EDL methods consists of a base model feature extractor and a UQ head. The feature extractor takes the data as input and extracts the latent features. The UQ head then takes the latent feature to parameterize a Dirichlet distribution $\Dir(\piv;\av_\psi(x))$. As we discussed in Sec.~\ref{sec:param}, different EDL method mainly differs from the parametric form of $\av_\psi(x)$, either \emph{direct parameterization} or  \emph{density parameterization}, which is achieved by using different types of UQ head. We describe the model architecture of these two different UQ heads as follows:
\begin{itemize}
    \item Direct Parameterization: the UQ head of \emph{direct parameterization} is similar to a typical classification head, which consists of two linear layers with hidden dimension $64$ ($128$ for CIFAR100), equipped with a ReLU activation between the two layers.
    \item Density Parameterization: \emph{density parameterization} adopts a density model. PostNet~\citep{Charpentier--Zugner--Gunnemann2020} uses a radial flow model with a flow length of $6$ to estimate class-wise density, which results in the creation of multiple flow models corresponding to the number of classes in datasets. NatPN~\citep{Charpentier--Borchert--Zugner--Geisler--Gunnemann2022} uses a single radial flow model with a flow length of 8 to estimate the density of marginal data distribution $p(x)$.
\end{itemize}

\begin{figure*}[!t]
    \centering
    \begin{tabular}{rl}
    \subfloat[Epistemic Uncertainty: 2D Gaussian]{%
     \includegraphics[width=0.75\textwidth]{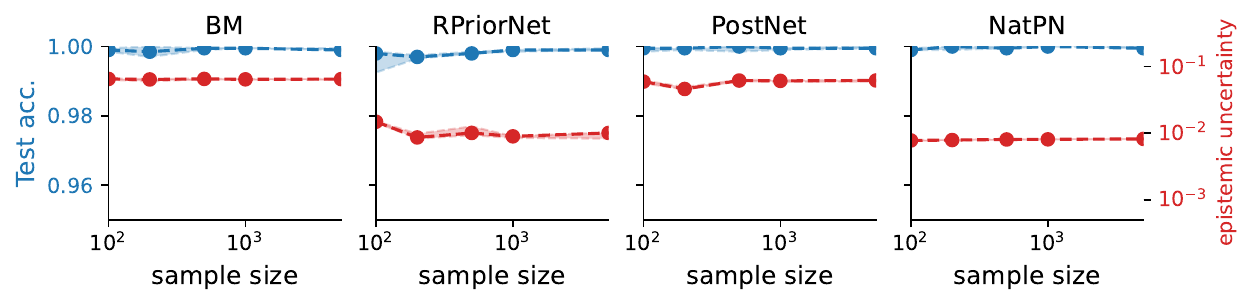}
     }
     \\
     \subfloat[Aleatoric Uncertainty: 2D Gaussian]{%
     \includegraphics[width=0.75\textwidth]{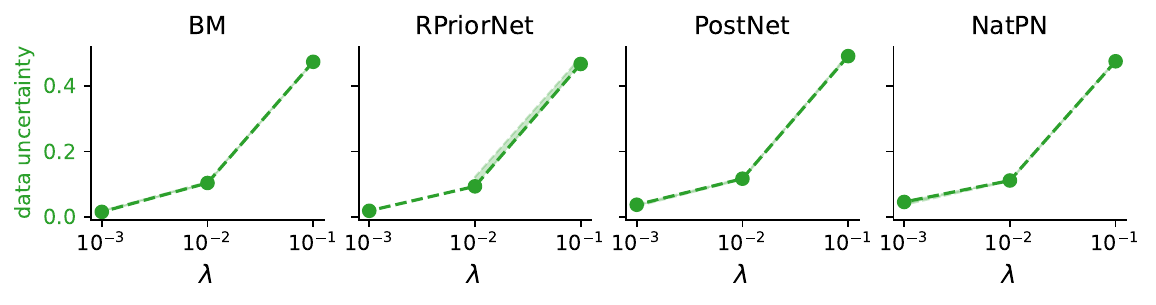}
     }
     \end{tabular}
\caption{\textbf{Behavior of Uncertainties Learned by EDL methods on Toy Data.} (a) EDL methods learn spurious epistemic uncertainty, wherein uncertainty does not vanish with an increasing number of observed samples, contrary to the fundamental definition of epistemic uncertainty. (b) Instead of a constant, EDL methods learn model-dependent aleatoric uncertainty that depends on hyper-parameter $\lambda$, contrary to the fundamental definition of aleatoric uncertainty.}
\label{fig:inconsistency_gauss}
\end{figure*}

\subsection{Implementation Details}
For all the EDL methods, we use the same base model architecture discussed previously. We elaborate on their common configurations, method-specific configurations, and hyper-parameter as follows.
\paragraph{Common configurations.}
The maximum training epochs are set to $50$, $100$, and $200$ for 2-D Gaussian data, CIFAR10 and CIFAR100, respectively. We train models with the early stopping of patience $10$. The model is evaluated on the validation set with frequency $2$, and the optimal model with the lowest validation loss or highest accuracy is returned. The training batch size is set to $64$, $64$, and $256$ for Gaussian data, CIFAR10 and CIFAR100, respectively. We use Adam optimizer without weight decay or learning rate schedule during model optimization. The learning rates of the optimizer are $\texttt{1e-3,2.5e-4,2.5e-4}$ for Gaussian data, CIFAR10 and CIFAR100, respectively. The default hyper-parameter $\lambda$ is set to $\texttt{1e-4}$ for those EDL methods with regularizer. All reported numbers are averaged over five runs. All experiments are implemented in PyTorch using a Tesla V100 GPU with 32 GB memory.
\paragraph{Method-specific Configurations.}
We describe those EDL methods with additional training configurations and hyper-parameters as follows,
\begin{itemize}
    \item RPriorNet~\citep{Malinin--Gales2019}: This method requires additional OOD data during training. In alignment with the original paper, CIFAR100 serves as the OOD data for CIFAR10, and TinyImageNet is utilized as the OOD data for CIFAR100. We set the OOD regularizer weight $\gamma_{\mathsf{ood}}=5$ and $\gamma_{\mathsf{ood}}=1$ for CIFAR10 and CIFAR100, respectively. We realize that both the training loss and validation loss can fluctuate significantly due to the conflict between in-distribution and OOD data optimization. In this case, we use validation accuracy as the criterion to save the optimal model, making sure the model is well optimized. 
    \item NatPN~\citep{Charpentier--Borchert--Zugner--Geisler--Gunnemann2022}: NatPN uses a single density model with flow length 8 and latent feature dimension 16 to parameterize density. The method also requires “warm-up training” and “fine-tuning” of the density model, where we set warm-up epochs to be $3$ and fine-tuning epochs to $100$, as the paper's official implementation suggests. 
    \item Fisher-EDL~\citep{Deng--Chen--Yu--Liu--Heng2023}: This method requires two regularizers, one is the standard r-KL regularizer, the other their proposed ``Fisher Information'' regularizer in the training objective. As the paper suggest, we set the corresponding hyper-parameter $\lambda_2=\texttt{5e-2}$ and $\lambda_2=\texttt{5e-3}$ for CIFAR10 and CIFAR100, respectively, and set $\lambda_1=1$ for both datasets.
    \item END2~\citep{Malinin--Mlodozeniec--Gales2020}: To obtain the ensemble samples, we train $100$ ensemble models using standard cross entropy loss. In alignment with the original paper, we also use the temperature annealing technique, which is the key technique to ensure the distillation training works smoothly. The temperature will be initialized as a large value and then gradually decay to $1$ after certain epochs. We set the initial temperature to $5$ for both datasets and set the decay epochs to $30$ and $60$ for CIFAR10 and CIFAR100, respectively.
    \item S2D~\citep{Fathullah--Gales2022}: We train a single model with random dropout and then run inference of the model to obtain $100$ samples. We also use the temperature annealing technique with the same configurations as END2 for the distillation training.
    \item Bootstrap Distillation (Ours): First, we randomly sample $100$ subsets of training data from the original training set with an $80\%/20\%$ split ratio. Then, for each dataset, we train a bootstrap model using standard cross entropy loss, resulting $100$ bootstrap models. For the distillation stage, we also apply the temperature annealing technique with the same configurations as END2.
\end{itemize}

\section{Additional Experiment Results} \label{app:sec:results}

\subsection{Omitted Results in Sec.~\ref{sec:sharp_character}}\label{app:subsec:Gaussian}
The behavior of both epistemic uncertainty and aleatoric uncertainty learned by EDL methods on 2D Gaussian data is shown in Fig.~\ref{fig:inconsistency_gauss}. We observe similar behavior as real data. EDL methods cannot quantify either epistemic or aleatoric uncertainty in a faithful way.

\subsection{Ablation Study in Sec.~\ref{subsec:objective}}
\label{app:subsec:objective}
\begin{figure*}[ht]
    \centering
    \begin{tabular}{rl}
    \subfloat[r-KL]{%
        \includegraphics[width=0.25\textwidth]{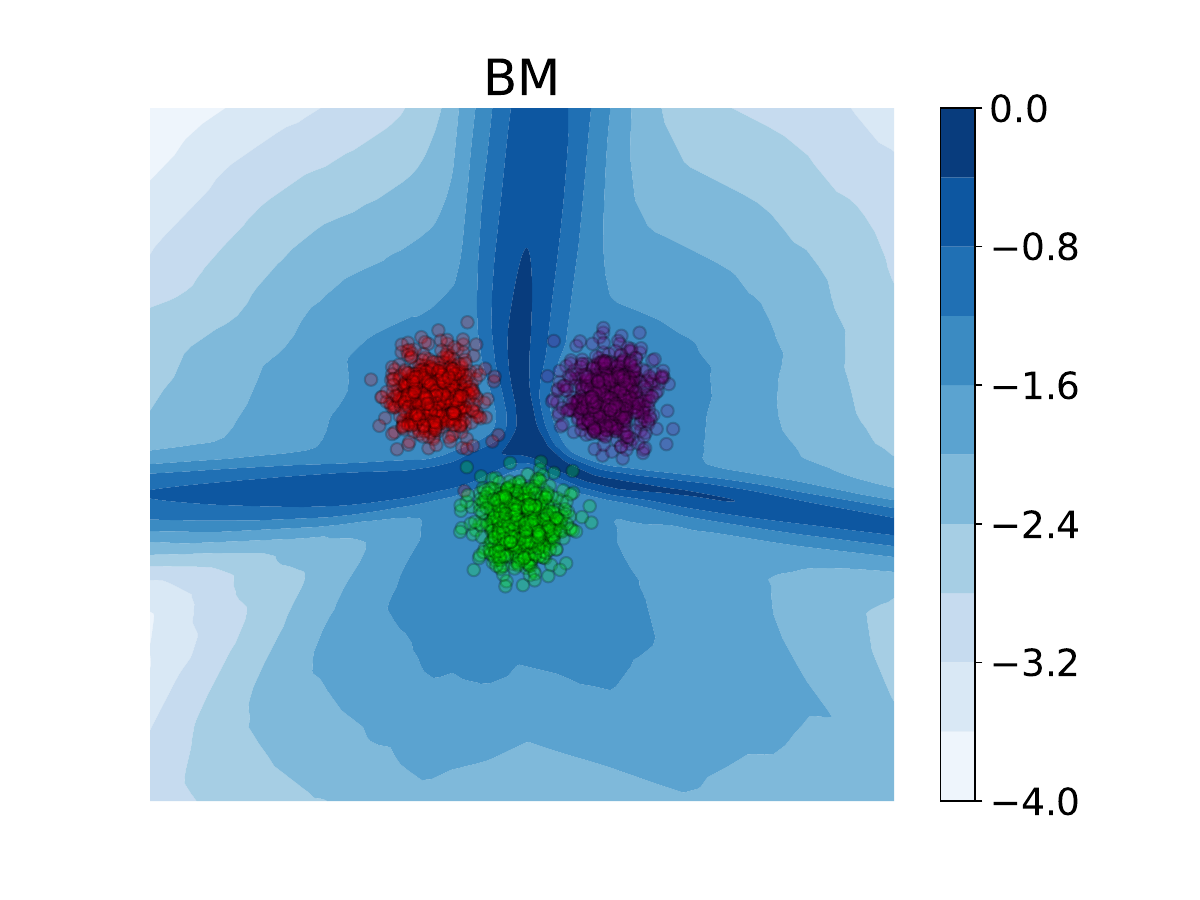}
    }
    \subfloat[r-KL + OOD]{%
        \includegraphics[width=0.25\textwidth]{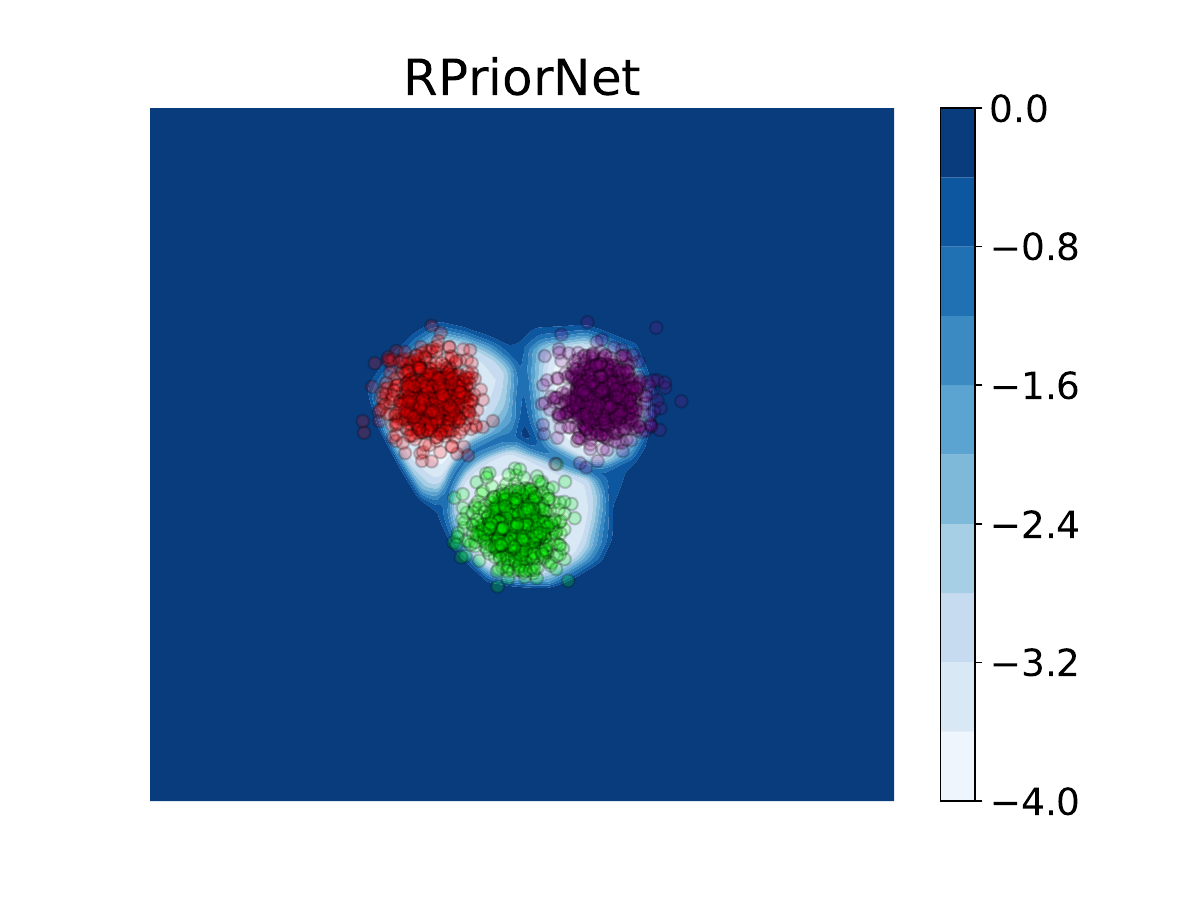}
    }
    \subfloat[r-KL + multiple flow]{%
        \includegraphics[width=0.25\textwidth]{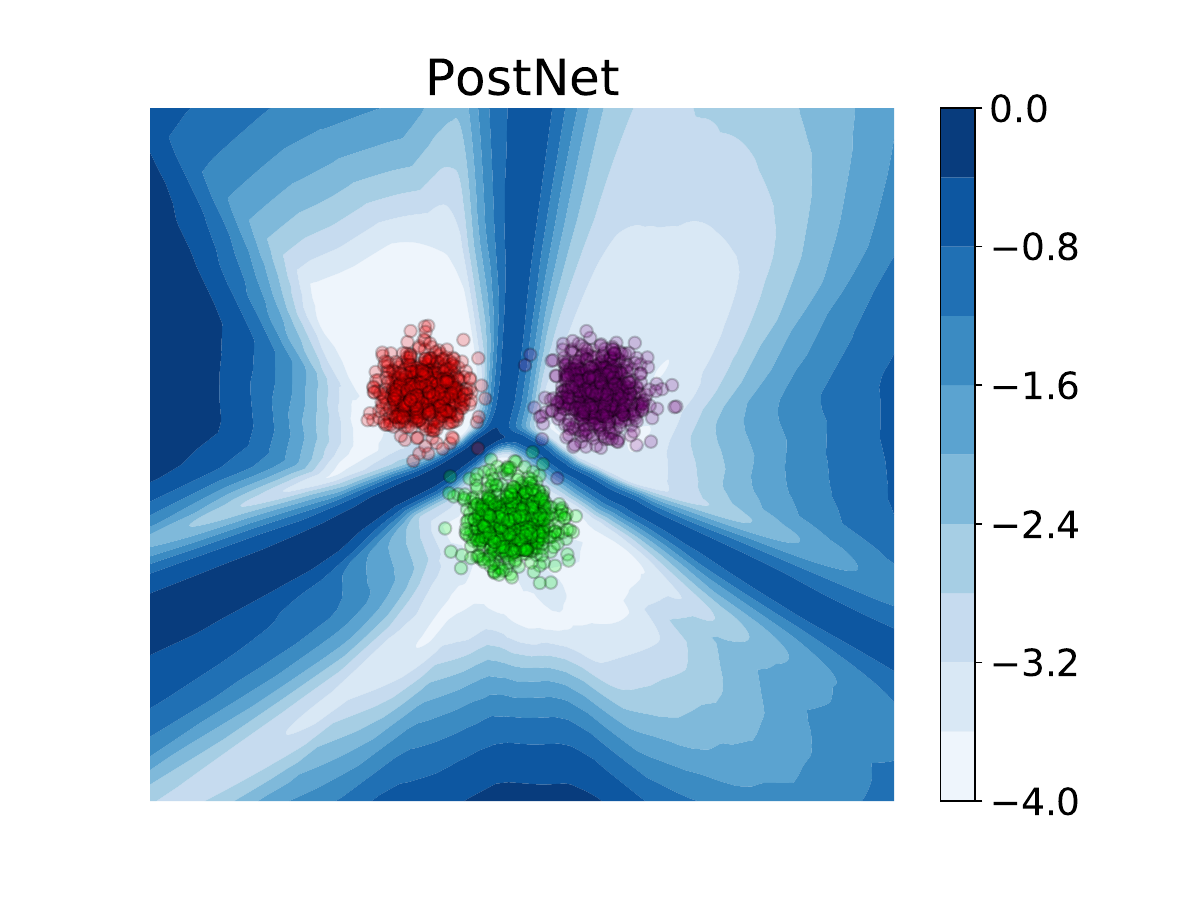}
    }
    \subfloat[r-KL + single flow]{%
        \includegraphics[width=0.25\textwidth]{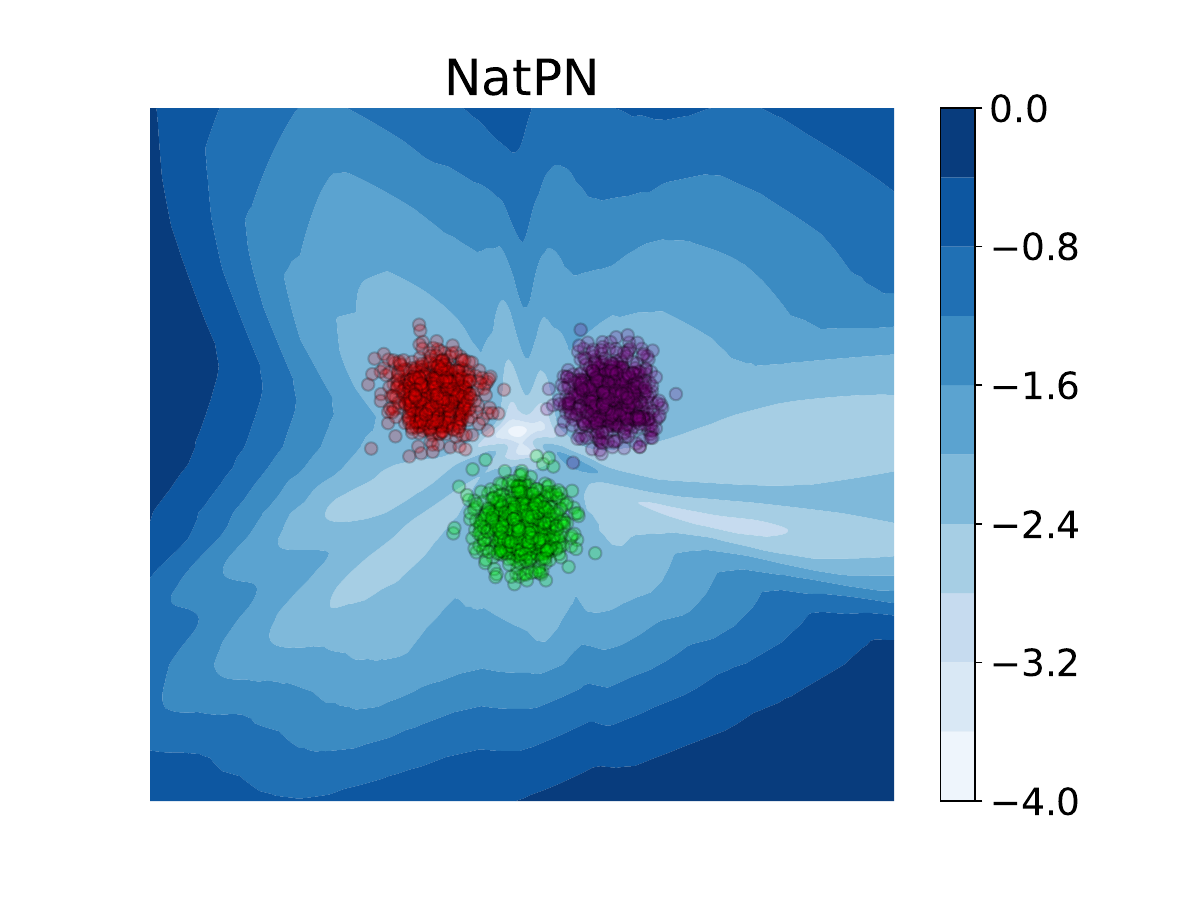}
    }
    \\
    \subfloat[MSE]{%
        \includegraphics[width=0.25\textwidth]{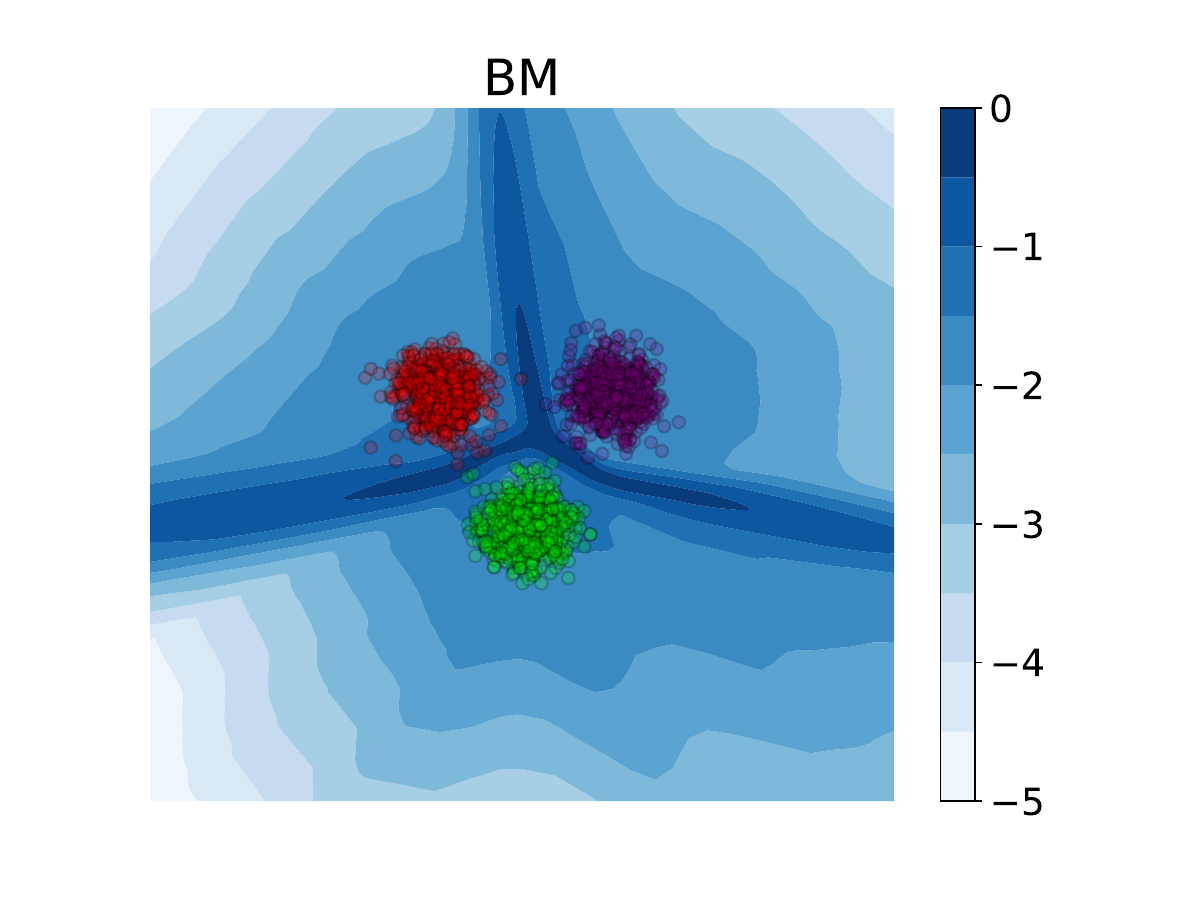}
    }
    \subfloat[MSE + OOD]{%
        \includegraphics[width=0.25\textwidth]{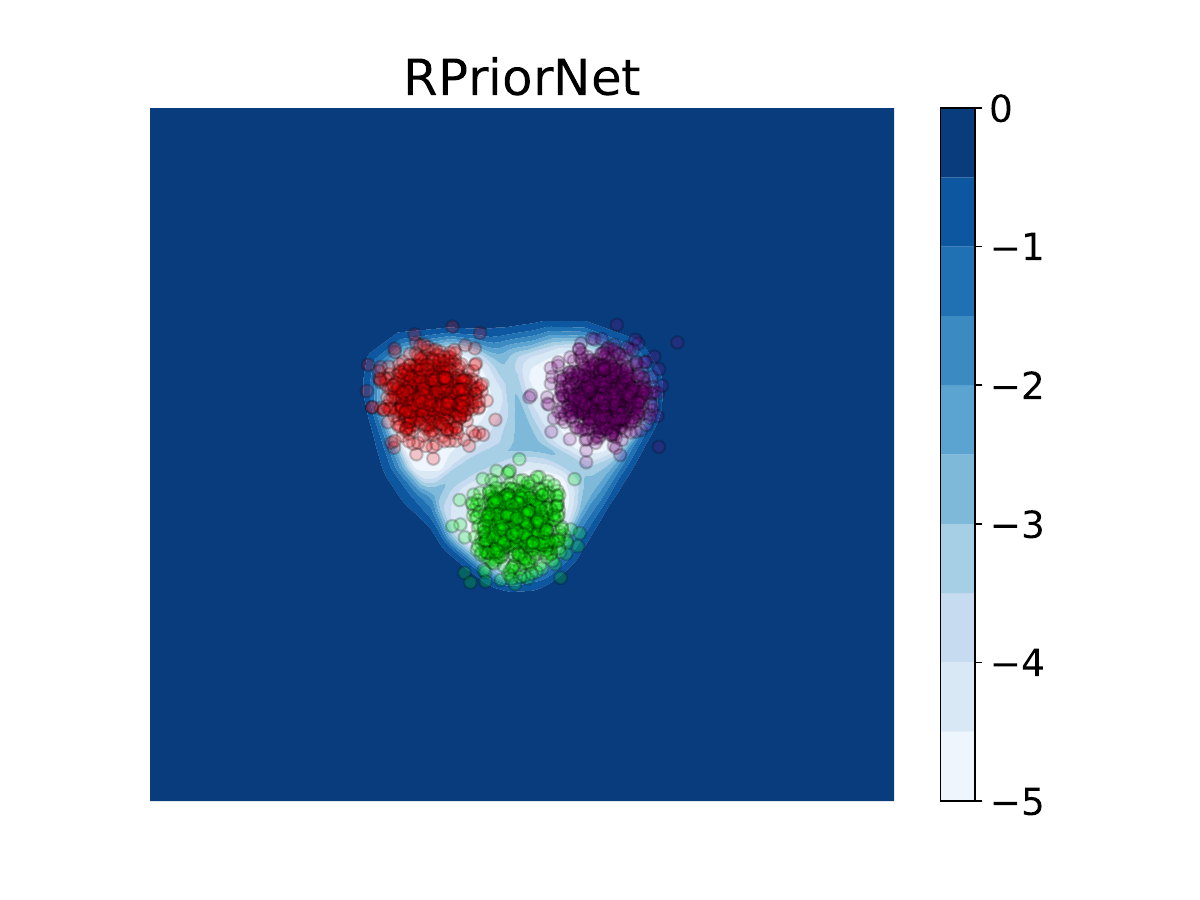}
    }
    \subfloat[MSE + multiple flow]{%
        \includegraphics[width=0.25\textwidth]{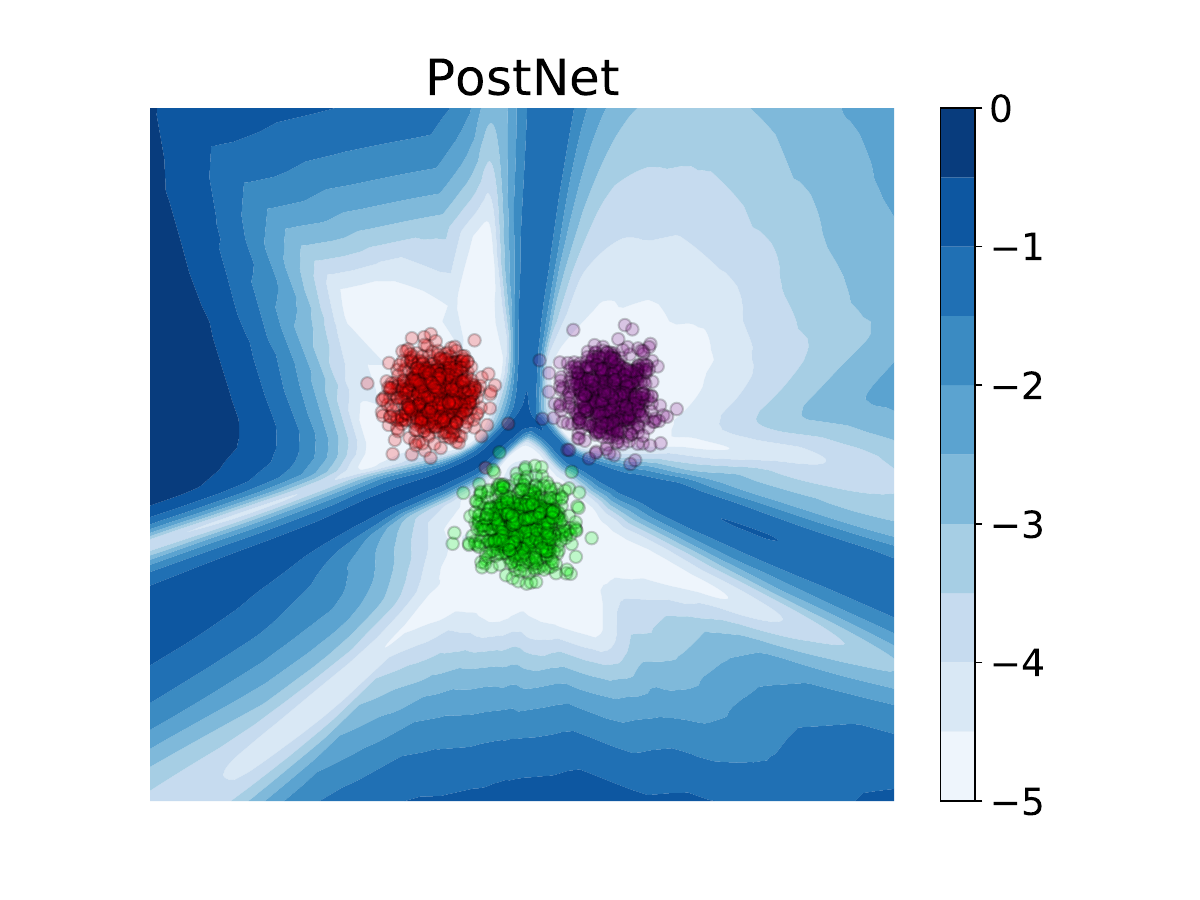}
    }
    \subfloat[MSE + single flow]{%
        \includegraphics[width=0.25\textwidth]{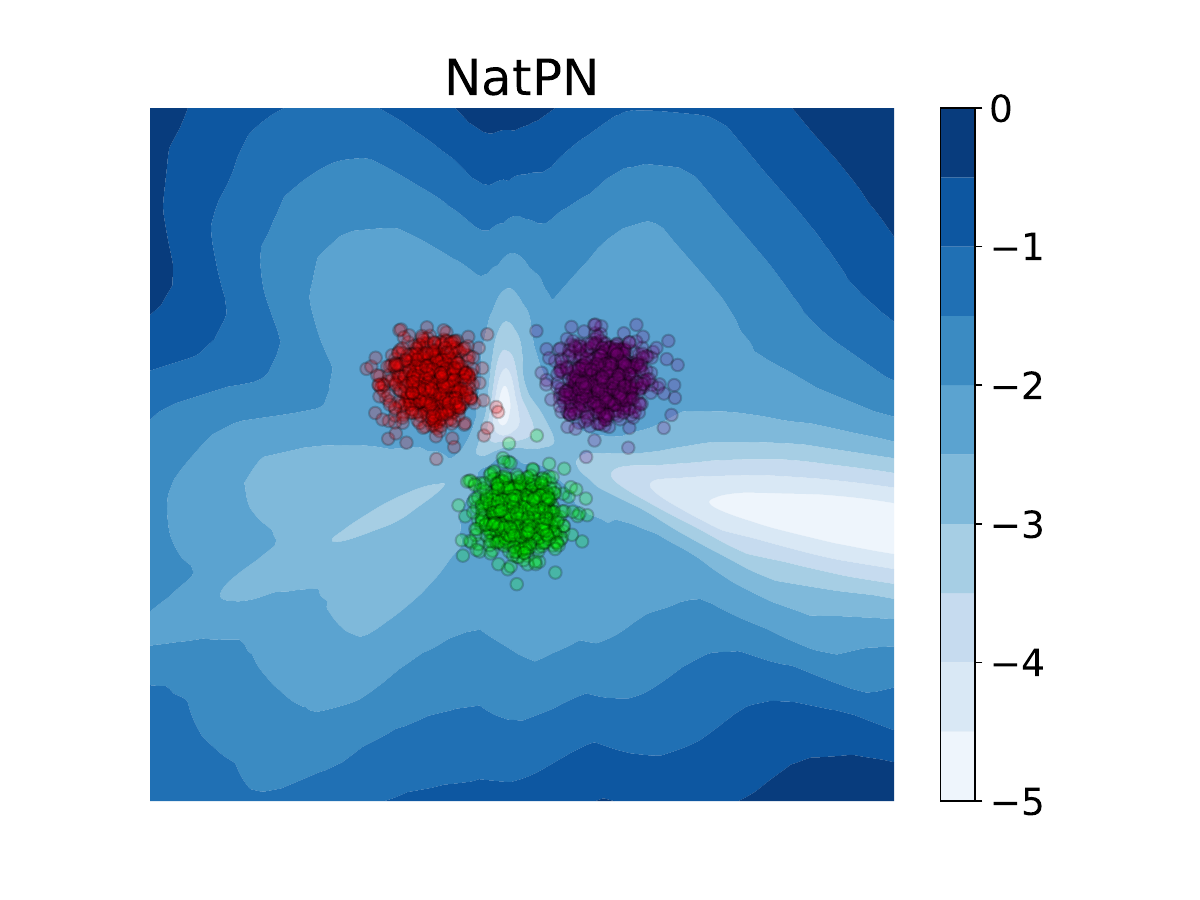}
    }
    \end{tabular}
\caption{\textbf{Visualization of Epistemic Uncertainty on 2D Gaussian Data.} The first row ($a$)-($d$) corresponds to differential entropy quantified by EDL models trained using their original proposed training objective (reverse-KL), and the second row ($e$)-($h$) corresponds to epistemic uncertainty quantified by same EDL methods ablated with MSE objective in \eqref{eq:heuristic}. This ablation study implies that the specific training objectives has no actual impact on the methods’ ability to quantify uncertainty. Other techniques, such as using density estimation (PostNet ($c, g$), NatPN ($d, h$)), and using OOD data during training (RPriorNet ($b, f$)), play a more significant role in EDL methods' UQ performance. Without auxiliary techniques, BM ($a, e$) cannot distinguish in-distribution and OOD regions. Similar behavior holds for total uncertainty quantification (see Fig.~\ref{fig:toy_total}). }
\label{fig:toy_epistemic}
\end{figure*}

\begin{figure*}[!t]
    \centering
    \begin{tabular}{rl}
    \subfloat[r-KL]{%
        \includegraphics[width=0.25\textwidth]{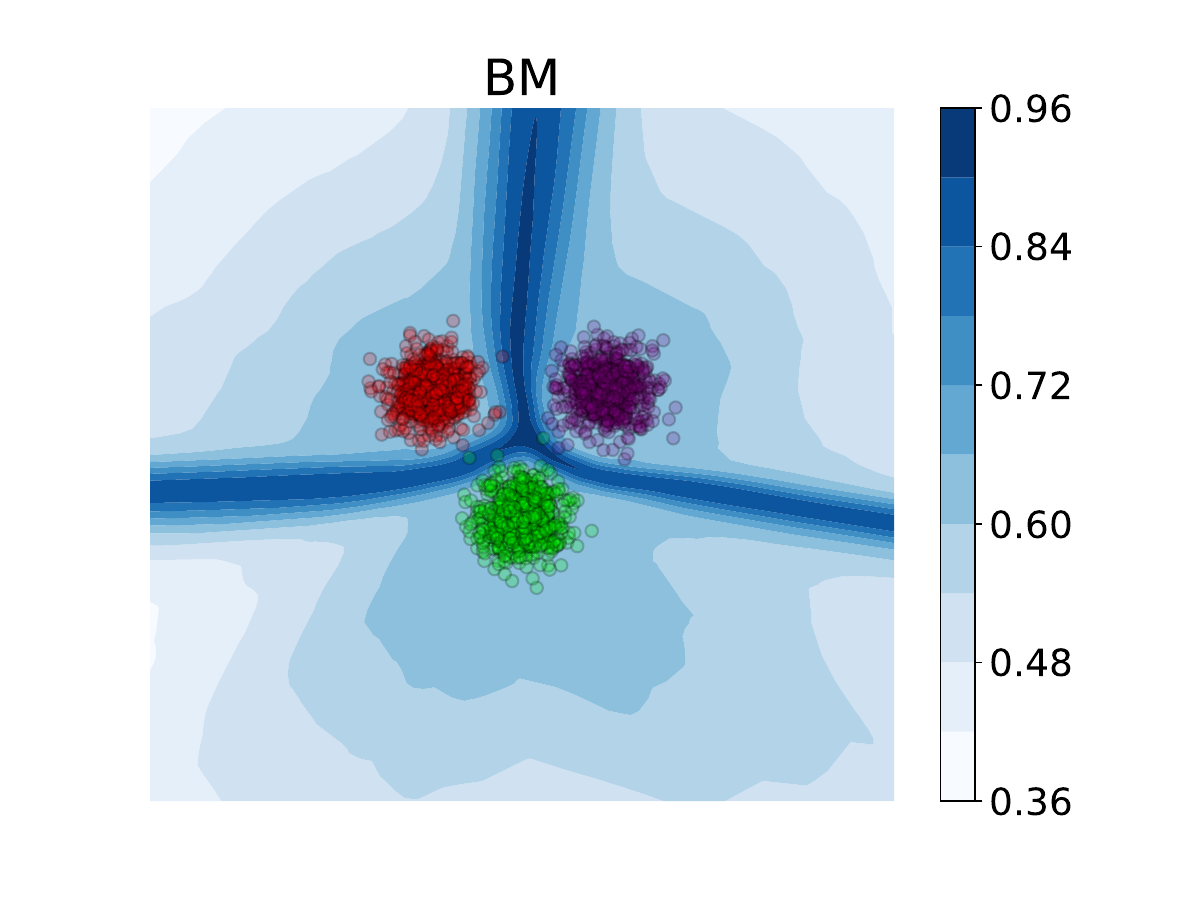}
    }
    \subfloat[r-KL + OOD]{%
        \includegraphics[width=0.25\textwidth]{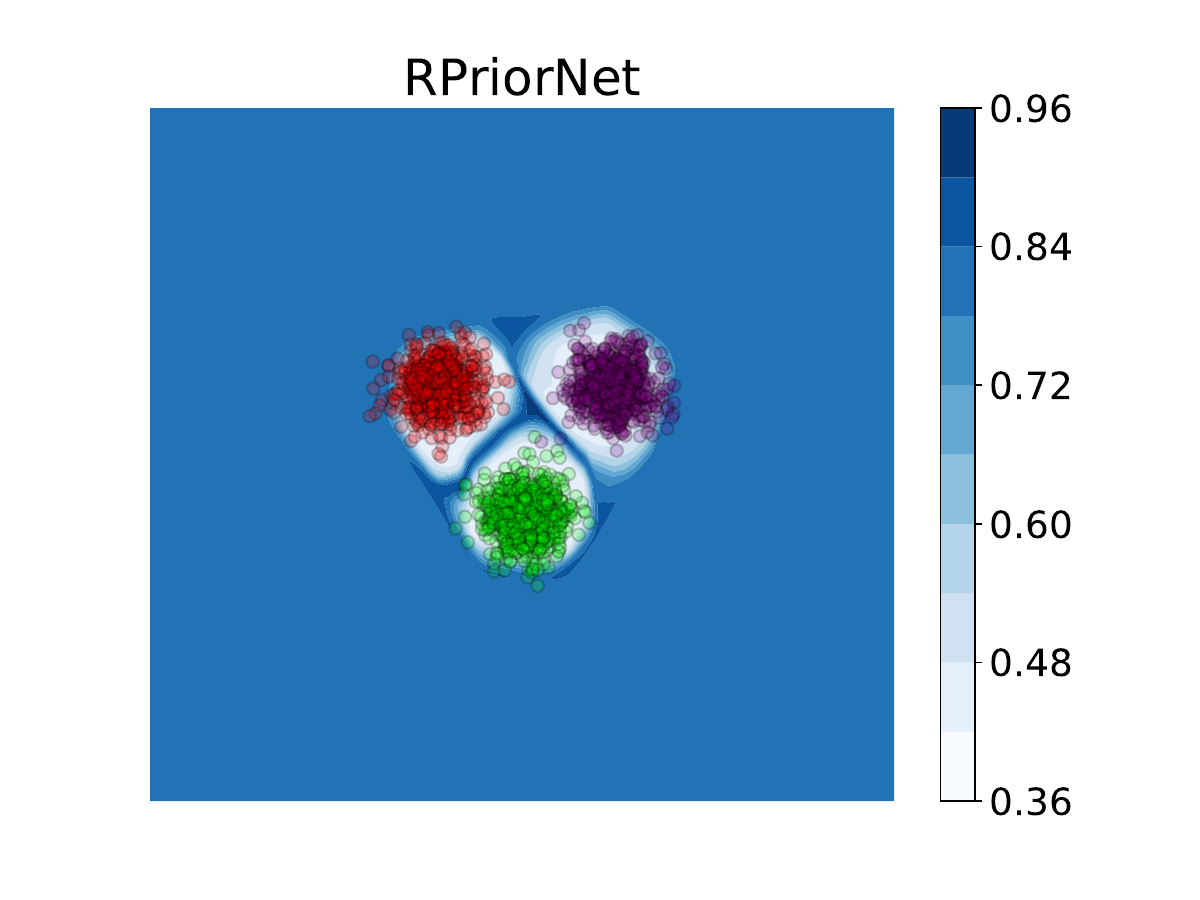}
    }
    \subfloat[r-KL + multiple flow]{%
        \includegraphics[width=0.25\textwidth]{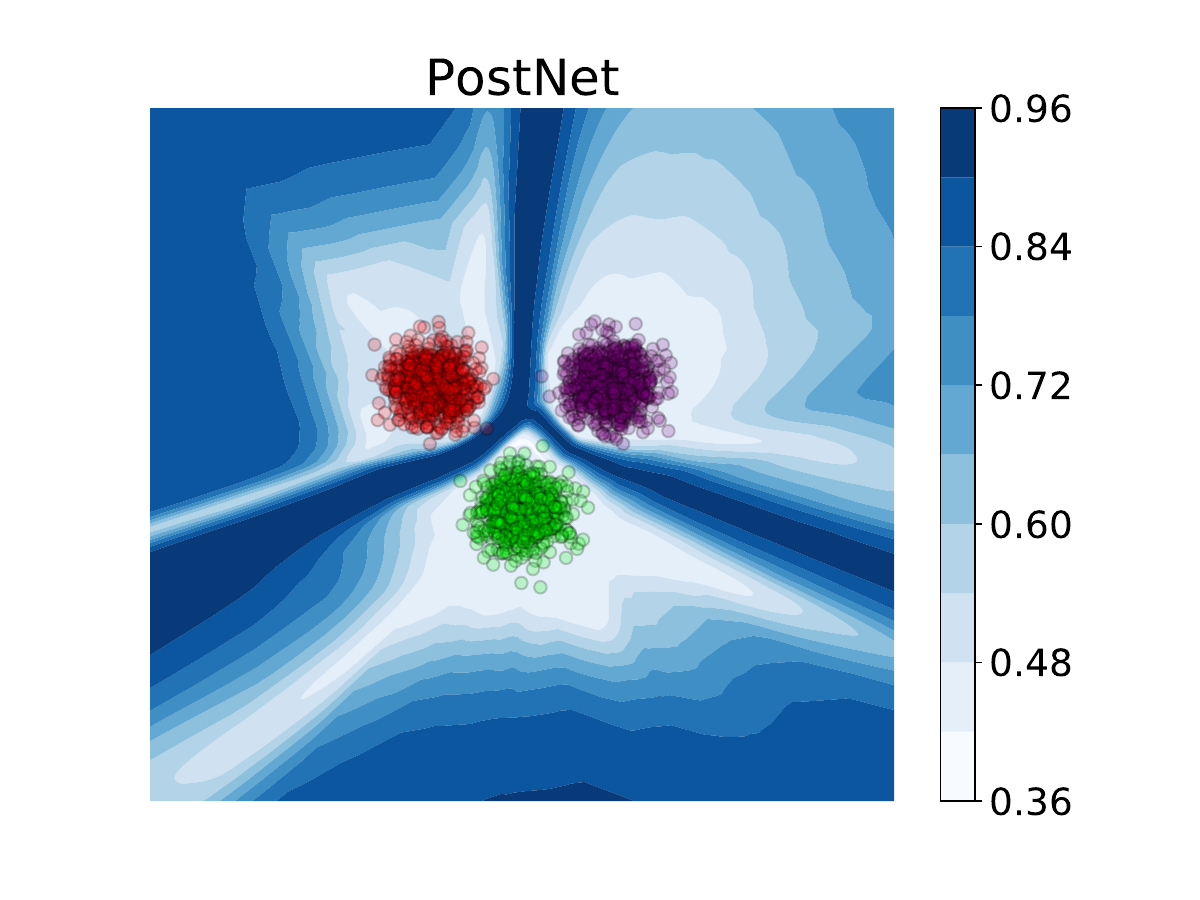}
    }
    \subfloat[r-KL + single flow]{%
        \includegraphics[width=0.25\textwidth]{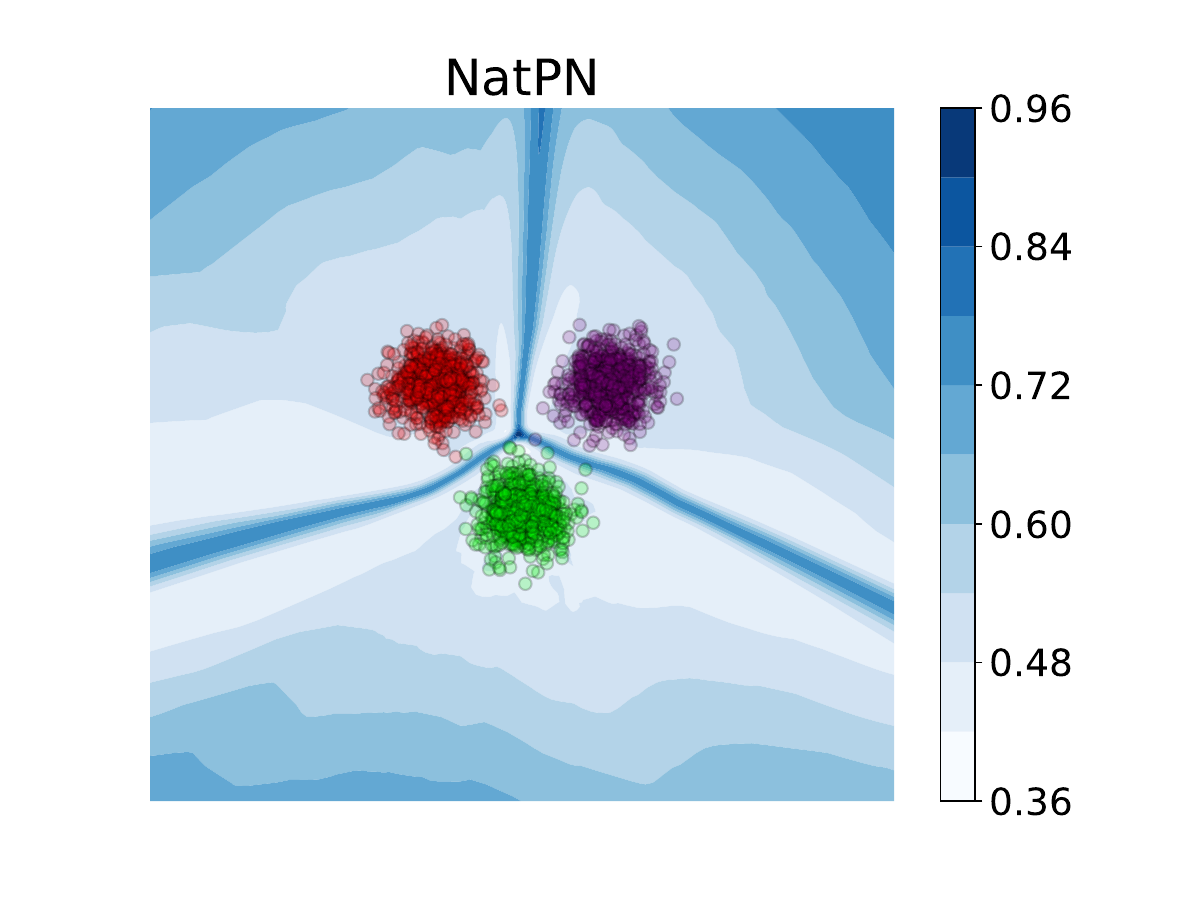}
    }
    \\
    \subfloat[MSE]{%
        \includegraphics[width=0.25\textwidth]{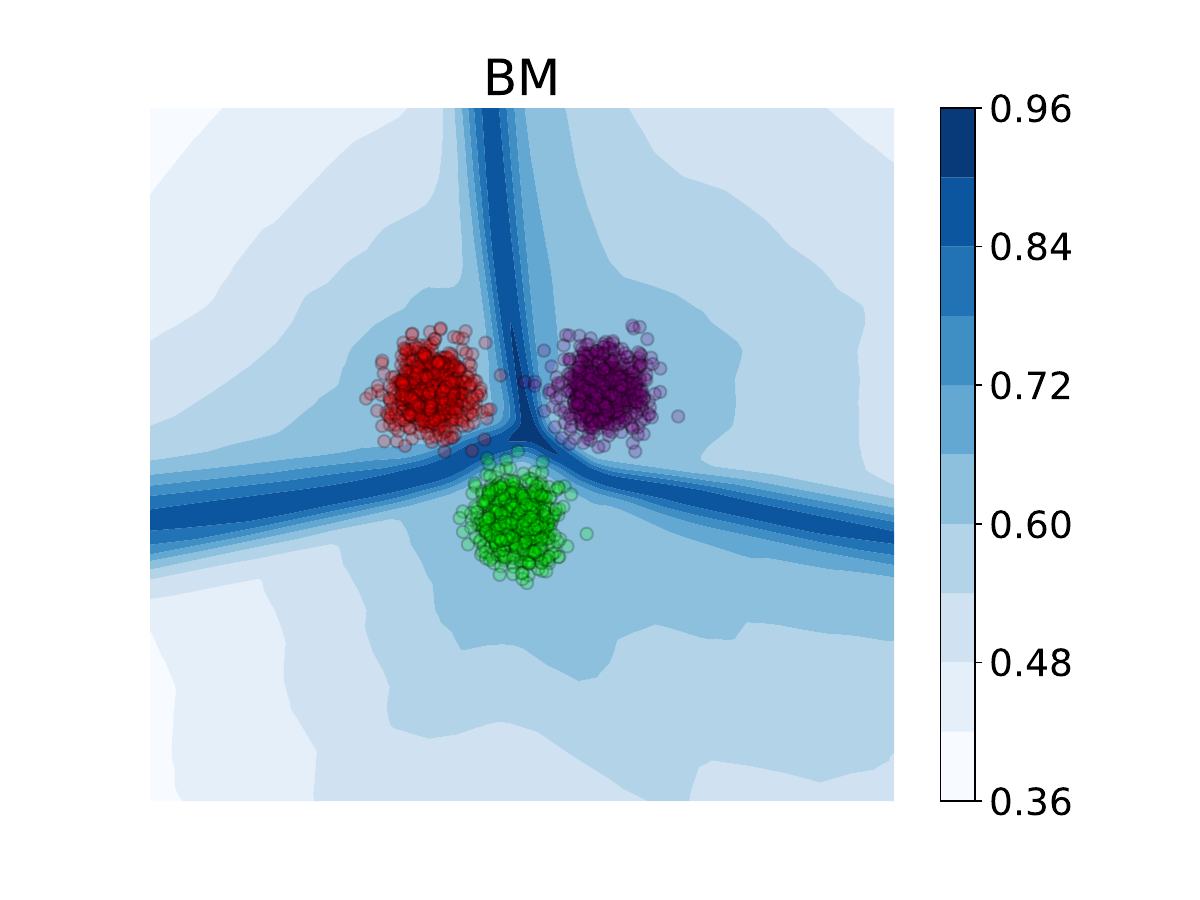}
    }
    \subfloat[MSE + OOD]{%
        \includegraphics[width=0.25\textwidth]{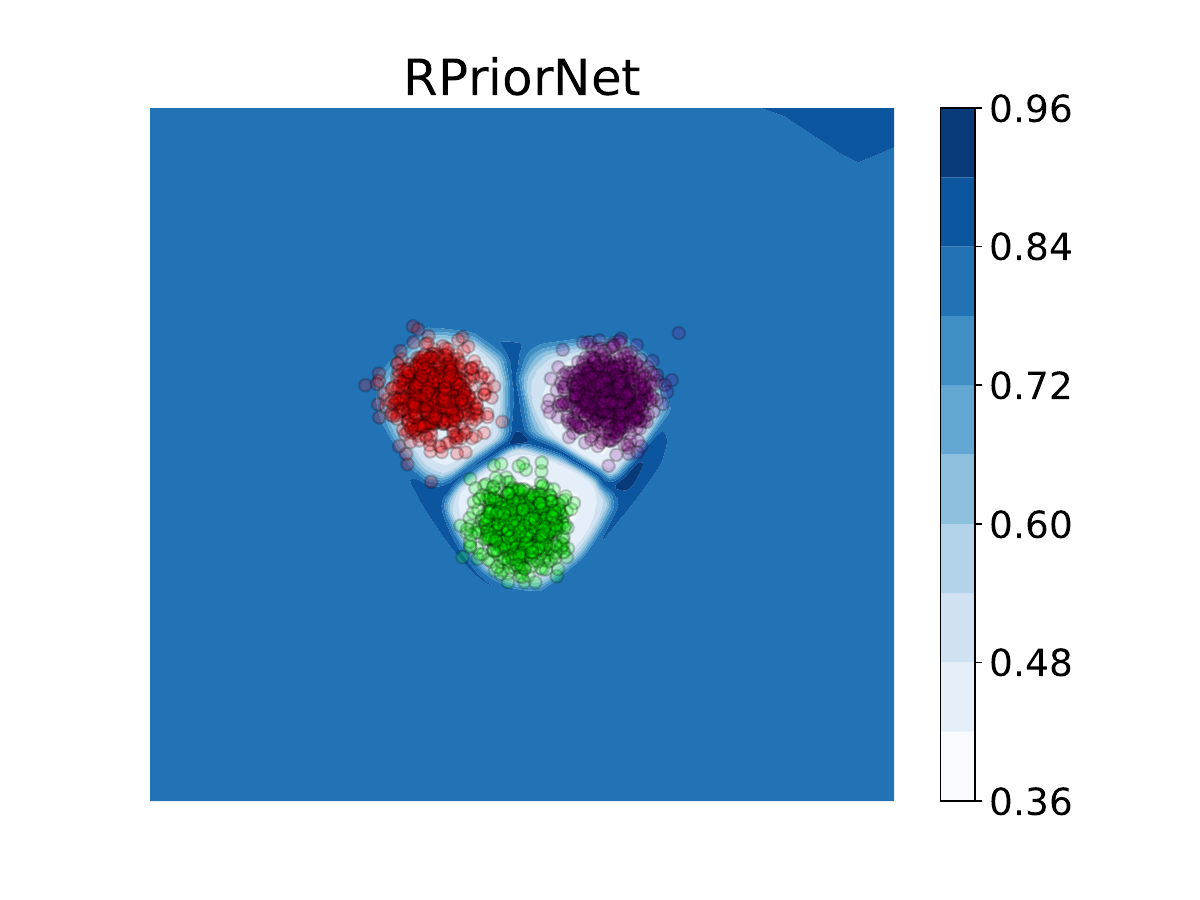}
    }
    \subfloat[MSE + multiple flow]{%
        \includegraphics[width=0.25\textwidth]{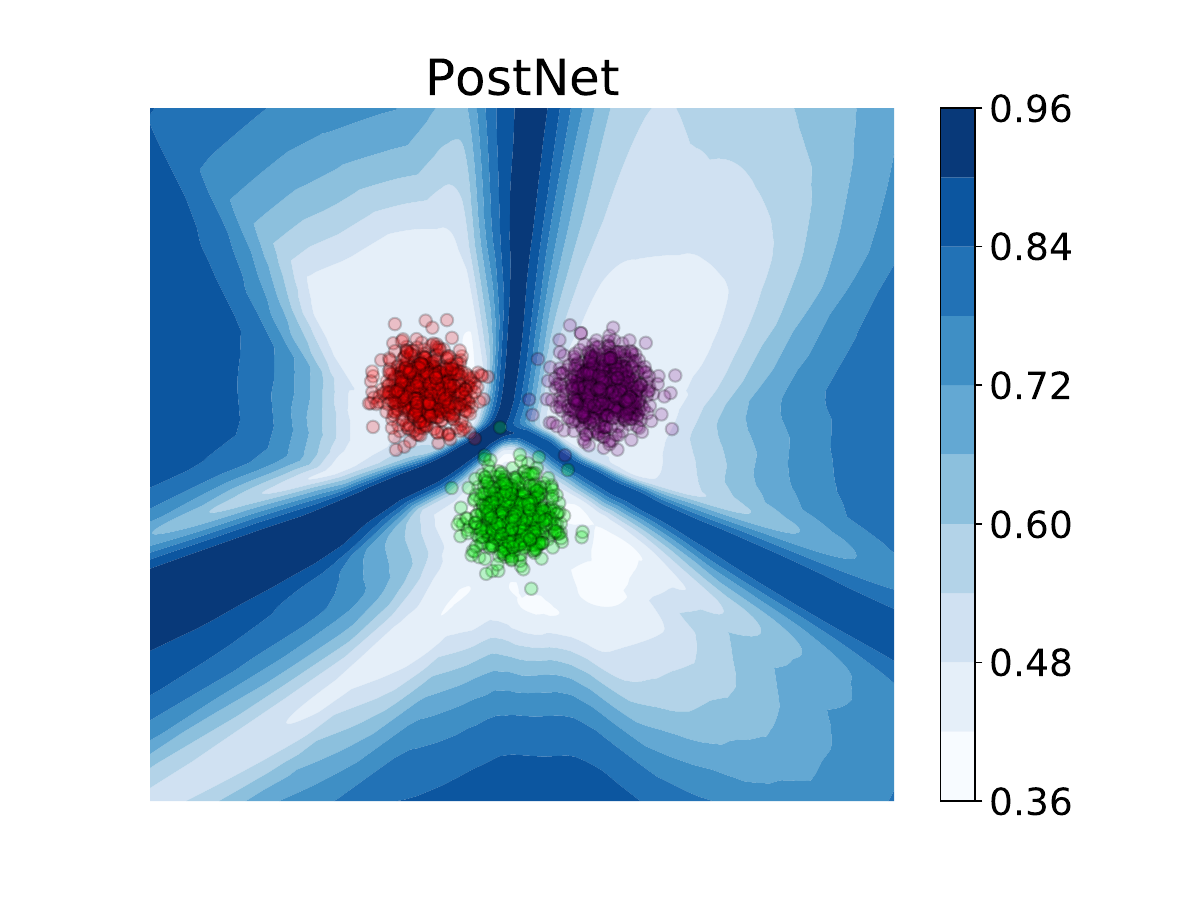}
    }
    \subfloat[MSE + single flow]{%
        \includegraphics[width=0.25\textwidth]{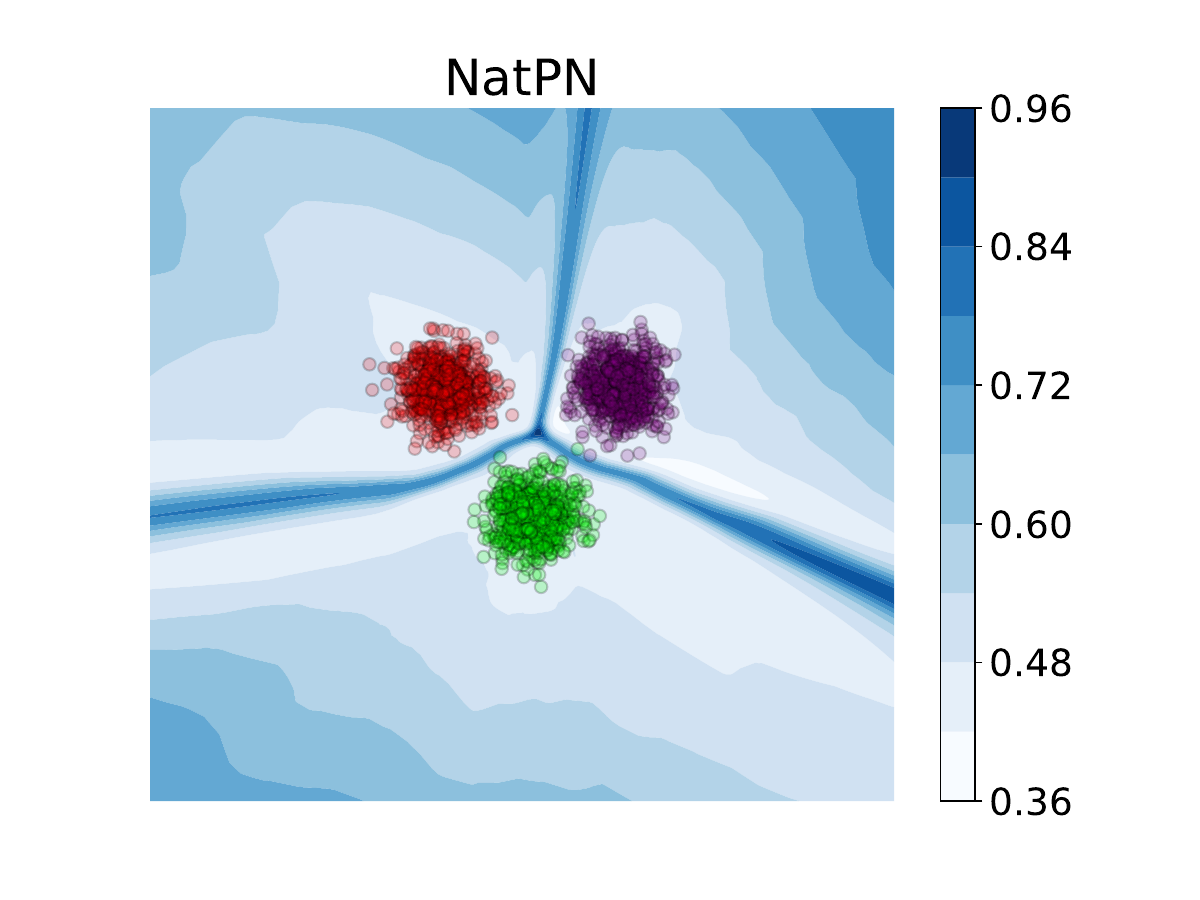}
    }
    \end{tabular}
\caption{\textbf{Visualization of Aleatoric Uncertainty on 2D Gaussian Data.} The first row ($a-d$) corresponds to the aleatoric uncertainty quantified by EDL models trained using their original proposed training objective (reverse-KL), and the second row ($e-h$) corresponds to aleatoric uncertainty quantified by the same EDL methods ablated with MSE objective in the equation~\eqref{eq:heuristic}. Based on these results, We can draw a similar conclusion as Fig.~\ref{fig:toy_epistemic}: specific training objective has no actual impact on the methods’ ability to quantify aleatoric uncertainty either. }
\label{fig:toy_aleatoric}
\end{figure*}

\begin{figure*}[ht]
    \centering
    \begin{tabular}{rl}
    \subfloat[r-KL]{%
        \includegraphics[width=0.25\textwidth]{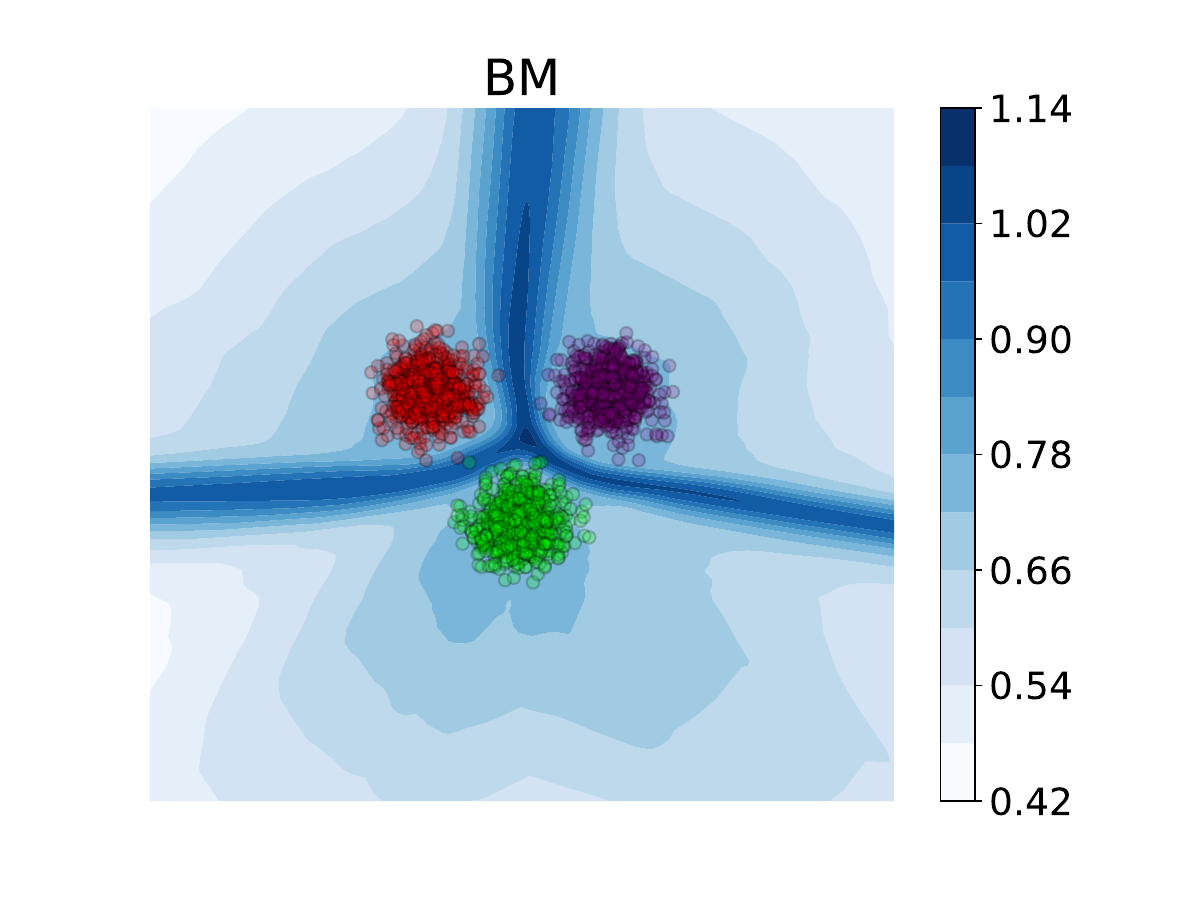}
    }
    \subfloat[r-KL + OOD]{%
        \includegraphics[width=0.25\textwidth]{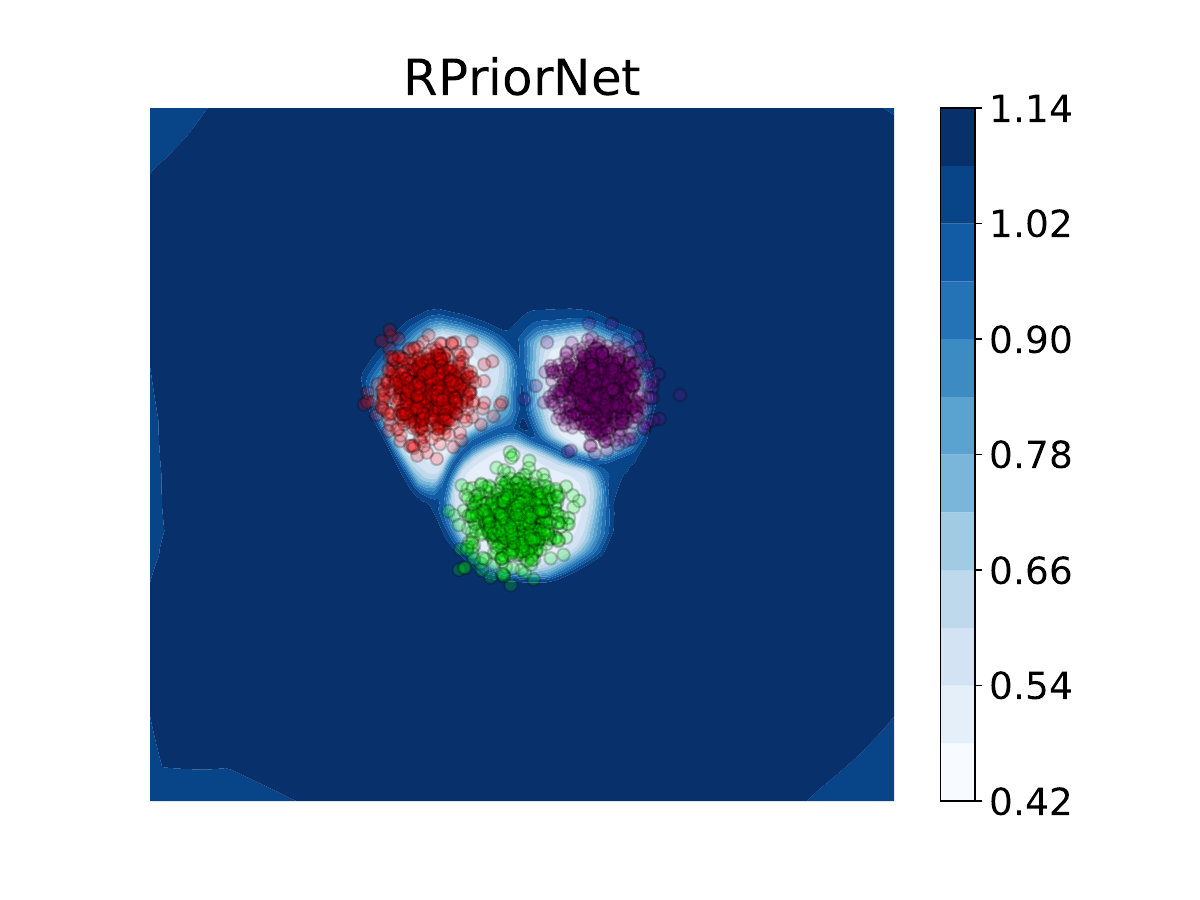}
    }
    \subfloat[r-KL + multiple flow]{%
        \includegraphics[width=0.25\textwidth]{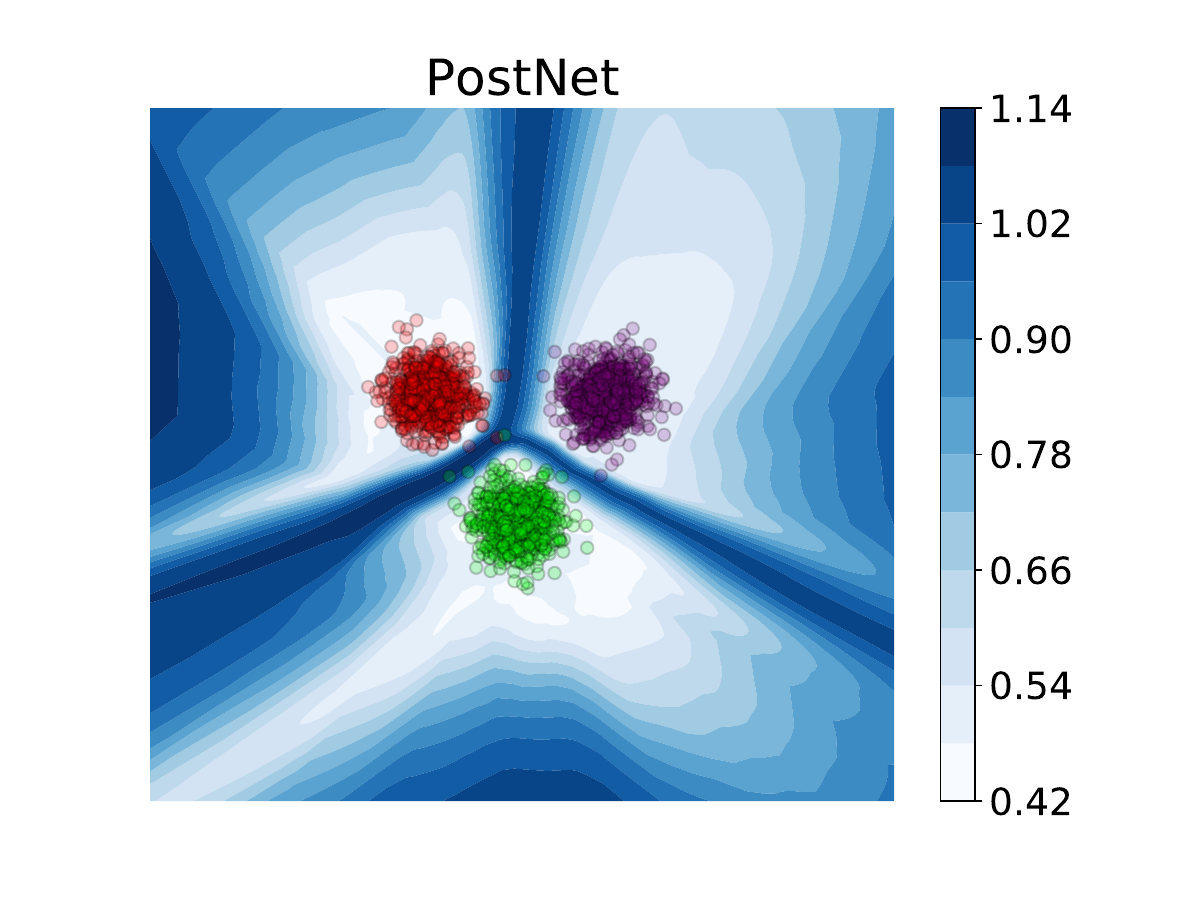}
    }
    \subfloat[r-KL + single flow]{%
        \includegraphics[width=0.25\textwidth]{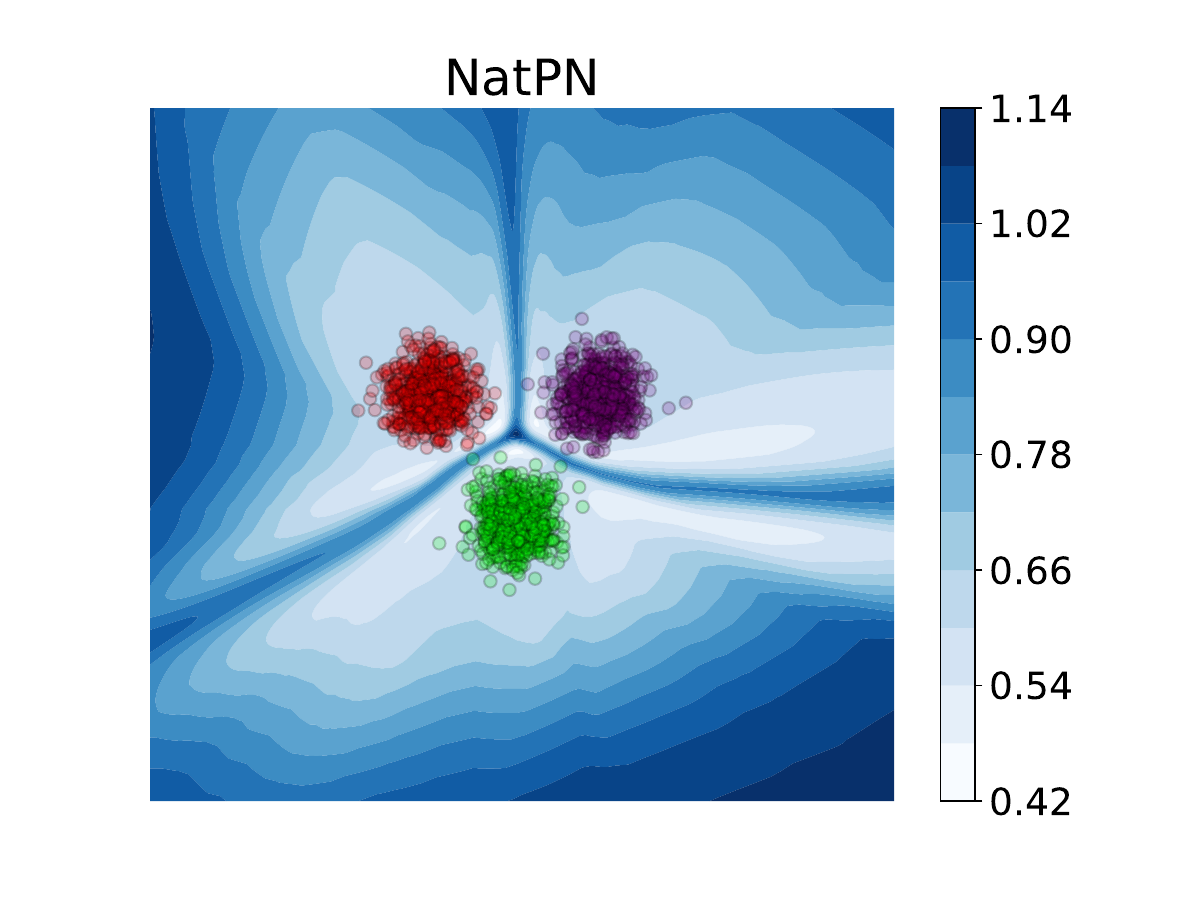}
    }
    \\
    \subfloat[MSE]{%
        \includegraphics[width=0.25\textwidth]{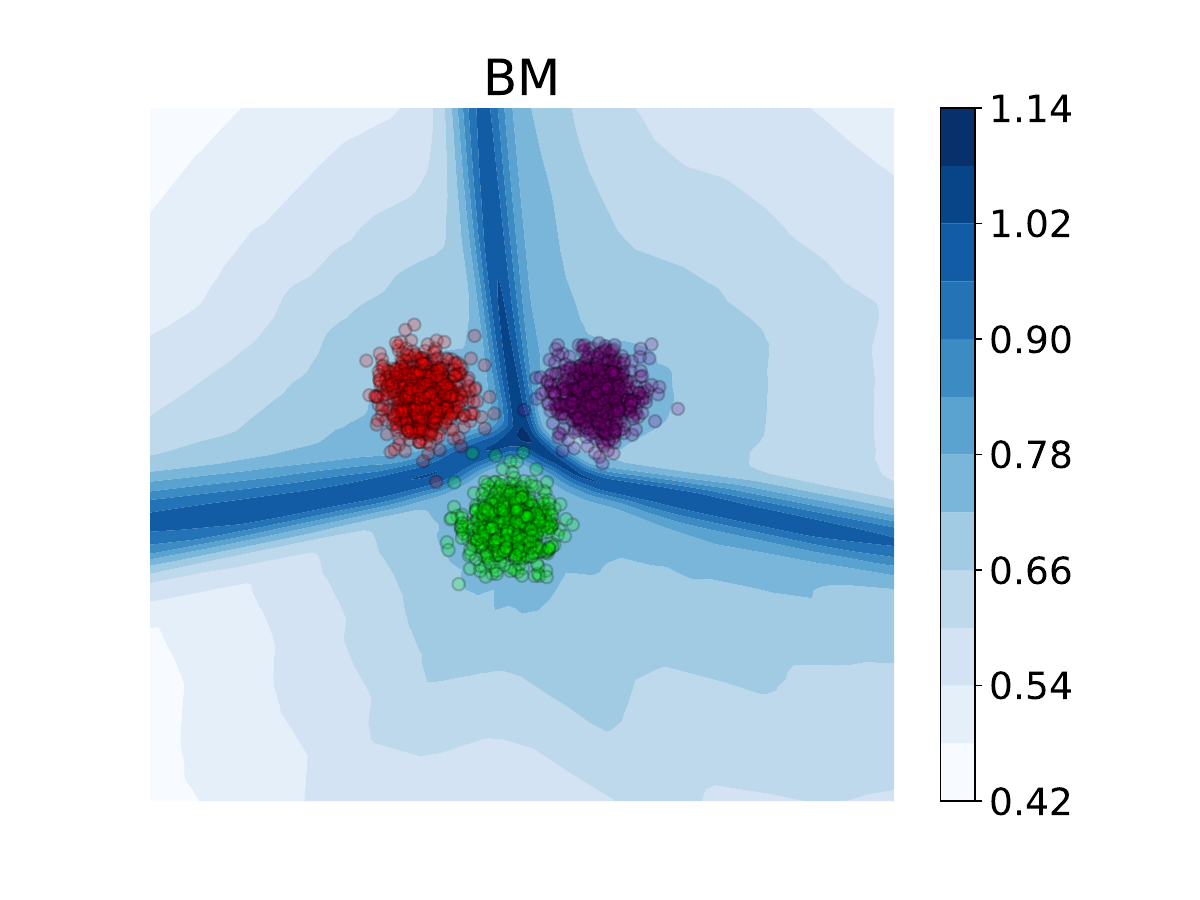}
    }
    \subfloat[MSE + OOD]{%
        \includegraphics[width=0.25\textwidth]{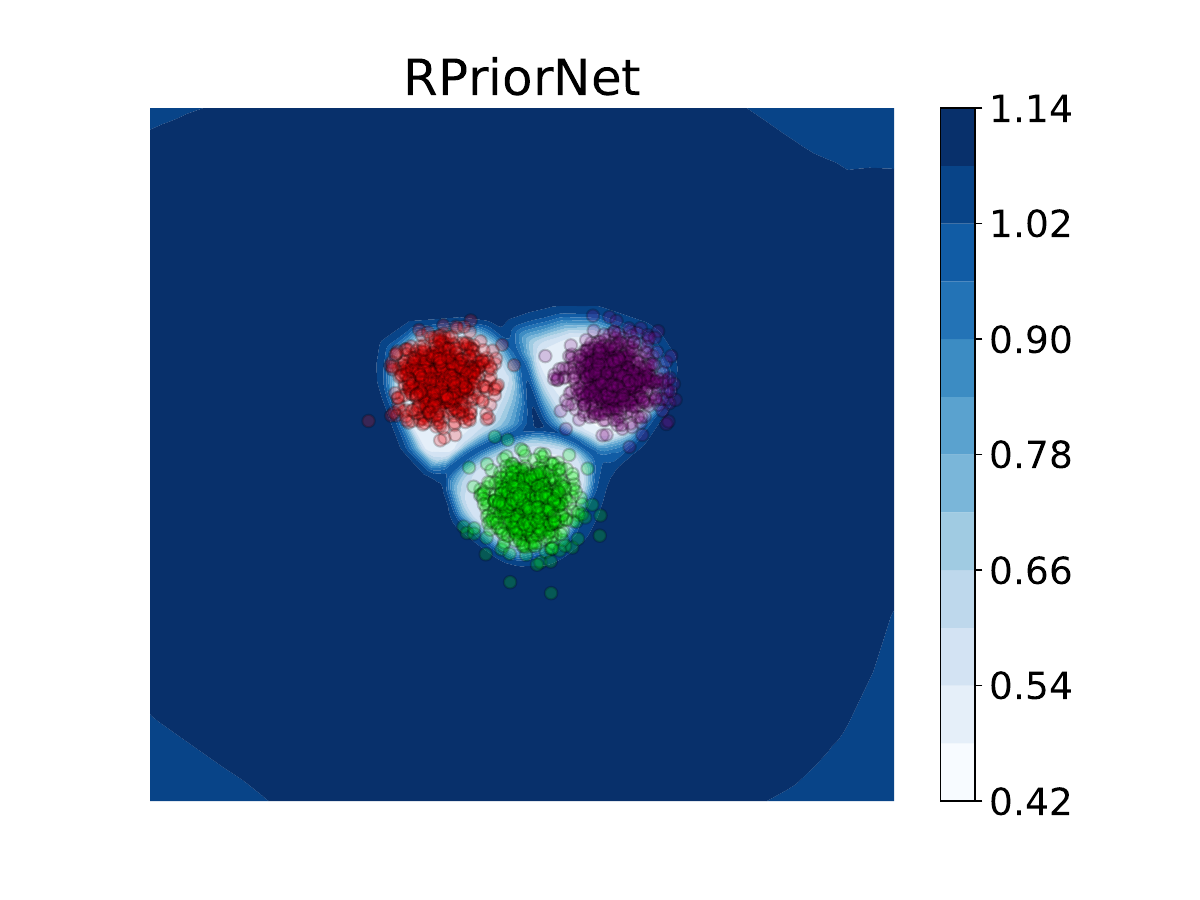}
    }
    \subfloat[MSE + multiple flow]{%
        \includegraphics[width=0.25\textwidth]{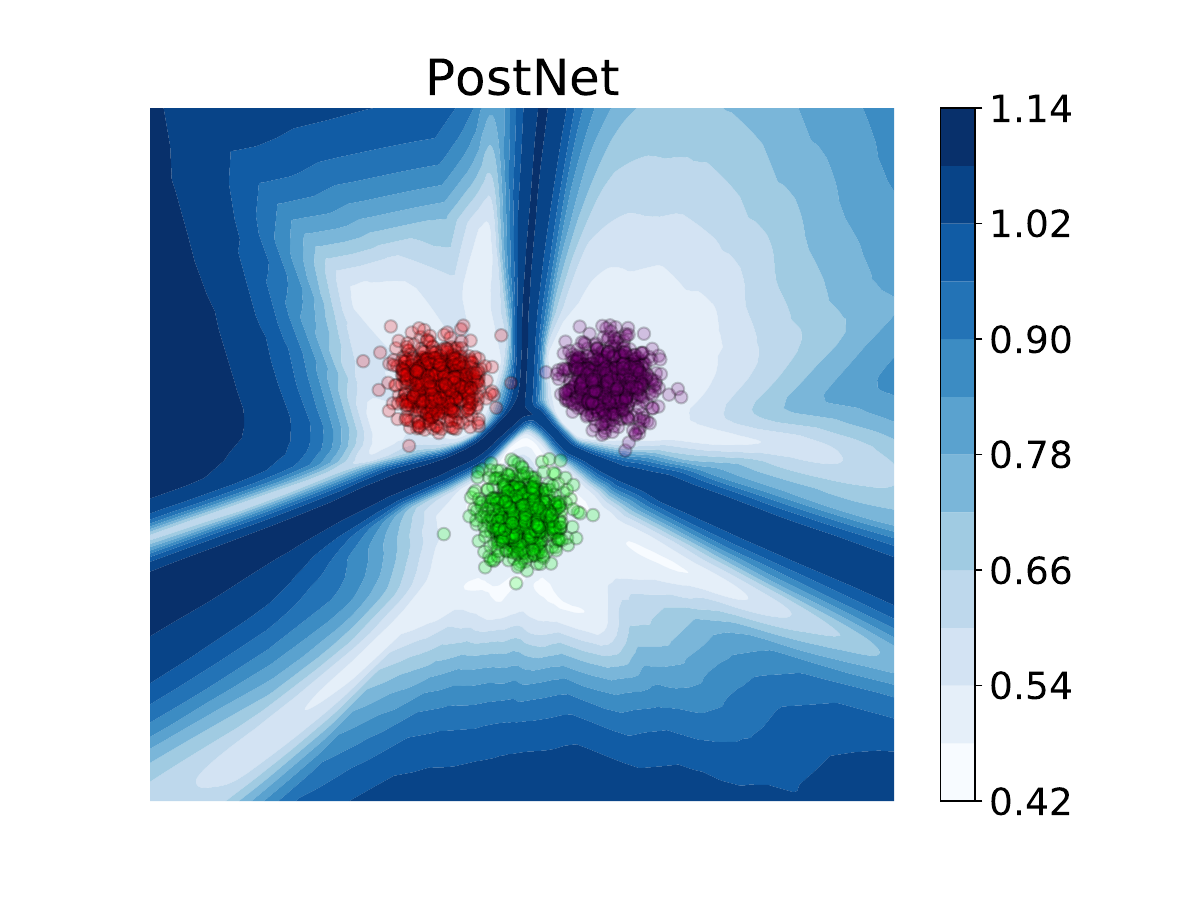}
    }
    \subfloat[MSE + single flow]{%
        \includegraphics[width=0.25\textwidth]{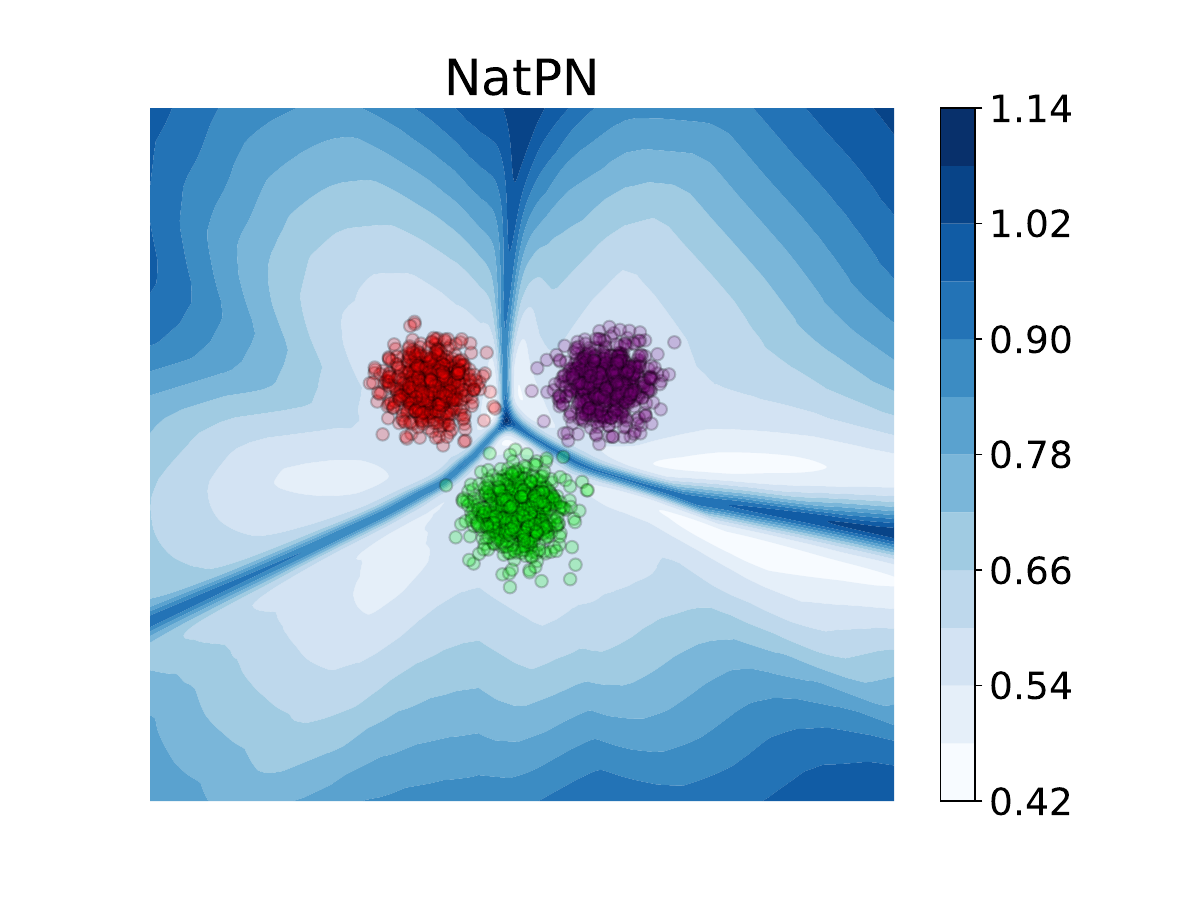}
    }
    \end{tabular}
\caption{\textbf{Visualization of Total Uncertainty on 2D Gaussian Data.} Total uncertainty is measured using entropy. We can draw a similar conclusion as Fig.~\ref{fig:toy_epistemic} and Fig.~\ref{fig:toy_aleatoric}.}
\label{fig:toy_total}
\end{figure*}

\begin{figure*}[ht]
    \centering
     \includegraphics[width=0.9\textwidth]{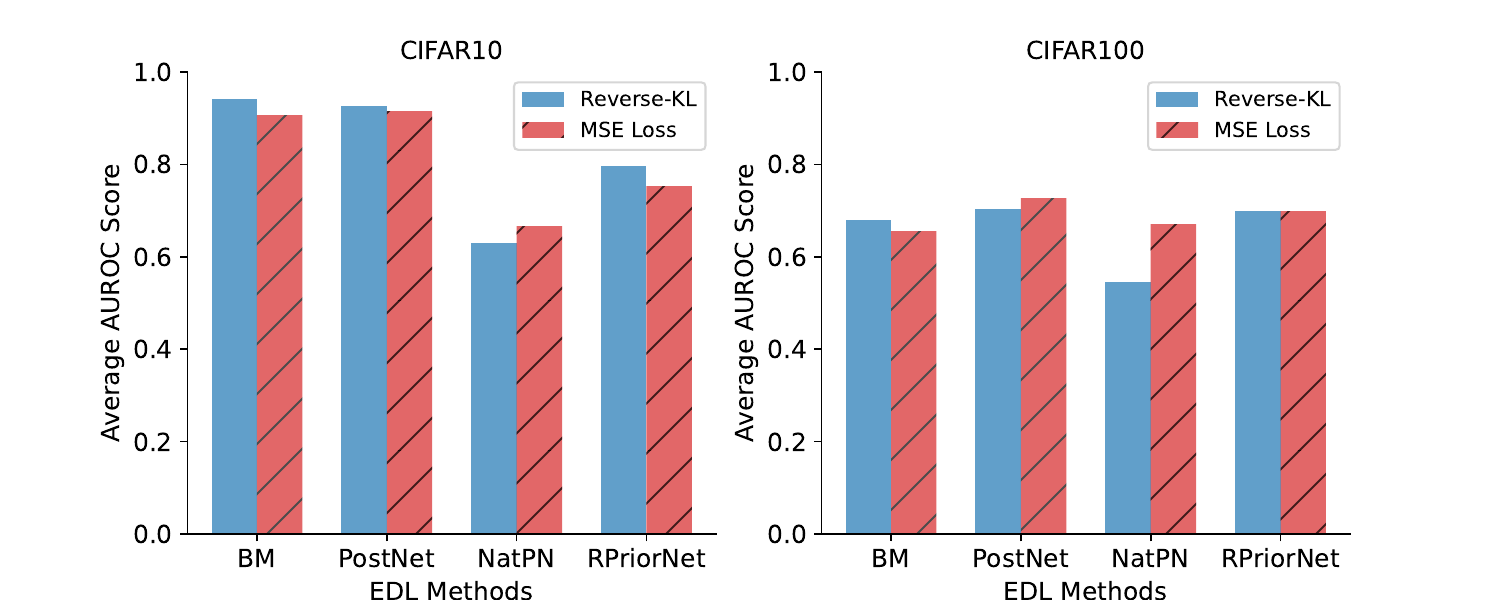}
\caption{\textbf{Comparison of OOD Detection Performance of EDL methods with Original Objective v.s. MSE Objective.} The EDL models trained using MSE objective in \eqref{eq:heuristic} demonstrate comparable performance. These results further justify that the specific training objective has no actual impact on the downstream task performance.}
\label{fig:ablation}
\end{figure*}

\begin{figure*}[!t]
    \centering
     \includegraphics[width=0.9\textwidth]{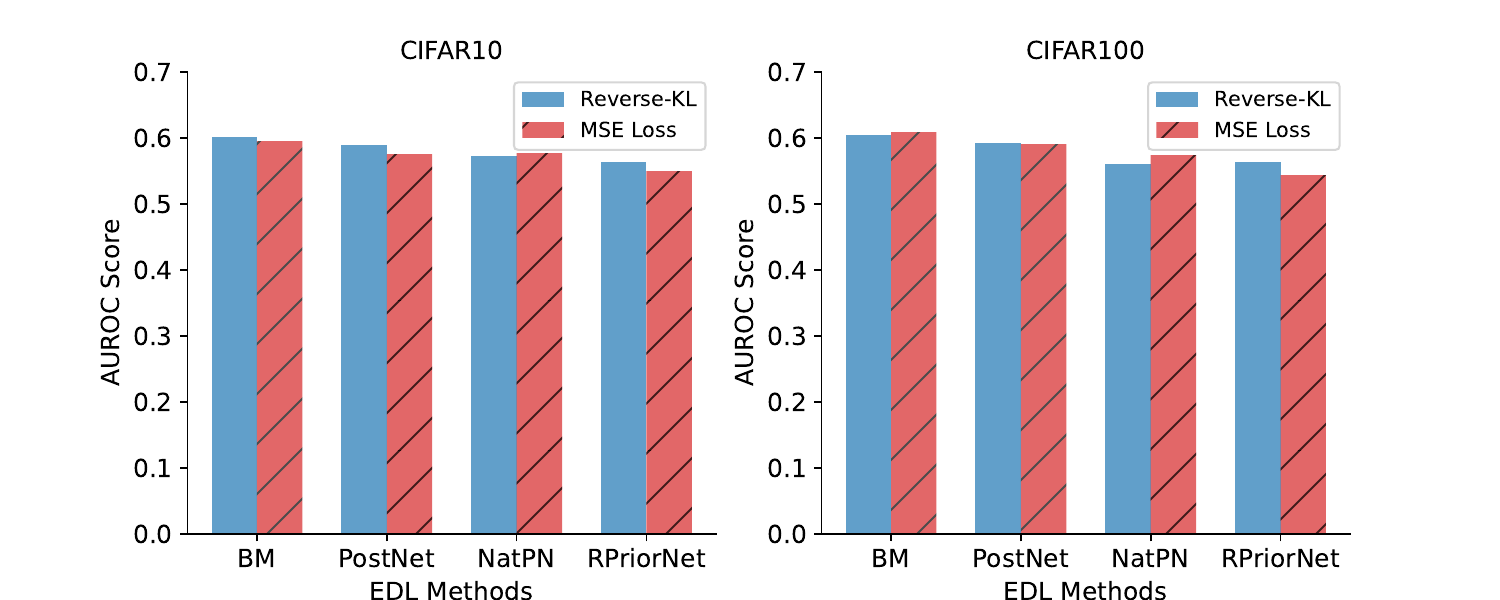}
\caption{\textbf{Comparison of Ambiguous Data Detection Performance of EDL methods with Original Objective v.s. MSE Objective.} We can draw a similar conclusion as Fig.~\ref{fig:ablation}.}
\label{fig:ablation_aleatoric}
\vspace{-1em}
\end{figure*}

There could be many other Dirichlet-framework-independent objectives that can capture similar behavior. For instance, we can consider a simple variant of MSE loss as follows:
\begin{align}
\min_\psi 
\E_{p(x,y)}[
\|\log(\av_\psi(x))- \log(\av_0+{\lambda}^{-1}\ev_y)\|^2].
\label{eq:heuristic}
\end{align}
The global minimum of optimizing this heuristic loss function can be characterized by $\log(\av_{\psi^\star}(x)) = \E_{p(y|x)}[\log(\av_0+{\lambda}^{-1}\ev_y)]$, which implies $\av_\psi(x)\approx \av_0+{\lambda}^{-1}\etav(x)$ under the assumption of a sharp true label distribution $p(y|x)$. 
Independent of the training procedure, we can artificially define $-\log\ones_C^\intercal\av_{\psi}(x)$ as an epistemic uncertainty metric, while the entropy of normalized model output $\catent(\av_\psi(x)/\ones_C^\intercal\av_{\psi}(x))$ can represent total uncertainty.

We conduct an ablation study by replacing the reverse-KL objective with the MSE objective in \eqref{eq:heuristic} for model training. First, we provide visualization of epistemic uncertainty, aleatoric uncertainty, and total uncertainty on 2D Gaussian data quantified by different EDL methods in Fig.~\ref{fig:toy_epistemic}, Fig.~\ref{fig:toy_aleatoric}, and Fig.~\ref{fig:toy_total}, respectively. Next, we extend the ablation study to OOD detection (using epistemic uncertainty) and ambiguous data detection (using aleatoric uncertainty) tasks using real data in Fig.~\ref{fig:ablation} and Fig.~\ref{fig:ablation_aleatoric}, respectively. The results indicate that EDL methods exhibit similar behavior by using reverse KL or MSE objectives.

\subsection{Ablation Study in Sec.~\ref{subsec:pitfalls}}
\label{app:subsec:pitfalls}
\subsubsection*{Empirical Finding 1. OOD-Data-Dependent Methods are Sensitive to Model Architectures}
\begin{figure*}[ht]
    \centering
     \includegraphics[width=1.0\textwidth]{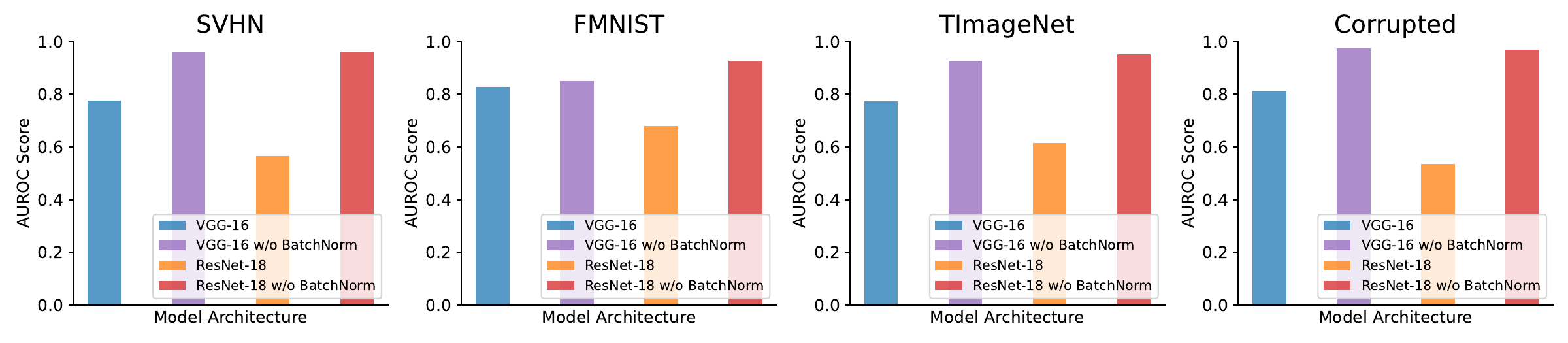}
\caption{\textbf{OOD Detection Performance of RPriorNet with Different Model Architecture}. RPriorNet is trained using CIFAR10 as in-distribution data and CIFAR100 as OOD data. The OOD detection performance (showing four OOD datasets) is sensitive to the choice of model architecture.}
\label{fig:prior_architecture}
\end{figure*}
Firstly, we remark on the conflicting empirical findings regarding the Prior Network~\citep{Malinin--Gales2018, Malinin--Gales2019} as reported in current EDL literature. Baseline results of~\citep{Malinin--Gales2019} on the OOD detection task, reproduced by \citep{Charpentier--Zugner--Gunnemann2020, Deng--Chen--Yu--Liu--Heng2023}, deviate a lot from the original paper's results. Upon closer investigation, we can attribute these conflicting results to nuances in different model architectures, i.e., the use of BatchNorm has a significant impact when incorporating OOD data during training. Specifically, we conduct an ablation study on the model architectures of RPriorNet, using VGG-16 and ResNet as backbone models, and compare their performance with and without BatchNorm layers removed. As illustrated in Figures~\ref{fig:prior_architecture}, the OOD detection performance of RPriorNet shows sensitivity to the model architecture, with the removal of BatchNorm layers leading to performance improvement. This observation further reflects the limitations of utilizing OOD data during training, particularly for large-scale models in which BatchNorm is usually employed.

\subsubsection*{Empirical Finding 2. Density Models May Not Perform Well with Moderate Dimensionality}
\begin{figure*}[ht]
    \centering
    \begin{tabular}{rl}
    \subfloat[SVHN]{%
        \includegraphics[width=0.25\textwidth]{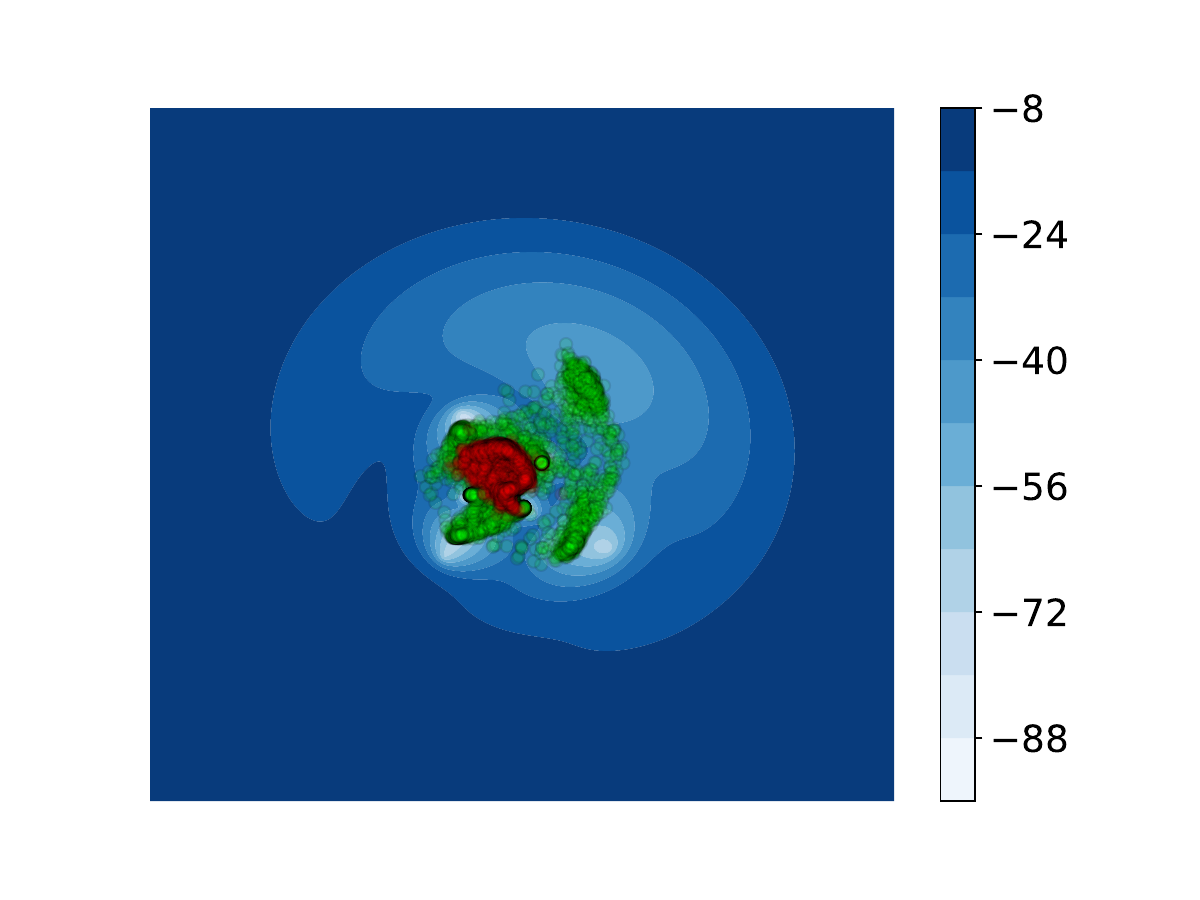}
    }
    \subfloat[FMNIST]{%
        \includegraphics[width=0.25\textwidth]{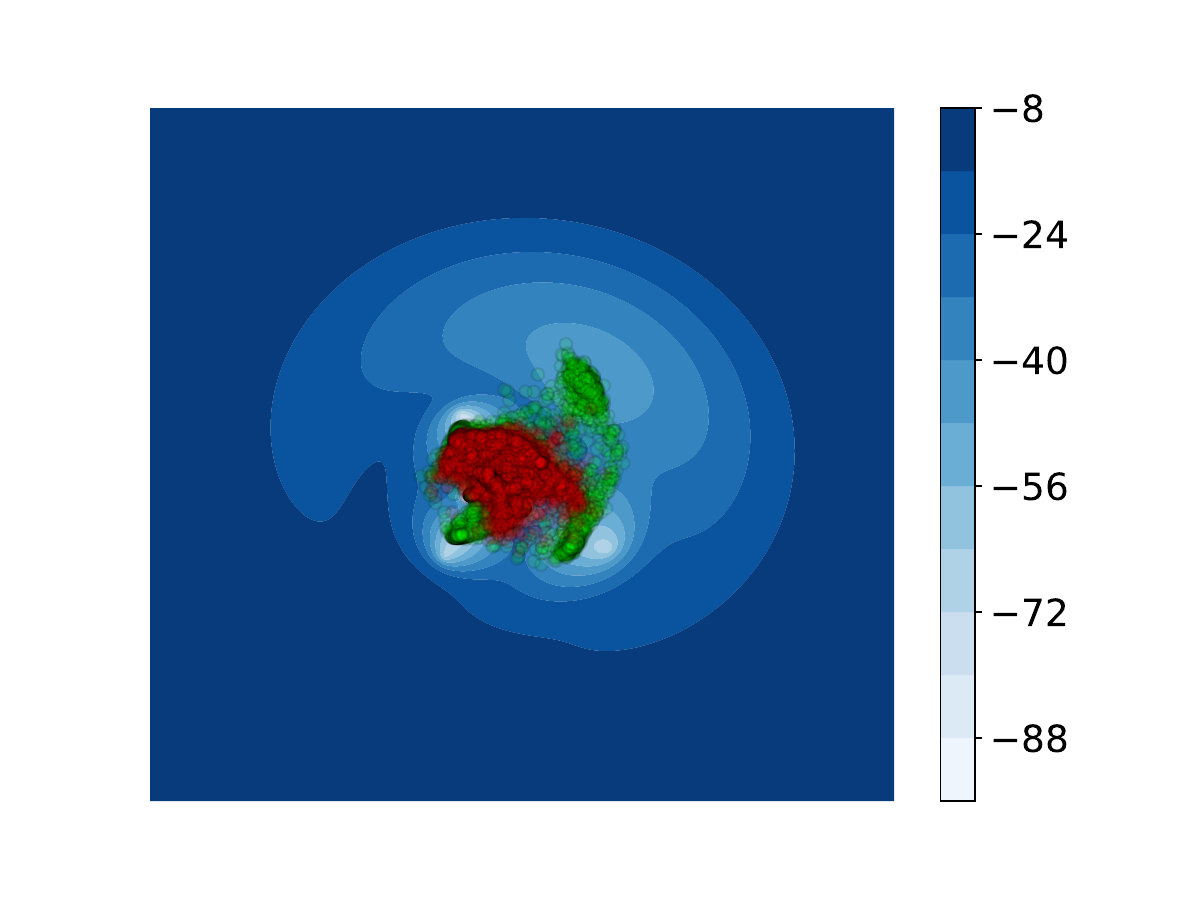}
    }
    \subfloat[TImageNet]{%
        \includegraphics[width=0.25\textwidth]{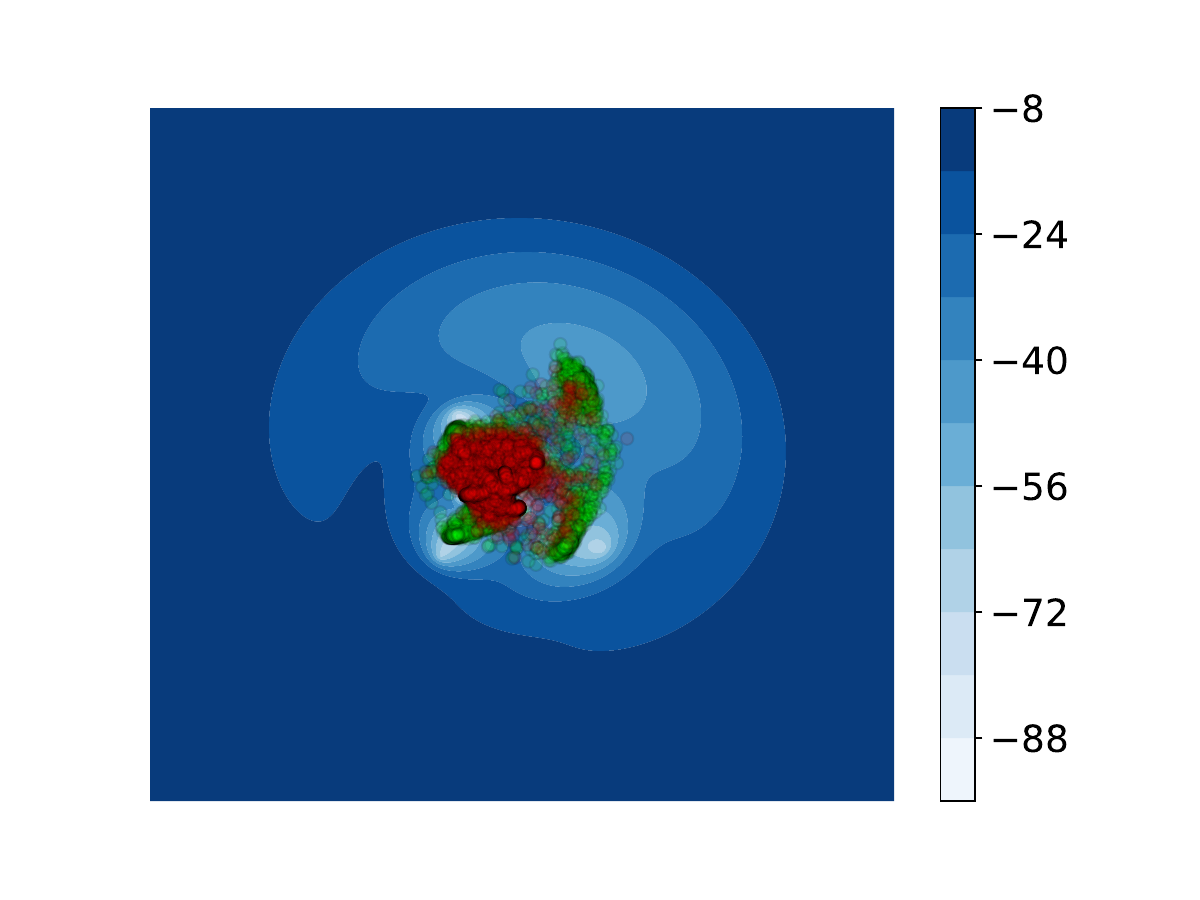}
    }
    \subfloat[Corrupted]{%
        \includegraphics[width=0.25\textwidth]{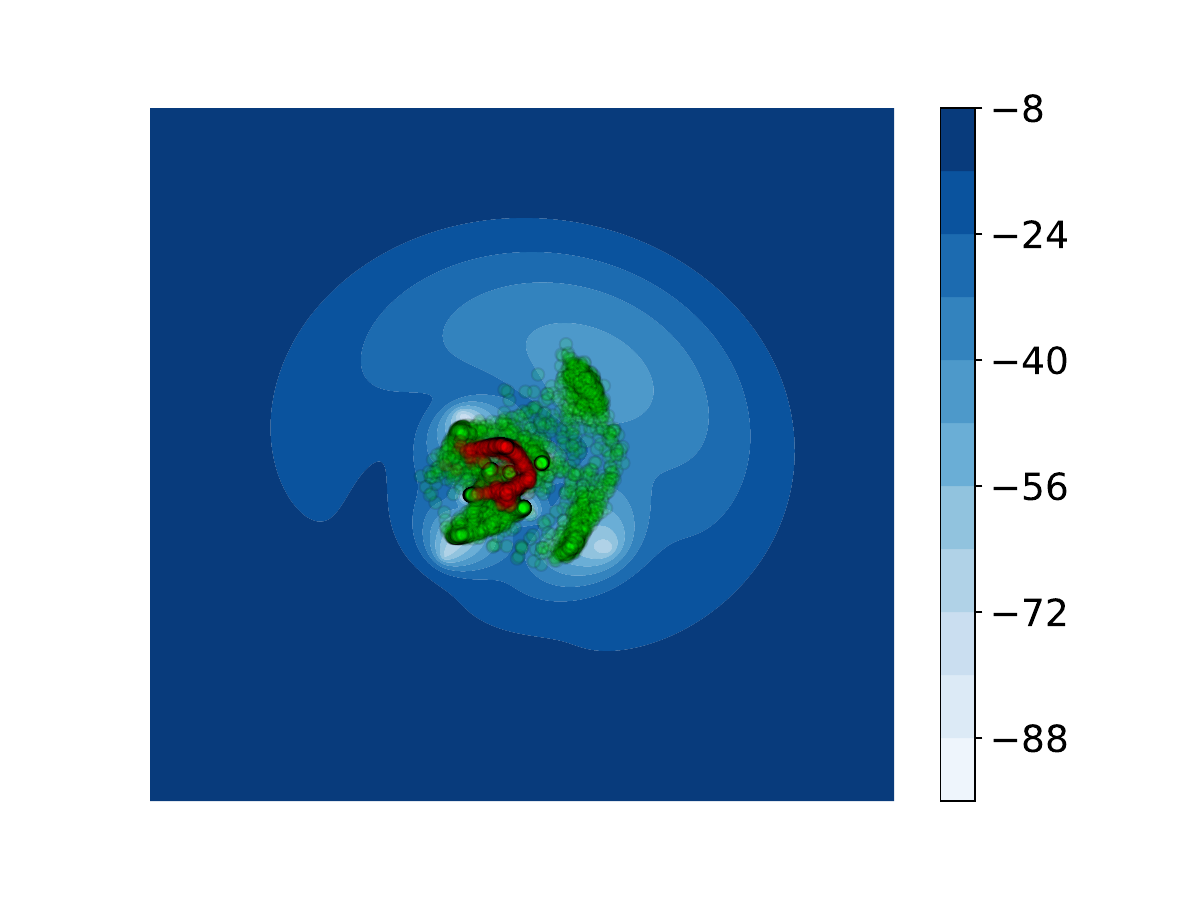}
    }
    \end{tabular}
\caption{\textbf{Visualization of Epistemic Uncertainty in PostNet's 2D Latent Feature Space.} PostNet leverages a flow-based density estimation model, which outputs low (high) uncertainty for in-distribution (OOD) regions as expected. However, given that the input data is high-dimensional, PostNet suffers from the feature collapse issue that mapping \textcolor{red}{\textbf{OOD data}} to the same region as \textcolor{ForestGreen}{\textbf{ID data}} in the latent space, making them indistinguishable.}
\label{fig:PostNet_visualize}
\end{figure*}
\citet{Charpentier--Zugner--Gunnemann2020, Charpentier--Borchert--Zugner--Geisler--Gunnemann2022} claim that utilizing density parametrization can effectively address the OOD detection task without the need to observe OOD data during training. This seems to be intuitive, since a perfect density estimator should be capable of identifying OOD data, as OOD data are typically from low-density regions. However, density estimation itself may encounter challenges, particularly in high-dimensional spaces. To validate this conjecture, we train PostNet~\citep{Charpentier--Zugner--Gunnemann2020} with a latent feature dimension of $2$. As illustrated in Fig.~\ref{fig:PostNet_visualize}, we visualize the epistemic uncertainty (measured by differential entropy) in the latent feature space and plot the features of both in-distribution and OOD data. Despite the density model effectively generating higher uncertainty for OOD regions, the OOD data are mapped to the same region as the in-distribution data. This phenomenon, known as model collapse~\citep{Van--Smith--Jesson--Key--Gal2021}, limits the model's capability to distinguish between in-distribution and OOD data.

\subsection{Bootstrap Distillation Method Faithfully quantify Epistemic Uncertainty}
The comparison of epistemic uncertainty quantified by existing EDL methods and our proposed Bootstrap Distillation method is shown in Fig.~\ref{fig:bootstrap_epistemic}. Compared to existing EDL methods, our proposed Bootstrap Distillation method demonstrates monotonically decreasing and eventually vanishing uncertainty, consistent with the dictionary definition of epistemic uncertainty.
\begin{figure*}[ht]
    \centering
     \includegraphics[width=1.0\textwidth]{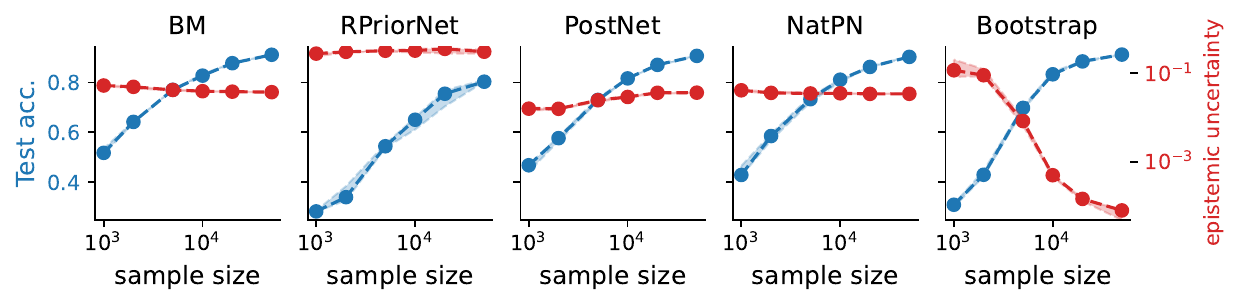}
\caption{\textbf{Epistemic Uncertainty and Test Accuracy v.s. Number of Training Data.} Compared to other EDL methods, our proposed Bootstrap Distillation method can faithfully quantify epistemic uncertainty, i.e., the uncertainty is monotonically decreasing with increasing number of data.}
\label{fig:bootstrap_epistemic}
\end{figure*}

\subsection{Bootstrap Distillation Method Benefits from More Samples}
As discussed in Sec.~\ref{sec:solution}, the Bootstrap Distillation method needs the sampling of multiple bootstrap samples, with each requiring the training of a model using randomly sampled datasets. To understand the impact of varying the number of samples $\texttt{{5, 10, 20, 50, 100}}$, we conduct an empirical investigation on the OOD detection performance of Bootstrap Distillation. The results shown in Fig.~\ref{fig:bootstrap_samples} reveal that Bootstrap Distillation indeed benefits from a larger number of samples, particularly in more complex OOD detection tasks using CIFAR100 as the in-distribution data.
\begin{figure*}[ht]
    \centering
     \includegraphics[width=1.0\textwidth]{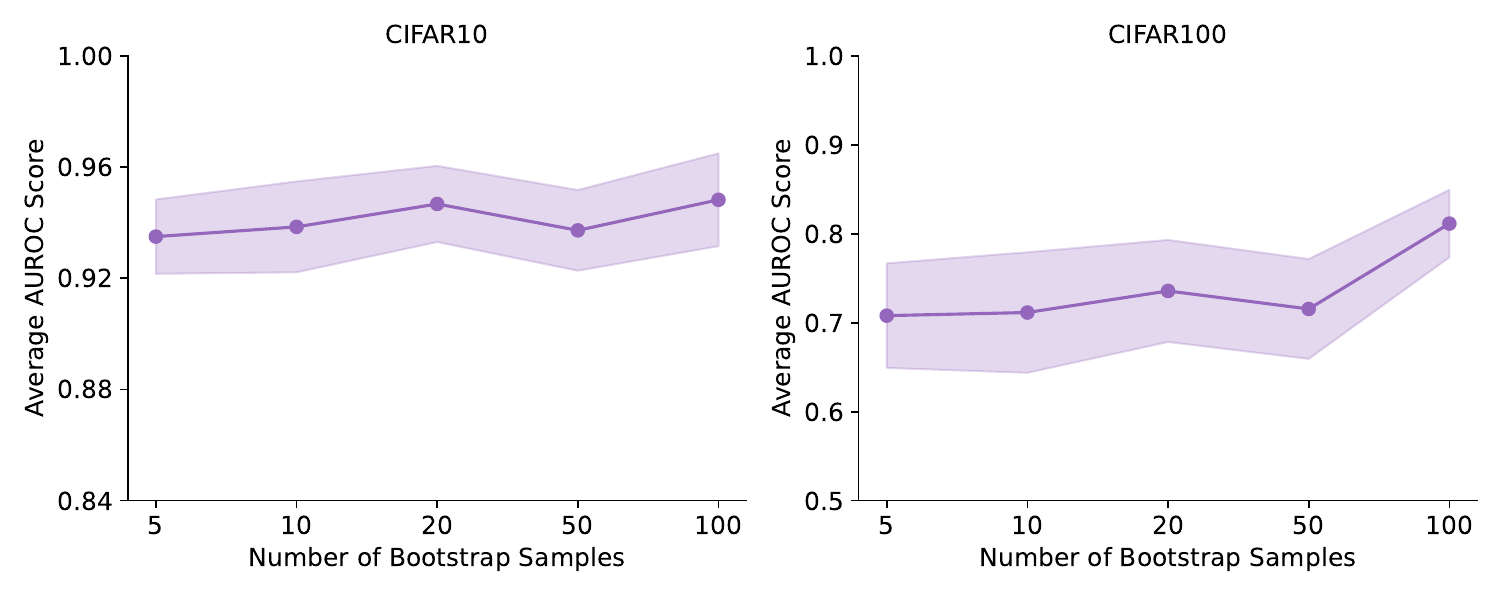}
\caption{\textbf{Bootstrap Distillation's OOD Detection Performance v.s. Number of Bootstrap Samples.} The $x$-axis represents the number of bootstrap samples, and the y-axis represents the Average AUROC score of OOD detection tasks. Bootstrap Distillation's UQ performance will improve by obtaining more bootstrap samples, but more computational cost is the price needs to pay.}
\label{fig:bootstrap_samples}
\end{figure*}

\newpage
\subsection{Detailed Results on UQ Downstream Tasks}
The complete AUROC and AUPR results of OOD data detection are shown in Table~\ref{table:OOD-auroc} and Table~\ref{table:OOD-aupr}, respectively. The complete results of selective classification are shown in Table~\ref{table:selective}.

\begin{table*}[!t]
  \begin{center}
  \scriptsize
  \captionsetup{font=small}
  \caption{\textbf{OOD detection AUROC score.} MI and Dent stand for Mutual Information and Differential Entropy.}
  \begin{tabular}{cr c rrrr}
    \toprule
    \makecell{\textbf{ID Data}} & \textbf{Method} & \textbf{Metric} & \textbf{SVHN}& \textbf{FMNIST} & \textbf{TImageNet} &\textbf{Corrupted} \\
    \midrule
    \textbf{CIFAR10} & RPriorNet & MI & $72.6_{\pm6.9}$ & $79.8_{\pm2.9}$ & $74.0_{\pm1.9}$ & $77.6_{\pm9.8}$  \\
    &  & Dent & $77.6_{\pm5.8}$ & $82.7_{\pm2.1}$ & $77.4_{\pm1.5}$ & $81.3_{\pm7.1}$  \\
    & BM & MI & $97.5_{\pm0.6}$ & $93.6_{\pm0.6}$ & $87.4_{\pm1.2}$ & $98.8_{\pm0.4}$  \\
    &  & Dent & $97.0_{\pm0.8}$ & $93.3_{\pm0.5}$ & $87.7_{\pm1.1}$ & $98.3_{\pm0.4}$  \\
    & PostNet & MI & $94.4_{\pm1.0}$ & $89.8_{\pm1.6}$ & $86.3_{\pm0.3}$ & $96.8_{\pm0.8}$  \\
    &  & Dent & $95.5_{\pm1.0}$ & $90.6_{\pm1.2}$ & $86.8_{\pm0.3}$ & $97.8_{\pm0.6}$  \\
    & NatPN & MI & $17.8_{\pm2.9}$ & $21.0_{\pm2.7}$ & $26.3_{\pm3.2}$ & $14.9_{\pm2.1}$  \\
    &  & Dent & $67.0_{\pm6.7}$ & $60.6_{\pm4.1}$ & $54.6_{\pm2.6}$ & $69.7_{\pm5.8}$  \\
    & EDL & MI & $96.2_{\pm1.1}$ & $90.3_{\pm0.6}$ & $86.3_{\pm0.4}$ & $98.7_{\pm0.8}$  \\
    &  & Dent & $96.5_{\pm0.9}$ & $90.3_{\pm0.6}$ & $86.9_{\pm0.4}$ & $98.8_{\pm0.3}$  \\
    & Fisher-EDL & MI & $91.3_{\pm0.3}$ & $86.9_{\pm0.8}$ & $86.1_{\pm0.6}$ & $95.1_{\pm1.0}$  \\
    &  & Dent & $93.7_{\pm1.3}$ & $88.1_{\pm1.1}$ & $86.6_{\pm0.8}$ & $97.1_{\pm0.7}$  \\
    & END2 & MI & $95.8_{\pm0.4}$ & $92.7_{\pm0.7}$ & $89.0_{\pm0.4}$ & $95.7_{\pm0.5}$  \\
    &  & Dent & $96.9_{\pm0.2}$ & $93.4_{\pm0.7}$ & $87.1_{\pm0.8}$ & $96.9_{\pm0.4}$  \\
    & S2D & MI & $89.1_{\pm2.7}$ & $87.9_{\pm0.6}$ & $83.9_{\pm0.6}$ & $90.3_{\pm3.5}$  \\
    &  & Dent & $93.3_{\pm1.5}$ & $90.0_{\pm0.4}$ & $87.0_{\pm0.4}$ & $93.9_{\pm0.2}$  \\
    \coloredmidrule
    \rowcolor{lightblue}
    & \textbf{Bootstrap Distill} & MI & $97.2_{\pm0.1}$ & $94.4_{\pm0.5}$ & $89.6_{\pm0.3}$ & $98.2_{\pm0.2}$  \\
    \rowcolor{lightblue}
    &  & Dent & $97.8_{\pm0.1}$ & $94.3_{\pm0.6}$ & $87.6_{\pm0.4}$ & $98.4_{\pm0.2}$  \\
       
    \coloredmidrulecw
    \makecell{\textbf{ID Data}} & \textbf{Method} & \textbf{Metric} & \textbf{SVHN}& \textbf{FMNIST} & \textbf{TImageNet} &\textbf{Corrupted} \\
    \midrule
    \textbf{CIFAR100} & RPriorNet & MI & $74.8_{\pm1.8}$ & $73.1_{\pm4.6}$ & $61.6_{\pm2.5}$ & $72.9_{\pm5.3}$  \\
    &  & Dent & $74.3_{\pm1.6}$ & $73.7_{\pm4.5}$ & $63.1_{\pm2.2}$ & $68.9_{\pm6.4}$  \\
    & BM & MI & $71.5_{\pm9.6}$ & $74.6_{\pm6.8}$ & $71.4_{\pm5.5}$ & $64.8_{\pm6.9}$  \\
    &  & Dent & $68.5_{\pm10.2}$ & $74.2_{\pm6.7}$ & $70.2_{\pm6.0}$ & $58.6_{\pm6.8}$  \\
    & PostNet & MI & $75.9_{\pm5.6}$ & $70.8_{\pm2.4}$ & $64.4_{\pm1.6}$ & $57.9_{\pm11.6}$  \\
    &  & Dent & $79.4_{\pm5.0}$ & $74.7_{\pm1.8}$ & $72.0_{\pm0.9}$ & $55.5_{\pm10.6}$  \\
    & NatPN & MI & $56.4_{\pm5.6}$ & $41.6_{\pm4.1}$ & $43.5_{\pm1.5}$ & $53.2_{\pm5.9}$  \\
    &  & Dent & $64.7_{\pm4.4}$ & $47.0_{\pm4.5}$ & $49.6_{\pm0.9}$ & $57.0_{\pm3.5}$  \\
    & EDL & MI & $68.7_{\pm0.4}$ & $74.8_{\pm2.6}$ & $73.7_{\pm0.5}$ & $50.5_{\pm6.5}$  \\
    &  & Dent & $68.2_{\pm3.4}$ & $74.4_{\pm2.5}$ & $73.5_{\pm0.5}$ & $49.9_{\pm6.3}$  \\
    & Fisher-EDL & MI & $71.3_{\pm2.6}$ & $71.0_{\pm3.2}$ & $74.8_{\pm0.6}$ & $54.8_{\pm7.5}$  \\
    &  & Dent & $71.5_{\pm2.9}$ & $69.6_{\pm3.3}$ & $75.4_{\pm0.3}$ & $53.6_{\pm6.3}$  \\
    & END2 & MI & $78.1_{\pm2.9}$ & $79.6_{\pm1.5}$ & $76.6_{\pm0.4}$ & $68.4_{\pm2.6}$  \\
    &  & Dent & $82.0_{\pm2.4}$ & $85.0_{\pm0.6}$ & $83.2_{\pm0.2}$ & $73.4_{\pm1.6}$  \\
    & S2D & MI & $62.1_{\pm2.2}$ & $69.8_{\pm0.9}$ & $71.5_{\pm0.2}$ & $31.6_{\pm1.1}$  \\
    &  & Dent & $50.5_{\pm0.2}$ & $50.4_{\pm0.3}$ & $50.9_{\pm0.3}$ & $50.3_{\pm0.3}$  \\

    \coloredmidrule
    \rowcolor{lightblue}
    & \textbf{Bootstrap Distill} & MI & $77.9_{\pm1.3}$ & $83.3_{\pm1.6}$ & $78.2_{\pm0.5}$ & $53.4_{\pm3.0}$  \\
    \rowcolor{lightblue}
    &  & Dent & $85.6_{\pm1.0}$ & $86.8_{\pm1.2}$ & $84.1_{\pm0.2}$ & $68.1_{\pm3.1}$  \\
    \coloredbottomrule
  \end{tabular}
  \label{table:OOD-auroc}
  \end{center}
  \vspace{-1.5em}
\end{table*}

\begin{table*}[!t]
  \begin{center}
  \scriptsize
  \captionsetup{font=small}
  \caption{\textbf{OOD detection AUPR score.} MI and Dent stand for Mutual Information and Differential Entropy.}
  \begin{tabular}{cr c rrrr}
    \toprule
    \makecell{\textbf{ID Data}} & \textbf{Method} & \textbf{Metric} & \textbf{SVHN}& \textbf{FMNIST} & \textbf{TImageNet} &\textbf{Corrupted} \\
    \midrule
    \textbf{CIFAR10} & RPriorNet & MI & $68.2_{\pm7.5}$ & $75.6_{\pm3.0}$ & $69.3_{\pm2.9}$ & $73.6_{\pm9.6}$  \\
    &  & Dent & $73.5_{\pm6.3}$ & $79.9_{\pm1.9}$ & $72.9_{\pm1.9}$ & $75.4_{\pm8.9}$  \\
    & BM & MI & $96.9_{\pm0.8}$ & $92.3_{\pm0.4}$ & $85.3_{\pm1.3}$ & $98.2_{\pm0.6}$  \\
    &  & Dent & $95.0_{\pm1.7}$ & $90.5_{\pm1.2}$ & $84.2_{\pm1.6}$ & $96.4_{\pm1.2}$  \\
    & PostNet & MI & $90.9_{\pm1.6}$ & $86.6_{\pm2.3}$ & $82.3_{\pm0.6}$ & $93.3_{\pm2.2}$  \\
    &  & Dent & $93.5_{\pm1.2}$ & $88.5_{\pm1.3}$ & $83.7_{\pm0.4}$ & $96.1_{\pm1.2}$  \\
    & NatPN & MI & $43.9_{\pm2.9}$ & $40.4_{\pm2.0}$ & $38.7_{\pm1.2}$ & $44.9_{\pm2.6}$  \\
    &  & Dent & $70.2_{\pm5.4}$ & $65.1_{\pm3.2}$ & $59.2_{\pm1.9}$ & $73.6_{\pm3.9}$  \\
    & EDL & MI & $93.7_{\pm1.9}$ & $87.6_{\pm0.9}$ & $82.6_{\pm0.5}$ & $97.6_{\pm0.6}$  \\
    &  & Dent & $94.4_{\pm1.6}$ & $87.7_{\pm0.8}$ & $83.4_{\pm0.4}$ & $98.0_{\pm0.4}$  \\
    & Fisher-EDL & MI & $82.2_{\pm1.3}$ & $80.5_{\pm1.1}$ & $80.9_{\pm0.6}$ & $89.0_{\pm2.7}$  \\
    &  & Dent & $88.2_{\pm2.7}$ & $84.7_{\pm1.5}$ & $82.5_{\pm1.1}$ & $94.0_{\pm1.7}$  \\
    & END2 & MI & $93.3_{\pm0.2}$ & $90.9_{\pm0.8}$ & $84.3_{\pm0.7}$ & $91.7_{\pm1.0}$  \\
    &  & Dent & $95.1_{\pm0.2}$ & $91.8_{\pm0.7}$ & $83.7_{\pm0.8}$ & $93.8_{\pm0.9}$  \\
    & S2D & MI & $83.0_{\pm3.9}$ & $82.6_{\pm0.9}$ & $77.7_{\pm0.7}$ & $83.0_{\pm5.2}$  \\
    &  & Dent & $89.0_{\pm2.4}$ & $86.4_{\pm0.7}$ & $82.4_{\pm0.4}$ & $87.5_{\pm3.6}$  \\
    \coloredmidrule
    \rowcolor{lightblue}
    & \textbf{Bootstrap Distill} & MI & $94.6_{\pm0.4}$ & $92.5_{\pm0.7}$ & $85.6_{\pm0.5}$ & $95.2_{\pm0.5}$  \\
    \rowcolor{lightblue}
    &  & Dent & $95.7_{\pm0.3}$ & $92.5_{\pm0.8}$ & $84.5_{\pm0.6}$ & $96.0_{\pm0.4}$  \\
       
    \coloredmidrulecw
    \makecell{\textbf{ID Data}} & \textbf{Method} & \textbf{Metric} & \textbf{SVHN}& \textbf{FMNIST} & \textbf{TImageNet} &\textbf{Corrupted} \\
    \midrule
    \textbf{CIFAR100} & RPriorNet & MI & $70.3_{\pm1.3}$ & $69.6_{\pm4.9}$ & $58.7_{\pm2.1}$ & $71.6_{\pm6.2}$  \\
    &  & Dent & $69.1_{\pm1.4}$ & $69.2_{\pm4.6}$ & $59.2_{\pm1.6}$ & $68.5_{\pm6.7}$  \\
    & BM & MI & $67.3_{\pm7.8}$ & $68.9_{\pm5.8}$ & $67.2_{\pm5.5}$ & $59.3_{\pm5.5}$  \\
    &  & Dent & $65.0_{\pm7.5}$ & $68.1_{\pm5.1}$ & $66.4_{\pm5.5}$ & $53.7_{\pm4.3}$  \\
    & PostNet & MI & $69.7_{\pm6.6}$ & $64.4_{\pm2.6}$ & $57.5_{\pm1.4}$ & $59.7_{\pm10.3}$  \\
    &  & Dent & $75.4_{\pm5.5}$ & $69.3_{\pm2.3}$ & $66.2_{\pm1.2}$ & $57.6_{\pm9.5}$  \\
    & NatPN & MI & $54.3_{\pm5.2}$ & $44.1_{\pm2.5}$ & $45.5_{\pm1.1}$ & $50.2_{\pm3.7}$  \\
    &  & Dent & $59.8_{\pm4.7}$ & $47.0_{\pm2.7}$ & $48.9_{\pm0.8}$ & $52.0_{\pm2.3}$  \\
    & EDL & MI & $63.9_{\pm3.8}$ & $69.7_{\pm2.7}$ & $68.9_{\pm0.5}$ & $48.5_{\pm3.8}$  \\
    &  & Dent & $63.1_{\pm3.3}$ & $68.8_{\pm2.6}$ & $68.6_{\pm0.5}$ & $47.8_{\pm3.6}$  \\
    & Fisher-EDL & MI & $64.0_{\pm3.1}$ & $66.6_{\pm3.1}$ & $69.6_{\pm1.3}$ & $53.0_{\pm6.1}$  \\
    &  & Dent & $65.0_{\pm3.2}$ & $64.9_{\pm3.2}$ & $71.0_{\pm0.7}$ & $49.9_{\pm4.1}$  \\
    & END2 & MI & $71.1_{\pm3.6}$ & $70.3_{\pm1.3}$ & $70.5_{\pm0.4}$ & $57.9_{\pm2.3}$  \\
    &  & Dent & $75.1{\pm2.7}$ & $77.0_{\pm0.4}$ & $78.2_{\pm0.3}$ & $63.5_{\pm2.4}$  \\
    & S2D & MI & $58.5_{\pm2.0}$ & $63.9_{\pm0.7}$ & $68.6_{\pm0.3}$ & $38.0_{\pm0.4}$  \\
    &  & Dent & $46.4_{\pm0.4}$ & $45.6_{\pm0.1}$ & $45.8_{\pm0.1}$ & $52.5_{\pm0.7}$  \\
    \coloredmidrule
    \rowcolor{lightblue}
    & \textbf{Bootstrap Distill} & MI & $68.5_{\pm1.6}$ & $73.6_{\pm2.5}$ & $71.9_{\pm0.8}$ & $46.6_{\pm1.4}$  \\
    \rowcolor{lightblue}
    &  & Dent & $77.7_{\pm1.0}$ & $78.6_{\pm2.0}$ & $79.4_{\pm0.4}$ & $55.8_{\pm2.3}$  \\
    \coloredbottomrule
  \end{tabular}
  \label{table:OOD-aupr}
  \end{center}
  \vspace{-1.5em}
\end{table*}

\begin{table*}[!t]
  \begin{center}
  \scriptsize
  \captionsetup{font=small}
  \caption{\textbf{Selective classification Test Accuracy, AUROC, and AUPR score.} Ent and MaxP stand for Entropy and Max Probability, respectively.}
  \begin{tabular}{rc rrr rrr}
    \toprule
    \textbf{Method} & \textbf{Metric} &   \multicolumn{3}{c}{\textbf{CIFAR10}} & \multicolumn{3}{c}{\textbf{CIFAR100}} \\
    \cmidrule(lr){3-5} \cmidrule(lr){6-8} && \textbf{Test Acc} & \textbf{AUROC} & \textbf{AUPR} &  \textbf{Test Acc} & \textbf{AUROC} & \textbf{AUPR} \\
    \midrule
    RPriorNet & Ent  &  $85.4_{\pm1.4}$ & $86.0_{\pm0.4}$ &  $49.7_{\pm2.0}$ &  $57.9_{\pm1.3}$ &  $80.8_{\pm0.5}$ &  $72.2_{\pm0.9}$ \\
    & MaxP  &  $85.4_{\pm1.4}$ & $86.6_{\pm0.3}$ &  $51.3_{\pm1.6}$ &  $57.9_{\pm1.3}$ &  $81.8_{\pm0.3}$ &  $74.0_{\pm0.6}$ \\
    BM & Ent  &  $88.7_{\pm0.1}$ & $90.1_{\pm0.2}$ &  $51.5_{\pm0.7}$ &  $54.3_{\pm1.2}$ &  $81.8_{\pm0.1}$ &  $76.0_{\pm0.7}$ \\
    & MaxP  &  $88.7_{\pm0.1}$ & $90.0_{\pm0.2}$ &  $51.6_{\pm0.7}$ &  $54.3_{\pm1.2}$ &  $82.6_{\pm0.1}$ &  $76.9_{\pm0.7}$ \\
    PostNet & Ent  &  $88.4_{\pm0.1}$ & $89.2_{\pm0.2}$ &  $50.4_{\pm0.6}$ &  $58.0_{\pm0.8}$ &  $81.6_{\pm0.2}$ &  $74.1_{\pm1.0}$ \\
    & MaxP  &  $88.4_{\pm0.1}$ & $89.2_{\pm0.2}$ &  $50.7_{\pm0.8}$ &  $58.0_{\pm0.8}$ &  $82.6_{\pm0.2}$ &  $75.4_{\pm0.5}$ \\
    NatPN & Ent  &  $83.2_{\pm0.6}$ & $86.2_{\pm0.2}$ &  $53.5_{\pm0.5}$ &  $56.0_{\pm0.4}$ &  $80.8_{\pm0.2}$ &  $73.8_{\pm0.5}$ \\
    & MaxP  &  $83.2_{\pm0.6}$ &  $86.4_{\pm0.2}$ &  $54.4_{\pm0.6}$ &  $56.0_{\pm0.4}$&  $82.2_{\pm0.2}$ & $76.2_{\pm0.3}$ \\
    EDL & Ent  &  $89.9_{\pm0.1}$ & $89.7_{\pm0.2}$ &  $49.2_{\pm0.7}$ &  $62.9_{\pm0.3}$ &  $84.5_{\pm0.2}$ &  $74.1_{\pm0.3}$ \\
    & MaxP  &  $89.9_{\pm0.1}$& $89.7_{\pm0.2}$ &  $48.8_{\pm0.7}$ &  $62.9_{\pm0.3}$ &  $84.9_{\pm0.2}$ &  $74.9_{\pm0.3}$ \\
    Fisher-EDL & Ent  &  $88.2_{\pm0.4}$ & $90.4_{\pm0.1}$ &  $53.7_{\pm0.9}$ &  $55.9_{\pm0.8}$ &  $85.8_{\pm0.2}$ &  $79.6_{\pm0.3}$ \\
    & MaxP  &  $88.2_{\pm0.4}$ & $90.4_{\pm0.1}$ &  $53.8_{\pm0.8}$ &  $55.9_{\pm0.8}$ &  $86.1_{\pm0.2}$ &  $80.1_{\pm0.3}$ \\
    END2 & Ent  &  $89.4_{\pm0.1}$ & $89.8_{\pm0.3}$ &  $50.6_{\pm1.0}$ &  $65.0_{\pm0.2}$ &  $83.0_{\pm0.1}$ &  $69.3_{\pm0.2}$ \\
    & MaxP  &  $89.4_{\pm0.1}$ & $89.6_{\pm0.3}$ &  $51.1_{\pm1.0}$ &  $65.0_{\pm0.2}$ &  $85.3_{\pm0.1}$ &  $72.6_{\pm0.2}$ \\
    S2D & Ent  &  $89.1_{\pm0.1}$ & $89.4_{\pm0.1}$ &  $50.5_{\pm0.2}$ &  $61.0_{\pm0.1}$ &  $83.4_{\pm0.1}$ &  $72.7_{\pm0.2}$ \\
    & MaxP  &  $89.1_{\pm0.1}$ & $89.3_{\pm0.1}$ &  $51.0_{\pm0.5}$ &  $61.0_{\pm0.1}$ &  $84.4_{\pm0.1}$ &  $74.2_{\pm0.2}$ \\
    \coloredmidrule
    \rowcolor{lightblue}
    \textbf{Bootstrap Distill} & Ent  &  $89.4_{\pm0.1}$ & $90.3_{\pm0.2}$ &  $51.2_{\pm0.5}$ &  $65.4_{\pm0.1}$ &  $83.2_{\pm0.1}$ &  $69.4_{\pm0.3}$ \\
    \rowcolor{lightblue}
    & MaxP  & $89.4_{\pm0.1}$ & $90.2_{\pm0.2}$ &  $52.1_{\pm0.4}$ &  $65.4_{\pm0.1}$ &  $85.7_{\pm0.1}$ &  $73.2_{\pm0.3}$ \\
    \coloredbottomrule
  \end{tabular}
  \label{table:selective}
  \end{center}
\end{table*}

\clearpage
\section*{NeurIPS Paper Checklist}
\begin{enumerate}

\item {\bf Claims}
    \item[] Question: Do the main claims made in the abstract and introduction accurately reflect the paper's contributions and scope?
    \item[] Answer: \answerYes{} %
    \item[] Justification: \textcolor{blue}{The abstract and introduction clearly state our conclusion and contribution of this paper, which are supported by both theoretical analysis and empirical evidence presented in the subsequent sections.}
    \item[] Guidelines:
    \begin{itemize}
        \item The answer NA means that the abstract and introduction do not include the claims made in the paper.
        \item The abstract and/or introduction should clearly state the claims made, including the contributions made in the paper and important assumptions and limitations. A No or NA answer to this question will not be perceived well by the reviewers. 
        \item The claims made should match theoretical and experimental results, and reflect how much the results can be expected to generalize to other settings. 
        \item It is fine to include aspirational goals as motivation as long as it is clear that these goals are not attained by the paper. 
    \end{itemize}

\item {\bf Limitations}
    \item[] Question: Does the paper discuss the limitations of the work performed by the authors?
    \item[] Answer: \answerYes{} %
    \item[] Justification: \textcolor{blue}{This paper mainly aims to discuss the limitations of existing methods. For the new methods proposed in this paper, i.e., Bootstrap Distillation, we also acknowledge its computational limitation in Section~\ref{sec:results}.}
    \item[] Guidelines:
    \begin{itemize}
        \item The answer NA means that the paper has no limitation while the answer No means that the paper has limitations, but those are not discussed in the paper. 
        \item The authors are encouraged to create a separate "Limitations" section in their paper.
        \item The paper should point out any strong assumptions and how robust the results are to violations of these assumptions (e.g., independence assumptions, noiseless settings, model well-specification, asymptotic approximations only holding locally). The authors should reflect on how these assumptions might be violated in practice and what the implications would be.
        \item The authors should reflect on the scope of the claims made, e.g., if the approach was only tested on a few datasets or with a few runs. In general, empirical results often depend on implicit assumptions, which should be articulated.
        \item The authors should reflect on the factors that influence the performance of the approach. For example, a facial recognition algorithm may perform poorly when image resolution is low or images are taken in low lighting. Or a speech-to-text system might not be used reliably to provide closed captions for online lectures because it fails to handle technical jargon.
        \item The authors should discuss the computational efficiency of the proposed algorithms and how they scale with dataset size.
        \item If applicable, the authors should discuss possible limitations of their approach to address problems of privacy and fairness.
        \item While the authors might fear that complete honesty about limitations might be used by reviewers as grounds for rejection, a worse outcome might be that reviewers discover limitations that aren't acknowledged in the paper. The authors should use their best judgment and recognize that individual actions in favor of transparency play an important role in developing norms that preserve the integrity of the community. Reviewers will be specifically instructed to not penalize honesty concerning limitations.
    \end{itemize}

\item {\bf Theory Assumptions and Proofs}
    \item[] Question: For each theoretical result, does the paper provide the full set of assumptions and a complete (and correct) proof?
    \item[] Answer: \answerYes{} %
    \item[] Justification: \textcolor{blue}{We clearly state the assumptions of our proposed theorems, and the proofs and derivations are included in Appendix.}
    \item[] Guidelines:
    \begin{itemize}
        \item The answer NA means that the paper does not include theoretical results. 
        \item All the theorems, formulas, and proofs in the paper should be numbered and cross-referenced.
        \item All assumptions should be clearly stated or referenced in the statement of any theorems.
        \item The proofs can either appear in the main paper or the supplemental material, but if they appear in the supplemental material, the authors are encouraged to provide a short proof sketch to provide intuition. 
        \item Inversely, any informal proof provided in the core of the paper should be complemented by formal proofs provided in appendix or supplemental material.
        \item Theorems and Lemmas that the proof relies upon should be properly referenced. 
    \end{itemize}

    \item {\bf Experimental Result Reproducibility}
    \item[] Question: Does the paper fully disclose all the information needed to reproduce the main experimental results of the paper to the extent that it affects the main claims and/or conclusions of the paper (regardless of whether the code and data are provided or not)?
    \item[] Answer: \answerYes{} %
    \item[] Justification: \textcolor{blue}{We provide all experiment settings and implementation details in Appendix~\ref{app:sec:setup}, and we further include the code in the supplementary material.}
    \item[] Guidelines:
    \begin{itemize}
        \item The answer NA means that the paper does not include experiments.
        \item If the paper includes experiments, a No answer to this question will not be perceived well by the reviewers: Making the paper reproducible is important, regardless of whether the code and data are provided or not.
        \item If the contribution is a dataset and/or model, the authors should describe the steps taken to make their results reproducible or verifiable. 
        \item Depending on the contribution, reproducibility can be accomplished in various ways. For example, if the contribution is a novel architecture, describing the architecture fully might suffice, or if the contribution is a specific model and empirical evaluation, it may be necessary to either make it possible for others to replicate the model with the same dataset, or provide access to the model. In general. releasing code and data is often one good way to accomplish this, but reproducibility can also be provided via detailed instructions for how to replicate the results, access to a hosted model (e.g., in the case of a large language model), releasing of a model checkpoint, or other means that are appropriate to the research performed.
        \item While NeurIPS does not require releasing code, the conference does require all submissions to provide some reasonable avenue for reproducibility, which may depend on the nature of the contribution. For example
        \begin{enumerate}
            \item If the contribution is primarily a new algorithm, the paper should make it clear how to reproduce that algorithm.
            \item If the contribution is primarily a new model architecture, the paper should describe the architecture clearly and fully.
            \item If the contribution is a new model (e.g., a large language model), then there should either be a way to access this model for reproducing the results or a way to reproduce the model (e.g., with an open-source dataset or instructions for how to construct the dataset).
            \item We recognize that reproducibility may be tricky in some cases, in which case authors are welcome to describe the particular way they provide for reproducibility. In the case of closed-source models, it may be that access to the model is limited in some way (e.g., to registered users), but it should be possible for other researchers to have some path to reproducing or verifying the results.
        \end{enumerate}
    \end{itemize}

\item {\bf Open access to data and code}
    \item[] Question: Does the paper provide open access to the data and code, with sufficient instructions to faithfully reproduce the main experimental results, as described in supplemental material?
    \item[] Answer: \answerYes{} %
    \item[] Justification: \textcolor{blue}{We include the code in the supplementary material. We also include the scripts and instructions to run the code. We also plan to make the code open-source upon acceptance. All the data used in our experiments are those publicly available datasets.}
    \item[] Guidelines:
    \begin{itemize}
        \item The answer NA means that paper does not include experiments requiring code.
        \item Please see the NeurIPS code and data submission guidelines (\url{https://nips.cc/public/guides/CodeSubmissionPolicy}) for more details.
        \item While we encourage the release of code and data, we understand that this might not be possible, so “No” is an acceptable answer. Papers cannot be rejected simply for not including code, unless this is central to the contribution (e.g., for a new open-source benchmark).
        \item The instructions should contain the exact command and environment needed to run to reproduce the results. See the NeurIPS code and data submission guidelines (\url{https://nips.cc/public/guides/CodeSubmissionPolicy}) for more details.
        \item The authors should provide instructions on data access and preparation, including how to access the raw data, preprocessed data, intermediate data, and generated data, etc.
        \item The authors should provide scripts to reproduce all experimental results for the new proposed method and baselines. If only a subset of experiments are reproducible, they should state which ones are omitted from the script and why.
        \item At submission time, to preserve anonymity, the authors should release anonymized versions (if applicable).
        \item Providing as much information as possible in supplemental material (appended to the paper) is recommended, but including URLs to data and code is permitted.
    \end{itemize}

\item {\bf Experimental Setting/Details}
    \item[] Question: Does the paper specify all the training and test details (e.g., data splits, hyperparameters, how they were chosen, type of optimizer, etc.) necessary to understand the results?
    \item[] Answer: \answerYes{} %
    \item[] Justification: \textcolor{blue}{We provide all experiment settings and implementation details in Appendix~\ref{app:sec:setup}.}
    \item[] Guidelines:
    \begin{itemize}
        \item The answer NA means that the paper does not include experiments.
        \item The experimental setting should be presented in the core of the paper to a level of detail that is necessary to appreciate the results and make sense of them.
        \item The full details can be provided either with the code, in appendix, or as supplemental material.
    \end{itemize}

\item {\bf Experiment Statistical Significance}
    \item[] Question: Does the paper report error bars suitably and correctly defined or other appropriate information about the statistical significance of the experiments?
    \item[] Answer: \answerYes{} %
    \item[] Justification: \textcolor{blue}{All the results are averaged over five random trials, with mean and standard error reported.}
    \item[] Guidelines:
    \begin{itemize}
        \item The answer NA means that the paper does not include experiments.
        \item The authors should answer "Yes" if the results are accompanied by error bars, confidence intervals, or statistical significance tests, at least for the experiments that support the main claims of the paper.
        \item The factors of variability that the error bars are capturing should be clearly stated (for example, train/test split, initialization, random drawing of some parameter, or overall run with given experimental conditions).
        \item The method for calculating the error bars should be explained (closed form formula, call to a library function, bootstrap, etc.)
        \item The assumptions made should be given (e.g., Normally distributed errors).
        \item It should be clear whether the error bar is the standard deviation or the standard error of the mean.
        \item It is OK to report 1-sigma error bars, but one should state it. The authors should preferably report a 2-sigma error bar than state that they have a 96\% CI, if the hypothesis of Normality of errors is not verified.
        \item For asymmetric distributions, the authors should be careful not to show in tables or figures symmetric error bars that would yield results that are out of range (e.g. negative error rates).
        \item If error bars are reported in tables or plots, The authors should explain in the text how they were calculated and reference the corresponding figures or tables in the text.
    \end{itemize}

\item {\bf Experiments Compute Resources}
    \item[] Question: For each experiment, does the paper provide sufficient information on the computer resources (type of compute workers, memory, time of execution) needed to reproduce the experiments?
    \item[] Answer: \answerYes{} %
    \item[] Justification: \textcolor{blue}{This is included in Appendix~\ref{app:sec:setup}.}
    \item[] Guidelines:
    \begin{itemize}
        \item The answer NA means that the paper does not include experiments.
        \item The paper should indicate the type of compute workers CPU or GPU, internal cluster, or cloud provider, including relevant memory and storage.
        \item The paper should provide the amount of compute required for each of the individual experimental runs as well as estimate the total compute. 
        \item The paper should disclose whether the full research project required more compute than the experiments reported in the paper (e.g., preliminary or failed experiments that didn't make it into the paper). 
    \end{itemize}
    
\item {\bf Code Of Ethics}
    \item[] Question: Does the research conducted in the paper conform, in every respect, with the NeurIPS Code of Ethics \url{https://neurips.cc/public/EthicsGuidelines}?
    \item[] Answer: \answerYes{} %
    \item[] Justification: \textcolor{blue}{We have carefully reviewed Code of Ethics and our paper conforms with the Code of Ethics.}
    \item[] Guidelines:
    \begin{itemize}
        \item The answer NA means that the authors have not reviewed the NeurIPS Code of Ethics.
        \item If the authors answer No, they should explain the special circumstances that require a deviation from the Code of Ethics.
        \item The authors should make sure to preserve anonymity (e.g., if there is a special consideration due to laws or regulations in their jurisdiction).
    \end{itemize}

\item {\bf Broader Impacts}
    \item[] Question: Does the paper discuss both potential positive societal impacts and negative societal impacts of the work performed?
    \item[] Answer: \answerNA{} %
    \item[] Justification: \textcolor{blue}{The high-level goal of this paper is to better improve the reliability of machine learning models, we do not expect negative societal impacts of our work.}
    \item[] Guidelines: 
    \begin{itemize}
        \item The answer NA means that there is no societal impact of the work performed.
        \item If the authors answer NA or No, they should explain why their work has no societal impact or why the paper does not address societal impact.
        \item Examples of negative societal impacts include potential malicious or unintended uses (e.g., disinformation, generating fake profiles, surveillance), fairness considerations (e.g., deployment of technologies that could make decisions that unfairly impact specific groups), privacy considerations, and security considerations.
        \item The conference expects that many papers will be foundational research and not tied to particular applications, let alone deployments. However, if there is a direct path to any negative applications, the authors should point it out. For example, it is legitimate to point out that an improvement in the quality of generative models could be used to generate deepfakes for disinformation. On the other hand, it is not needed to point out that a generic algorithm for optimizing neural networks could enable people to train models that generate Deepfakes faster.
        \item The authors should consider possible harms that could arise when the technology is being used as intended and functioning correctly, harms that could arise when the technology is being used as intended but gives incorrect results, and harms following from (intentional or unintentional) misuse of the technology.
        \item If there are negative societal impacts, the authors could also discuss possible mitigation strategies (e.g., gated release of models, providing defenses in addition to attacks, mechanisms for monitoring misuse, mechanisms to monitor how a system learns from feedback over time, improving the efficiency and accessibility of ML).
    \end{itemize}
    
\item {\bf Safeguards}
    \item[] Question: Does the paper describe safeguards that have been put in place for responsible release of data or models that have a high risk for misuse (e.g., pretrained language models, image generators, or scraped datasets)?
    \item[] Answer: \answerNA{} %
    \item[] Justification: \textcolor{blue}{This paper uses publicly available, standard image classification datasets with no risk for misuse.}
    \item[] Guidelines:
    \begin{itemize}
        \item The answer NA means that the paper poses no such risks.
        \item Released models that have a high risk for misuse or dual-use should be released with necessary safeguards to allow for controlled use of the model, for example by requiring that users adhere to usage guidelines or restrictions to access the model or implementing safety filters. 
        \item Datasets that have been scraped from the Internet could pose safety risks. The authors should describe how they avoided releasing unsafe images.
        \item We recognize that providing effective safeguards is challenging, and many papers do not require this, but we encourage authors to take this into account and make a best faith effort.
    \end{itemize}

\item {\bf Licenses for existing assets}
    \item[] Question: Are the creators or original owners of assets (e.g., code, data, models), used in the paper, properly credited and are the license and terms of use explicitly mentioned and properly respected?
    \item[] Answer: \answerYes{} %
    \item[] Justification: \textcolor{blue}{The datasets used in this paper are all publicly available datasets, and we have properly cited the relevant papers.}
    \item[] Guidelines:
    \begin{itemize}
        \item The answer NA means that the paper does not use existing assets.
        \item The authors should cite the original paper that produced the code package or dataset.
        \item The authors should state which version of the asset is used and, if possible, include a URL.
        \item The name of the license (e.g., CC-BY 4.0) should be included for each asset.
        \item For scraped data from a particular source (e.g., website), the copyright and terms of service of that source should be provided.
        \item If assets are released, the license, copyright information, and terms of use in the package should be provided. For popular datasets, \url{paperswithcode.com/datasets} has curated licenses for some datasets. Their licensing guide can help determine the license of a dataset.
        \item For existing datasets that are re-packaged, both the original license and the license of the derived asset (if it has changed) should be provided.
        \item If this information is not available online, the authors are encouraged to reach out to the asset's creators.
    \end{itemize}

\item {\bf New Assets}
    \item[] Question: Are new assets introduced in the paper well documented and is the documentation provided alongside the assets?
    \item[] Answer: \answerNA{} %
    \item[] Justification: \textcolor{blue}{This paper does not release new assets.}
    \item[] Guidelines:
    \begin{itemize}
        \item The answer NA means that the paper does not release new assets.
        \item Researchers should communicate the details of the dataset/code/model as part of their submissions via structured templates. This includes details about training, license, limitations, etc. 
        \item The paper should discuss whether and how consent was obtained from people whose asset is used.
        \item At submission time, remember to anonymize your assets (if applicable). You can either create an anonymized URL or include an anonymized zip file.
    \end{itemize}

\item {\bf Crowdsourcing and Research with Human Subjects}
    \item[] Question: For crowdsourcing experiments and research with human subjects, does the paper include the full text of instructions given to participants and screenshots, if applicable, as well as details about compensation (if any)? 
    \item[] Answer: \answerNA{} %
    \item[] Justification: \textcolor{blue}{This paper does not involve crowdsourcing nor research with human subjects.}
    \item[] Guidelines:
    \begin{itemize}
        \item The answer NA means that the paper does not involve crowdsourcing nor research with human subjects.
        \item Including this information in the supplemental material is fine, but if the main contribution of the paper involves human subjects, then as much detail as possible should be included in the main paper. 
        \item According to the NeurIPS Code of Ethics, workers involved in data collection, curation, or other labor should be paid at least the minimum wage in the country of the data collector. 
    \end{itemize}

\item {\bf Institutional Review Board (IRB) Approvals or Equivalent for Research with Human Subjects}
    \item[] Question: Does the paper describe potential risks incurred by study participants, whether such risks were disclosed to the subjects, and whether Institutional Review Board (IRB) approvals (or an equivalent approval/review based on the requirements of your country or institution) were obtained?
    \item[] Answer: \answerNA{} %
    \item[] Justification: \textcolor{blue}{This paper does not involve crowdsourcing nor research with human subjects.}
    \item[] Guidelines: 
    \begin{itemize}
        \item The answer NA means that the paper does not involve crowdsourcing nor research with human subjects.
        \item Depending on the country in which research is conducted, IRB approval (or equivalent) may be required for any human subjects research. If you obtained IRB approval, you should clearly state this in the paper. 
        \item We recognize that the procedures for this may vary significantly between institutions and locations, and we expect authors to adhere to the NeurIPS Code of Ethics and the guidelines for their institution. 
        \item For initial submissions, do not include any information that would break anonymity (if applicable), such as the institution conducting the review.
    \end{itemize}

\end{enumerate}

\end{document}